\documentclass{article} 
\usepackage{iclr2019_conference,times}


\usepackage{hyperref}
\usepackage{url}
\definecolor{darkblue}{rgb}{0,0.0,0.55}
\hypersetup{ 
	colorlinks = true,
	citecolor  = darkblue,
	linkcolor  = darkblue,
	citecolor  = darkblue,
	filecolor  = darkblue,
	urlcolor   = darkblue,
}

\usepackage{rf}

\newcommand{\nfro}[1]{\|{#1}\|_{\rm F}} 
\newcommand{\sigmin}{\sigma_{\rm{min}}}
\newcommand{\sigmax}{\sigma_{\rm{max}}}
\newcommand{\rowsp}{\mathop{\rm row}}
\newcommand{\colsp}{\mathop{\rm col}}
\newcommand{\nulsp}{\mathop{\rm null}}
\newcommand{\lnulsp}{\mathop{\rm leftnull}}
\providecommand{\tr}{\mathop{\rm tr}}
\providecommand{\rank}{\mathop{\rm rank}}

\newcommand{\matent}[3]{[{#1}]_{{#2},{#3}}}
\newcommand{\matcol}[2]{[{#1}]_{\cdot,{#2}}}
\newcommand{\veccomp}[2]{[{#1}]_{#2}}
\newcommand{\dotprod}[2]{\langle{#1},{#2}\rangle}
\newcommand\numberthis{\addtocounter{equation}{1}\tag{\theequation}}
\newcommand{\gradlzero}{\nabla \ell_0}
\newcommand{\ones}[1]{\mathbf{1}_{#1}}
\newcommand{\zeros}[1]{\mathbf{0}_{#1}}
\newcommand{\bdindexset}{\mc J}
\newcommand{\relulikeactfun}{\bar h_{s_+,s_-}}
\newcommand{\tuple}{(W_j)_{j=1}^{H+1}}

\usepackage{accents}
\usepackage{enumitem}

\title{Small nonlinearities in activation functions create bad local minima in neural networks}



\author{Chulhee Yun, Suvrit Sra \& Ali Jadbabaie \\
Massachusetts Institute of Technology\\
Cambridge, MA 02139, USA\\
\texttt{\{chulheey,suvrit,jadbabai\}@mit.edu}
}

%

\iclrfinalcopy 
\begin{document}

\maketitle

\begin{abstract}
We investigate the loss surface of neural networks. We prove that even for one-hidden-layer networks with ``slightest'' nonlinearity, the empirical risks have spurious local minima in most cases. Our results thus indicate that in general ``\emph{no spurious local minima}'' is a property limited to deep linear networks, and insights obtained from linear networks may not be robust. Specifically, for ReLU(-like) networks we constructively prove that for \emph{almost all} practical datasets there exist infinitely many local minima. We also present a counterexample for more general activations (sigmoid, tanh, arctan, ReLU, etc.), for which there exists a bad local minimum. Our results make the least restrictive assumptions relative to existing results on spurious local optima in neural networks. We complete our discussion by presenting a comprehensive characterization of global optimality for deep linear networks, which unifies other results on this topic.
\end{abstract}

\vspace*{-6pt}
\section{Introduction}
\vspace*{-6pt}
Neural network training reduces to solving nonconvex empirical risk minimization problems, a task that is in general intractable. But success stories of deep learning suggest that local minima of the empirical risk could be close to global minima. \citet{choromanska2015loss} use spherical spin-glass models from statistical physics to justify how the size of neural networks may result in local minima that are close to global. However, due to the complexities introduced by nonlinearity, a rigorous understanding of optimality in deep neural networks remains elusive.

Initial steps towards understanding optimality have focused on \emph{deep linear} networks. This area has seen substantial recent progress. In deep linear networks there is no nonlinear activation; the output is simply a multilinear function of the input. \citet{baldi1989neural} prove that some shallow  networks have no spurious local minima, and \citet{kawaguchi2016deep} extends this result to squared error deep linear networks, showing that they only have global minima and saddle points. Several other works on linear nets have also appeared~\citep{lu2017depth,freeman2017topology,yun2018global,zhou2018critical,laurent2017multilinear,laurent2018deep}.


The theory of nonlinear neural networks (which is the actual setting of interest), however, is still in its infancy. There have been attempts to extend the ``local minima are global'' property from linear to nonlinear networks, but recent results suggest that this property does not usually hold~\citep{zhou2018critical}. Although not unexpected, rigorously proving such results turns out to be non-trivial, forcing several authors (e.g., \citet{safran2017spurious,du2017gradient,wu2018no}) to make somewhat unrealistic assumptions (realizability and Gaussianity) on data.


In contrast, we prove existence of spurious local minima under the least restrictive (to our knowledge) assumptions. Since seemingly subtle changes to assumptions can greatly influence the analysis as well as the applicability of known results, let us first summarize what is known; this will also help provide a better intuitive perspective on our results (as the technical details are somewhat involved).

\vspace*{-6pt}
\subsection{What is known so far?}
\vspace*{-6pt}
There is a large and rapidly expanding literature of optimization of neural networks. Some works focus on the loss surface \citep{baldi1989neural, yu1995local, kawaguchi2016deep, swirszcz2016local, soudry2016no, xie2016diverse, nguyen2017loss, nguyen2017losscnn, safran2017spurious, laurent2017multilinear, yun2018global, zhou2018critical,  wu2018no, liang2018adding, liang2018understanding, shamir2018resnets}, while others study the convergence of gradient-based methods for optimizing this loss \citep{tian2017analytical, brutzkus2017globally, zhong2017recovery, soltanolkotabi2017learning, li2017convergence, du2017gradient, zhang2018learning, brutzkus2018sgd, wang2018learning, li2018learning, du2018gradientB, du2018gradientA, allen2018convergence, zou2018stochastic, zhou2019sgd}. In particular, our focus is on the loss surface itself, independent of any algorithmic concerns; this is reflected in the works summarized below.

For ReLU networks, the works~\citep{swirszcz2016local,zhou2018critical} provide counterexample datasets that lead to spurious local minima, dashing hopes of ``local implies global'' properties. However, these works fail to provide statements about generic datasets, and one can argue that their setups are limited to isolated pathological examples. In comparison, our Theorem~\ref{thm:piecelin} shows existence of spurious local minima for \emph{almost all} datasets, a much more general result.
\citet{zhou2018critical} also give characterization of critical points of shallow ReLU networks, but with more than one hidden node the characterization provided is limited to certain regions.

There are also results that study population risk of shallow ReLU networks under an assumption that input data is i.i.d.\ Gaussian distributed \citep{safran2017spurious, wu2018no, du2017gradient}. Moreover, these works also assume \emph{realizability}, i.e., the output data is generated from a neural network with the same architecture as the model one trains, with unknown true parameters. These assumptions enable one to compute the population risk in a closed form, and ensure that one can always achieve zero loss at global minima. The authors of \citet{safran2017spurious, wu2018no} study the population risk function of the form $\E_x [(\sum_{i=1}^k \text{ReLU}(w_i^T x)-\text{ReLU}(v_i^Tx))^2]$, where the true parameters $v_i$'s are orthogonal unit vectors. Through extensive experiments and computer-assisted local minimality checks, \citet{safran2017spurious} show existence of local minima for $k \geq 6$. However, this result is empirical and does not have constructive proofs. \citet{wu2018no} show that with $k=2$, there is no bad local minima on the manifold $\ltwo{w_1} = \ltwo{w_2} = 1$. \citet{du2017gradient} study population risk of one-hidden-layer CNN. They show that there can be a spurious local minimum, but gradient descent converges to the global minimum with probability at least 1/4.

Our paper focuses on empirical risk instead of population risk, and \emph{does not}  assume either Gaussianity or realizability. Theorem~\ref{thm:piecelin}~1's assumption on the dataset is that it is \emph{not linearly fittable}\footnote{That is, given input data matrices $X$ and $Y$, there is no matrix $R$ such that $Y = RX$.}, which is vastly more general and realistic than assuming that input data is Gaussian or that the output is generated from an unknown neural network. Our results also show that \citet{wu2018no} fails to extend to empirical risk and non-unit parameter vectors (see the discussion after Theorem~\ref{thm:othernonlin}).

\citet{liang2018understanding} showed that under assumptions on the loss function, data distribution, network structure, and activation function, all local minima of the empirical loss have zero classification error in binary classification tasks. The result relies on stringent assumptions, and it is not directly comparable to ours because both ``the local minimum has nonzero classification error'' and ``the local minima is spurious'' do not imply one another. 
\citet{liang2018adding} proved that adding a parallel network with one exponential hidden node can eliminate all bad local minima. The result relies on the special parallel structure, whereas we analyze standard fully connected network architecture.

\citet{laurent2017multilinear} studies one-hidden-layer networks with hinge loss for classification. Under linear separability, the authors prove that Leaky-ReLU networks don't have bad local minima, while ReLU networks do. Our focus is on regression, and we only make mild assumptions on data.

For deep linear networks, the most relevant result to ours is~\citet{laurent2018deep}. When all hidden layers are wider than the input or  output layers, \citet{laurent2018deep} prove that any local minimum of a deep linear network under differentiable convex loss is global.\footnote{Although their result overlaps with a subset of Theorem~\ref{thm:linear}, our theorem was obtained independently.} They prove this by showing a statement about relationship between linear vs.\ multilinear parametrization. Our result in Theorem~\ref{thm:linear} is \emph{strictly} more general that their results, and presents a comprehensive characterization.

A different body of literature \citep{yu1995local, soudry2016no, xie2016diverse, nguyen2017loss, nguyen2017losscnn} 
considers sufficient conditions for global optimality in nonlinear networks. These results make certain architectural assumptions (and some technical restrictions) that may not usually apply to realistic networks.
There are also other works on global optimality conditions for specially designed architectures \citep{haeffele2017global, feizi2017porcupine}.

\vspace*{-6pt}
\subsection{Contributions and Summary of Results}
\vspace*{-6pt}
We summarize our key contributions more precisely below. Our work encompasses results for both nonlinear and linear neural networks. First, we study whether the ``local minima are global'' property holds for nonlinear networks. Unfortunately, our results here are negative. 
Specifically, we prove
\begin{list}{\small{$\blacktriangleright$}}{\leftmargin=1em}
  \setlength{\itemsep}{1pt}
  \vspace*{-6pt}
\item For piecewise linear and nonnegative homogeneous activation functions (e.g., ReLU), we prove in Theorem~\ref{thm:piecelin} that if linear models cannot perfectly fit the data, one can \emph{construct} infinitely many local minima that are not global. In practice, most datasets are not linearly fittable, hence this result gives a constructive proof of spurious local minima for generic datasets. In contrast, several existing results either provide only one counterexample~\citep{swirszcz2016local,zhou2018critical}, or make restrictive assumptions of realizability \citep{safran2017spurious, du2017gradient} or linear separability \citep{laurent2017multilinear}. This result is presented in Section~\ref{sec:piecelin}.
  
\item In Theorem~\ref{thm:othernonlin} we tackle more general nonlinear activation functions, and provide a simple architecture (with squared loss) and dataset, for which there exists a local minimum inferior to the global minimum for a realizable dataset. Our analysis applies to a wide range of activations, including sigmoid, tanh, arctan, ELU \citep{clevert2015fast}, SELU \citep{klambauer2017self}, and ReLU.
Considering that realizability of data simplifies the analysis and ensures zero loss at global optima, our counterexample that is realizable and yet has a spurious local minimum is surprising, suggesting that the situation is likely worse for non-realizable data. See Section~\ref{sec:othernonlin} for details.
\end{list}

We complement our negative results by presenting the following positive result on linear networks:
\begin{list}{\small{$\blacktriangleright$}}{\leftmargin=1em}
  \vspace*{-6pt}
\item Assume that the hidden layers are as wide as either the input or the output, and that the empirical risk $\ell(\tuple)$ equals $\ell_0(W_{H+1} W_H \cdots W_1)$, where $\ell_0$ is a differentiable loss function and $W_i$ is the weight matrix for layer $i$.
Theorem~\ref{thm:linear} shows if $(\hat W_j)_{j=1}^{H+1}$ is a critical point of $\ell$, then its type of stationarity (local min/max, or saddle) is closely related to the behavior of $\ell_0$ evaluated at the product $\hat W_{H+1} \cdots \hat W_1$. If we additionally assume that any critical point of $\ell_0$ is a global minimum, Corollary~\ref{cor:linearcvx} shows that the empirical risk $\ell$ only has global minima and saddles, and provides a simple condition to distinguish between them. To the best of our knowledge, this is the most general result on deep linear networks and it subsumes several previous results, e.g.,~\citep{kawaguchi2016deep, yun2018global,zhou2018critical,laurent2018deep}.
  This result is in Section~\ref{sec:linear}.
\end{list}

\vspace*{-6pt}
\paragraph{Notation.} For an integer $a \ge 1$, $[a]$ denotes the set of integers from $1$ to $a$ (inclusive). For a vector $v$, we use $\veccomp{v}{i}$ to denote its $i$-th component, while $\veccomp{v}{[i]}$ denotes a vector comprised of the first $i$ components of $v$.  
Let $\ones{(\cdot)}$ ($\zeros{(\cdot)}$) be the all ones (zeros) column vector or matrix with size $(\cdot)$.


\vspace*{-6pt}
\section{``ReLU-like'' networks: bad local minima exist for most data}
\vspace*{-5pt}
\label{sec:piecelin}
We study below whether nonlinear neural networks provably have spurious local minima. We show in~\S\ref{sec:piecelin} and \S\ref{sec:othernonlin} that even for extremely simple nonlinear networks, one encounters spurious local minima. We first consider ReLU and ReLU-like networks. Here, we prove that as long as linear models cannot perfectly fit the data, there exists a local minimum strictly inferior to the global one. Using nonnegative homogeneity, we can scale the parameters to get infinitely many local minima.

Consider a training dataset that consists of $m$ data points. The inputs and the outputs are of dimension $d_x$ and $d_y$, respectively. We aggregate these items, and write $X \in \reals^{d_x \times m}$ as the data matrix and $Y \in \reals^{d_y \times m}$ as the label matrix. Consider the 1-hidden-layer neural network 
$\hat Y = W_2 h (W_1 X + b_1 \ones{m}^T) + b_2 \ones{m}^T$,
where $h$ is a nonlinear activation function,
$W_2 \in \reals^{d_y \times d_1}$, $b_2 \in \reals^{d_y}$, $W_1 \in \reals^{d_1 \times d_x}$, and $b_1 \in \reals^{d_1}$.
We analyze the empirical risk with squared loss
\begin{align*}
\ell(W_1,W_2,b_1,b_2)
\!=\! \half \nfro{W_2 h(W_1 X \! + \! b_1 \ones{m}^T) \!+\!  b_2 \ones{m}^T \!-\! Y}^2. 
\end{align*}
Next, define a class of piecewise linear nonnegative homogeneous functions
\begin{equation}
\label{eq:3}
\relulikeactfun(x) = \max\{s_+ x,0\} + \min\{s_- x, 0\},
\end{equation}
where $s_+ > 0, s_- \geq 0$ and $s_+ \neq s_-$. 
Note that ReLU and Leaky-ReLU are members of this class.

\vspace*{-6pt}
\subsection{Main results and discussion}
\vspace*{-4pt}
We use the shorthand $\tilde X \defeq \begin{bmatrix} X^T & \ones{m} \end{bmatrix}^T \in \reals^{(d_x + 1) \times m}$. 
The main result of this section, Theorem~\ref{thm:piecelin}, considers the case where linear models cannot fit $Y$, i.e., $Y \neq R\tilde X$ for all matrix $R$. 
With ReLU-like activation~\eqref{eq:3} and a few mild assumptions, Theorem~\ref{thm:piecelin} shows that there exist spurious local minima.

\begin{theorem}
  \label{thm:piecelin}
  Suppose that the following conditions hold:
  \vspace*{-4pt}
  \begin{enumerate}[label={\small(C\ref{thm:piecelin}.\arabic*)}, leftmargin=30pt]
    \setlength{\itemsep}{0pt}
  \item Output dimension is $d_y = 1$, and linear models $R \tilde X$ cannot perfectly fit $Y$.
  \item All the data points $x_i$'s are distinct.
  \item The activation function $h$ is $\relulikeactfun$.
  \item The hidden layer has at least width 2: $d_1 \geq 2$. 
  \end{enumerate}
  Then, there is a spurious local minimum whose risk is the same as linear least squares model.  Moreover, due to nonnegative homogeneity of $\relulikeactfun$, there are infinitely many such local minima.
\end{theorem}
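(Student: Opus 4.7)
Let $(\bar A, \bar c) \in \reals^{1\times d_x} \times \reals$ be any minimizer of the linear least-squares problem $\min_{A, c} \half\nfro{AX + c\ones{m}^T - Y}^2$. I would pick $W_1^* \in \reals^{d_1 \times d_x}$ and $b_1^* \in \reals^{d_1}$ such that the pre-activation matrix $W_1^* X + b_1^* \ones{m}^T$ has all entries \emph{strictly positive} (e.g.\ take the first row of $W_1^*$ to be a small multiple of $\bar A$, the remaining rows arbitrary, and each entry of $b_1^*$ large enough to dominate). Using the freedom from $d_1 \geq 2$, choose $W_2^* \in \reals^{1\times d_1}$ and $b_2^* \in \reals$ solving the underdetermined system $s_+ W_2^* W_1^* = \bar A$ and $s_+ W_2^* b_1^* + b_2^* = \bar c$.

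\textbf{Step 2: local minimality.} On an open neighborhood $U$ of this construction the pre-activations remain strictly positive, so $\relulikeactfun$ acts as scalar multiplication by $s_+$, and the network reduces to $(W_2, W_1, b_1, b_2) \mapsto s_+ W_2 W_1 X + (s_+ W_2 b_1 + b_2)\ones{m}^T$. Thus on $U$ the risk depends only on the derived quantities $A := s_+ W_2 W_1$ and $c := s_+ W_2 b_1 + b_2$, via the convex quadratic $\half \nfro{AX + c\ones{m}^T - Y}^2$, which by construction is minimized at $(\bar A, \bar c)$. Hence every point of $U$ has risk at least the LLS risk, certifying a local minimum whose risk equals that of the LLS model.

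\textbf{Step 3: strict suboptimality (main obstacle).} To show this local minimum is not global I would construct a competing parameter choice with strictly smaller risk. Let $r := Y - \bar A X - \bar c \ones{m}^T \in \reals^{1\times m}$. By (C\ref{thm:piecelin}.1), $r \neq 0$; by LLS optimality, $r$ is orthogonal to the row space of $\tilde X$. Since the $x_i$ are distinct (C\ref{thm:piecelin}.2), I can pick $w \in \reals^{d_x}$ with all $w^T x_i$ distinct; reorder so $w^T x_1 < \cdots < w^T x_m$. For each $k \in [m-1]$ choose $\beta_k$ strictly between $-w^T x_k$ and $-w^T x_{k+1}$ and define the row vector $\phi_k$ by $[\phi_k]_i = \relulikeactfun(w^T x_i + \beta_k)$. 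After subtracting the linear piece $s_-(X^T w + \beta_k \ones{m})^T$, which lies in the row span of $\tilde X$, one is left with a vector $\psi_k$ that vanishes on $\{1,\dots,k\}$ and equals $(s_+-s_-)(w^T x_i + \beta_k) \neq 0$ on $\{k+1,\dots,m\}$. A downward triangular argument shows $\{\psi_k\}_{k=1}^{m-1}$ spans $\{e_2,\ldots,e_m\}$ and therefore $\{\psi_1,\ldots,\psi_{m-1}\}$ together with the rows of $\tilde X$ span all of $\reals^{1\times m}$; so some $k$ has $\langle r, \psi_k\rangle = \langle r, \phi_k\rangle \neq 0$. Now dedicate one hidden unit to computing $\relulikeactfun(w^T x + \beta_k)$ with outgoing weight $\varepsilon\, \text{sign}\langle r, \phi_k\rangle$ and use another hidden unit (plus $b_2$) as in Step~1 to reproduce the LLS affine model; the network output is $\bar A X + \bar c\ones{m}^T + \varepsilon\, \text{sign}(\cdot)\phi_k$ and the risk becomes $\half\nfro{r}^2 - \varepsilon|\langle r, \phi_k\rangle| + O(\varepsilon^2)$, which is strictly less than the LLS risk for small $\varepsilon>0$. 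Hence the local minimum from Step~2 is spurious.

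\textbf{Step 4: infinitely many.} Nonnegative homogeneity $\relulikeactfun(\alpha z) = \alpha \relulikeactfun(z)$ for $\alpha > 0$ implies that the substitution $(W_1^*, b_1^*, W_2^*) \mapsto (\alpha W_1^*, \alpha b_1^*, W_2^*/\alpha)$ leaves the network function, and therefore the risk and the sign pattern of the pre-activations, unchanged. Each $\alpha \in (0,\infty)$ thus yields a distinct spurious local minimum with identical loss. The delicate part of the proof is Step~3: one must argue that piecewise linearity with $s_+ \neq s_-$ genuinely enlarges the hypothesis class in a direction correlated with the residual, and the triangular structure of $\psi_k$ modulo the linear span of $\tilde X$ is what makes this work for both $s_-=0$ and $s_->0$.
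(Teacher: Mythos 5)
Your proposal is correct, and Steps 1, 2 and 4 coincide with the paper's argument (the paper zeroes out all but one hidden row, but the local-minimality certificate is the same: with all pre-activations positive the risk on a neighborhood is the convex LLS quadratic evaluated at the induced affine map, hence bounded below by the LLS optimum). Step 3, however, is a genuinely different route. The paper sorts the data by the LLS predictions $\bar y_i$, defines the index set $\bdindexset$ of "splittable" thresholds, and must handle the degenerate case $\bdindexset=\emptyset$ (ties among the $\bar y_i$) by an elaborate perturbation of $\bar W$ along a scaled data vector, supported by two auxiliary lemmas; its better point uses two units with weight rows $\pm\veccomp{\bar W}{[d_x]}$ and pushes each output by $\pm\frac{s_+-s_-}{s_++s_-}\gamma$ depending on which side of the threshold it falls. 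You instead sort by a generic projection $w^Tx_i$ (using (C1.2) only to guarantee a tie-free ordering exists), form the $m-1$ threshold features $\phi_k$, and observe that modulo $\rowsp(\tilde X)$ these are upper-triangular with nonzero "diagonal" entries $(s_+-s_-)(w^Tx_{k+1}+\beta_k)$, so together with $\rowsp(\tilde X)$ they span $\reals^m$; since the residual $r\neq 0$ is orthogonal to $\rowsp(\tilde X)$, some $\phi_k$ correlates with it, and a first-order step in that unit's outgoing weight beats the LLS risk. This eliminates the paper's case analysis entirely and isolates the real mechanism (the threshold features genuinely enlarge the span when $s_+\neq s_-$); the paper's version, in exchange, produces a fully explicit competing point in terms of $\bar W$ alone. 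Both constructions consume exactly the two hidden units guaranteed by (C1.4). The only cosmetic gaps are that you should state explicitly that a tie-breaking $w$ exists because the finitely many hyperplanes $\{w: w^T(x_i-x_j)=0\}$, $i\neq j$, cannot cover $\reals^{d_x}$, and that the triangular claim is about spanning $\mathrm{span}\{e_2,\dots,e_m\}$, with $e_1$ recovered from the all-ones row of $\tilde X$; neither affects correctness.
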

Noticing that most real world datasets cannot be perfectly fit with linear models,
Theorem~\ref{thm:piecelin} shows that when we use the activation $\relulikeactfun$, 
the empirical risk has bad local minima for \emph{almost all} datasets that one may encounter in practice. Although it is not very surprising that neural networks have spurious local minima, proving this rigorously is non-trivial. We provide a constructive and deterministic proof for this problem that holds for general datasets, which is in contrast to experimental results of \citet{safran2017spurious}. 
We emphasize that Theorem~\ref{thm:piecelin} also holds even for ``slightest'' nonlinearities, e.g., when $s_+ = 1+\epsilon$ and $s_- = 1$ where $\epsilon > 0$ is small. This suggests that the ``local min is global'' property is limited to the simplified setting of \emph{linear} neural networks.

Existing results on squared error loss either provide one counterexample \citep{swirszcz2016local, zhou2018critical}, or assume realizability and Gaussian input \citep{safran2017spurious, du2017gradient}.
Realizability is an assumption that the output is generated by a network with unknown parameters.
In real datasets, neither input is Gaussian nor output is generated by neural networks; in contrast, our result holds for most realistic situations, and hence delivers useful insight.

There are several results proving sufficient conditions for global optimality of nonlinear neural networks~\citep{soudry2016no, xie2016diverse, nguyen2017loss}. But they rely on assumptions that the network width scales with the number of data points. For instance, applying Theorem 3.4 of \citet{nguyen2017loss} to our network proves that if $\tilde X$ has linearly independent columns and other assumptions hold, then any critical point with $W_2 \neq 0$ is a global minimum.
However, linearly independent columns already imply $\rowsp (\tilde X) = \reals^{m}$, 
so even linear models $R \tilde X$ can fit any $Y$; i.e., there is less merit in using a complex model to fit $Y$. Theorem~\ref{thm:piecelin} does not make any structural assumption other than $d_1 \geq 2$, and addresses the case where it is \emph{impossible} to fit $Y$ with linear models, which is much more realistic. 

It is worth comparing our result with \citet{laurent2017multilinear}, who use hinge loss based classification and assume linear separability to prove ``no spurious local minima'' for Leaky-ReLU networks. Their result does not contradict our theorem because the losses are different and we do not assume linear separability. 

One might wonder if our theorem holds even with $d_1 \geq m$. 
\citet{venturi2018neural} showed that one-hidden-layer neural networks with $d_1 \geq m$ doesn't have spurious valleys, hence there is no \emph{strict} spurious local minima; however, due to nonnegative homogeneity of $\relulikeactfun$ we only have non-strict local minima. 
Based on \citet{bengio2006convex}, one might claim that with wide enough hidden layer and random $W_1$ and $b_1$, one can fit any $Y$; however, this is not the case, by our assumption that linear models $R \tilde X$ cannot fit $Y$. 
Note that for any $d_1$, there is a non-trivial region (measure $> 0$) in the parameter space where $W_1 X + b_1 \ones{m}^T > \zeros{}$ (entry-wise). 
In this region, the output of neural network $\hat Y$ is still a linear combination of rows of $\tilde X$, so $\hat Y$ cannot fit $Y$; in fact, it can only do as well as linear models. We will see in the Step~1 of Section~\ref{sec:thmpiecelinproof} that the bad local minimum that we construct ``kills'' $d_1 - 1$ neurons; however, killing many neurons is not a necessity, and it is just to simply the exposition. In fact, any local minimum in the region $W_1 X + b_1 \ones{m}^T > \zeros{}$ is a spurious local minimum.

\vspace*{-5pt}
\subsection{Analysis of Theorem~\ref{thm:piecelin}}
\label{sec:thmpiecelinproof}
\vspace*{-4pt}
The proof of the theorem is split into two steps. First, we prove that there exist local minima $(\hat W_j, \hat b_j)_{j=1}^2$ whose risk value is the same as the linear least squares solution, and that there are infinitely many such minima. Second, we will construct a tuple of parameters $(\tilde W_j, \tilde b_j)_{j=1}^2$ that has strictly smaller empirical risk than $(\hat W_j, \hat b_j)_{j=1}^2$.

\textbf{Step 1: A local minimum as good as the linear solution.}
The main idea here is to exploit the weights from the linear least squares solution, and to tune the parameters so that all inputs to hidden nodes become positive. Doing so makes the hidden nodes ``locally linear,'' so that the constructed $(\hat W_j, \hat b_j)_{j=1}^2$ that
produce linear least squares estimates at the output become locally optimal.

Recall that
$\tilde X = \begin{bmatrix} X^T & \ones{m} \end{bmatrix}^T \in \reals^{(d_x + 1) \times m}$, and 
define a linear least squares loss $\ell_0 (R) \defeq \half \nfro{R \tilde X - Y}^2$ that is minimized at $\bar W$, so that $\nabla\ell_0(\bar W) = (\bar W \tilde X - Y) \tilde X^T = 0$.
Since $d_y = 1$, the solution $\bar W \in \reals^{d_y \times (d_x+1)}$ is a row vector.
For all $i \in [m]$, let $\bar y_i = \bar W \begin{bmatrix} x_i^T & 1\end{bmatrix}^T$ be the output of the linear least squares model, and similarly $\bar Y = \bar W \tilde X$.

Let $\eta \defeq \min \left \{ -1, 2 \min_i \bar y_i \right \}$, a negative constant making $\bar y_i - \eta > 0$ for all $i$.
Define parameters
\begin{align*}
\hat W_1 = \alpha
\begin{bmatrix}
\veccomp{\bar W}{[d_x]} \\
\zeros{(d_1-1) \times d_x}
\end{bmatrix}
,~
\hat b_1 = \alpha
\begin{bmatrix}
\veccomp{\bar W}{d_x+1}-\eta\\
-\eta \ones{d_1-1}
\end{bmatrix}
,~
\hat W_2 = \begin{bmatrix} \frac{1}{\alpha s_+} & \zeros{d_1-1}^T \end{bmatrix}
, ~
\hat b_2 = \eta,
\end{align*}
where $\alpha > 0$ is any arbitrary fixed positive constant, $\veccomp{\bar W}{[d_x]}$ gives the first $d_x$ components of $\bar W$, and $\veccomp{\bar W}{d_x+1}$ the last component.
Since $\bar y_i = \veccomp{\bar W}{[d_x]} x_i + \veccomp{\bar W}{d_x+1}$, for any $i$, $\hat W_1 x_i + \hat b_1 > \zeros{d_1}$ (component-wise), given
our choice of $\eta$. Thus, all hidden node inputs are positive. Moreover,
$\hat Y 
= \tfrac{1}{\alpha s_+} s_+ (\alpha \bar Y - \alpha \eta \ones{m}^T ) + \eta \ones{m}^T = \bar Y$,
so that the loss $\ell((\hat W_j, \hat b_j)_{j=1}^2) = \half \nfro{\bar Y - Y}^2 = \ell_0(\bar W)$. 

So far, we checked that $(\hat W_j, \hat b_j)_{j=1}^2$ has the same empirical risk as a linear least squares solution. It now remains to show that this point is indeed a local minimum of $\ell$. 
To that end, we consider the perturbed parameters $(\hat W_j+\Delta_j, \hat b_j+\delta_j)_{j=1}^2$, and check their risk is always larger. 
A useful point is that since $\bar W$ is a minimum of $\ell_0(R) = \half \nfro{R \tilde X - Y}^2$, we have
\begin{equation}
\label{eq:thm2identity}
(\bar W \tilde X - Y) \tilde X^T = (\bar Y - Y) \begin{bmatrix} X^T & \ones{m} \end{bmatrix}= 0,
\end{equation}
so $(\bar Y - Y) X^T = 0$ and $(\bar Y - Y) \ones{m} = 0$. 
For small enough perturbations,
$(\hat W_1 + \Delta_1) x_i + (\hat b_1 + \delta_1) > 0$ still holds for all $i$.
So, we can observe that
\begin{align*}
\ell((\hat W_j+\Delta_j, \hat b_j+\delta_j)_{j=1}^2)
\!= \half \nfro{\bar Y - Y 
	+ \tilde \Delta X
	+ \tilde \delta \ones{m}^T}^2
\!= \half \nfro{\bar Y - Y}^2 + \half \nfro {\tilde \Delta X + \tilde \delta \ones{m}^T}^2,\numberthis \label{eq:thm2step1}
\end{align*}
where $\tilde \Delta$ and $\tilde \delta$ are
$\tilde \Delta \defeq s_+ (\hat W_2  \Delta_1 + \Delta_2 \hat W_1 + \Delta_2 \Delta_1)$ and 
$\tilde \delta \defeq s_+ (\hat W_2 \delta_1 + \Delta_2 \hat b_1 + \Delta_2 \delta_1) + \delta_2$;
they are aggregated perturbation terms.
We used \eqref{eq:thm2identity} to obtain the last equality of \eqref{eq:thm2step1}.
Thus, 
$\ell((\hat W_j+\Delta_j, \hat b_j+\delta_j)_{j=1}^2) \geq \ell( (\hat W_j, \hat b_j)_{j=1}^2 )$ for small perturbations,
proving $(\hat W_j, \hat b_j)_{j=1}^2$ is indeed a local minimum of $\ell$. Since this is true for arbitrary $\alpha > 0$,
there are infinitely many such local minima. We can also construct similar local minima by permuting hidden nodes, etc.

\paragraph{Step 2: A point strictly better than the local minimum.}
The proof of this step is more involved. In the previous step, we ``pushed'' all the input to the hidden nodes to positive side, and took advantage of ``local linearity'' of the hidden nodes near $(\hat W_j, \hat b_j)_{j=1}^2$. But to construct parameters $(\tilde W_j, \tilde b_j)_{j=1}^2$ that have strictly smaller risk than $(\hat W_j, \hat b_j)_{j=1}^2$ (to prove that $(\hat W_j, \hat b_j)_{j=1}^2$ is a spurious local minimum), we make the sign of inputs to the hidden nodes different depending on data.

To this end, we sort the indices of data points in increasing order of $\bar y_i$; i.e., $\bar y_1 \leq \bar y_2 \leq \cdots \leq \bar y_m$. Define the set
$\bdindexset \defeq  \{ j \in [m-1] \mid \sum\nolimits_{i \leq j} (\bar y_i - y_i) \neq 0, \bar y_j < \bar y_{j+1} \}$.
The remaining construction is divided into two cases: $\bdindexset \neq \emptyset$ and $\bdindexset = \emptyset$, whose main ideas are essentially the same. We present the proof for $\bdindexset \neq \emptyset$, and defer the other case to Appendix~\ref{sec:thm2s2c2} as it is rarer, and its proof, while instructive for its perturbation argument, is technically too involved. 

\textbf{Case 1: $\bdindexset \neq \emptyset$.}
Pick any $j_0 \in \bdindexset$.
We can observe that $\sum\nolimits_{i \leq j_0} (\bar y_i - y_i) = - \sum\nolimits_{i > j_0} (\bar y_i - y_i)$, because of \eqref{eq:thm2identity}.
Define $\beta = \frac{\bar y_{j_0} + \bar y_{j_0+1}}{2}$, so that
$\bar y_i - \beta < 0$ for all $i \leq j_0$ and $\bar y_i - \beta > 0$ for all $i > j_0$.
Then, let $\gamma$ be a constant satisfying $0< |\gamma| \leq \frac{\bar y_{j_0+1} - \bar y_{j_0}}{4}$,
whose value will be specified later.
Since $|\gamma|$ is small enough, $\sign(\bar y_i-\beta) = \sign(\bar y_i-\beta + \gamma) =  \sign(\bar y_i-\beta - \gamma)$.
Now select parameters
\begin{align*}
\tilde W_1 = 
\begin{bmatrix}
\veccomp{\bar W}{[d_x]} \\
-\veccomp{\bar W}{[d_x]} \\
\zeros{(d_1-2) \times d_x}
\end{bmatrix}
,~
\tilde b_1 = 
\begin{bmatrix}
\veccomp{\bar W}{d_x+1}-\beta+\gamma\\
-\veccomp{\bar W}{d_x+1}+\beta+\gamma\\
\zeros{d_1-2}
\end{bmatrix}
,~
\tilde W_2 = \tfrac{1}{s_++s_-}  \begin{bmatrix} 1 & -1 & \zeros{d_1-2}^T \end{bmatrix}
, ~
\tilde b_2 = \beta.
\end{align*}
Recall again that $\veccomp{\bar W}{[d_x]} x_i + \veccomp{\bar W}{d_x+1} = \bar y_i$.
For $i \leq j_0$, $\bar y_i - \beta + \gamma < 0$ and $- \bar y_i + \beta + \gamma > 0$, so 
\begin{align*}
\hat y_i 
= \frac{s_-  ( \bar y_i  - \beta + \gamma) }{ s_++s_-}
- \frac{s_+ (- \bar y_i + \beta +\gamma )}{ s_++s_-} 
+ \beta
= \bar y_i - \frac{s_+ - s_-}{s_+ + s_-} \gamma.
\end{align*}
Similarly, for $i > j_0$, $\bar y_i - \beta + \gamma > 0$ and $- \bar y_i + \beta + \gamma < 0$ results in $\hat y_i = \bar y_i + \frac{s_+ - s_-}{s_+ + s_-} \gamma$.
Here, we push the outputs $\hat y_i$ of the network by $\frac{s_+ - s_-}{s_+ + s_-} \gamma$ from $\bar y_i$,
and the direction of the ``push'' varies depending on whether $i \leq j_0$ or $i > j_0$. 

The empirical risk for this choice of parameters is
\begin{align*}
\ell((\tilde W_j, \tilde b_j)_{j=1}^2) &= \bhalf \sum\nolimits_{i \leq j_0} \Big (\bar y_i - \frac{s_+ - s_-}{s_+ + s_-} \gamma - y_i \Big )^2
+ \bhalf \sum\nolimits_{i > j_0} \Big (\bar y_i + \frac{s_+ - s_-}{s_+ + s_-} \gamma - y_i \Big )^2\\
&= \ell_0(\bar W) -  2 \left [ \sum\nolimits_{i \leq j_0} (\bar y_i - y_i)  \right ] \frac{s_+ - s_-}{s_+ + s_-} \gamma + O(\gamma^2).
\end{align*}
Since $\sum_{i \leq j_0} (\bar y_i - y_i) \neq 0$ and $s_+ \neq s_-$, 
we can choose $\sign(\gamma) = \sign([\sum_{i \leq j_0} (\bar y_i - y_i) ](s_+ - s_-))$, and choose small $|\gamma|$ so that $\ell((\tilde W_j, \tilde b_j)_{j=1}^2) < \ell_0(\bar W) = \ell((\hat W_j, \hat b_j)_{j=1}^2)$,
proving that $(\hat W_j, \hat b_j)_{j=1}^2$ is a spurious local minimum.

\vspace*{-5pt}
\section{Counterexample: bad local minima for many activations}
\vspace*{-5pt}
\label{sec:othernonlin}
The proof of Theorem~\ref{thm:piecelin} crucially exploits the piecewise linearity of the activation functions. Thus, one may wonder whether the spurious local minima seen there are an artifact of the specific nonlinearity.  We show below that this is \emph{not} the case. We provide a counterexample nonlinear network and a dataset for which a wide range of nonlinear activations result in a local minimum that is strictly inferior to the global minimum with exactly zero empirical risk. Examples of such activation functions include popular activation functions such as sigmoid, tanh, arctan, ELU, SELU, and ReLU.

We consider again the squared error empirical risk of a one-hidden-layer nonlinear neural network: 
\begin{align*}
\ell( (W_j,b_j)_{j=1}^2 )
:= \tfrac12 \nfro{W_2 h (W_1 X \! + \! b_1 \ones{m}^T) \!+\!  b_2 \ones{m}^T \!-\! Y}^2,
\end{align*}
where we fix $d_x = d_1 = 2$ and $d_y = 1$.
Also, let $h^{(k)}(x)$ be the $k$-th derivative of $h : \reals \mapsto \reals$,
whenever it exists at $x$. For short, let $h'$ and $h''$ denote the first and second derivatives.

\vspace*{-4pt}
\subsection{Main results and discussion}
\begin{theorem}
	\label{thm:othernonlin}
	Let the loss $\ell( (W_j,b_j)_{j=1}^2)$ and network be as defined above. Consider the dataset
        \begin{small}
	\begin{equation*}
	X = \begin{bmatrix} 1 & 0 & \half \\ 0 & 1 & \half \end{bmatrix},~
	Y = \begin{bmatrix} 0 & 0 & 1 \end{bmatrix}.
      \end{equation*}
    \end{small}%
    For this network and dataset the following results hold:
	\begin{enumerate}[leftmargin=1em]
          \setlength{\itemsep}{0pt}
		\item If there exist real numbers $v_1, v_2, v_3, v_4 \in \reals$ such that 
                  \begin{enumerate}[label={\small(C\ref{thm:othernonlin}.\arabic*)}, leftmargin=25pt]
			\item $h(v_1) h(v_4) = h(v_2) h(v_3)$, and
			\item $h(v_1) h \left ( \frac{v_3+v_4}{2} \right ) \neq h(v_3) h \left (\frac{v_1+v_2}{2} \right )$,
		\end{enumerate}
		then there is a tuple $(\tilde W_j, \tilde b_j)_{j=1}^2$ at which $\ell$ equals $0$. 
		\item If there exist real numbers $v_1, v_2, u_1, u_2 \in \reals$ such that the following conditions hold:
		\begin{enumerate}[label={\small(C\ref{thm:othernonlin}.\arabic*)}, leftmargin=25pt]
			\setlength{\itemsep}{0pt}
			\setcounter{enumii}{2}
			\item $u_1 h(v_1) + u_2 h(v_2) = \frac{1}{3}$,
			\item $h$ is infinitely differentiable at $v_1$ and $v_2$,
			\item there exists a constant $c > 0$ such that $|h^{(n)}(v_1)| \leq c^n n!$ and $|h^{(n)}(v_2)| \leq c^n n!$.
			\item $(u_1 h'(v_1))^2 + \frac{u_1 h''(v_1)}{3} > 0$,
			\item $(u_1 h'(v_1) u_2 h'(v_2))^2 \!<\! ( (u_1 h'(v_1))^2\! + \!\frac{u_1 h''(v_1)}{3} ) ( (u_2 h'(v_2))^2 + \frac{u_2 h''(v_2)}{3} )$,
		\end{enumerate}
		then there exists a tuple $(\hat W_j, \hat b_j)_{j=1}^2$ such that the output of the network is the same as the linear least squares model, the risk $\ell( (\hat W_j, \hat b_j)_{j=1}^2 ) = \frac{1}{3}$, and $(\hat W_j, \hat b_j)_{j=1}^2$ is a local minimum of $\ell$.
	\end{enumerate}
\end{theorem}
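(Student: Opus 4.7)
The three data points satisfy $x_3 = \tfrac{1}{2}(x_1 + x_2)$, so any affine map evaluated at $x_3$ equals the average of its values at $x_1$ and $x_2$. I would choose $\tilde W_1$ and $\tilde b_1$ so that the pre-activations at $(x_1, x_2)$ are $(v_1, v_2)$ at hidden node $1$ and $(v_3, v_4)$ at hidden node $2$; the $x_3$ pre-activations are then forced to be $\tfrac{v_1+v_2}{2}$ and $\tfrac{v_3+v_4}{2}$. Matching $\hat Y = Y = (0, 0, 1)$ reduces to a $3 \times 3$ linear system in $(u_1, u_2, c) := (\tilde W_2, \tilde b_2)$; expanding its determinant and using (C3.1) to annihilate the term $h(v_1) h(v_4) - h(v_2) h(v_3)$, the determinant should factor (in the generic case $h(v_1) \neq 0$) as $\tfrac{h(v_2) - h(v_1)}{h(v_1)} \bigl[ h(v_1) h(\tfrac{v_3+v_4}{2}) - h(v_3) h(\tfrac{v_1+v_2}{2}) \bigr]$. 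The bracketed factor is nonzero by (C3.2), and (C3.1) together with (C3.2) forces $h(v_1) \neq h(v_2)$, so the system is invertible and yields a unique $(u_1, u_2, c)$ giving zero loss; a short edge case handles $h(v_1) = 0$.

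\textbf{Part 2 (construction and criticality).} Take $\hat W_1 = 0$, $\hat b_1 = (v_1, v_2)^T$, $\hat W_2 = (u_1, u_2)$, $\hat b_2 = 0$. Each hidden pre-activation is then the constant $v_k$ across data, so $\hat y_i = u_1 h(v_1) + u_2 h(v_2) = \tfrac{1}{3}$ by (C3.3), which matches the linear least squares predictor $\bar Y = (\tfrac{1}{3}, \tfrac{1}{3}, \tfrac{1}{3})$ and gives loss $\tfrac{1}{3}$. Criticality follows from $\sum_i (\hat y_i - y_i) = 0$ and $\sum_i (\hat y_i - y_i) x_i = 0$, the normal equations for the constant residual $(\tfrac{1}{3}, \tfrac{1}{3}, -\tfrac{2}{3})$. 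Then I would Taylor-expand $\ell(\hat\theta + \xi)$ (convergence guaranteed by (C3.4)--(C3.5)) and compute that the second-order term reduces to $\tfrac{1}{4} \tau^T M \tau + \tfrac{3}{8} s^2$, where $\tau_k := (W_1)_{k,1} - (W_1)_{k,2}$ is the column imbalance of the $k$-th row of $W_1$, $s$ is a scalar mode freely adjustable via $b_2$, and $M$ is the symmetric $2 \times 2$ matrix with diagonal entries $(u_k h'(v_k))^2 + u_k h''(v_k)/3$ and off-diagonal $u_1 h'(v_1) u_2 h'(v_2)$. Conditions (C3.6) and (C3.7) are exactly the positive-diagonal and positive-determinant conditions for $M \succ 0$, so the Hessian is positive semidefinite.

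\textbf{Main obstacle: the Hessian's degeneracy.} The Hessian is singular, so positive semidefiniteness alone does not yield a local minimum. My key structural observation is that every input satisfies $[x_i]_1 + [x_i]_2 = 1$, so whenever $W_1$ has rows with equal entries ($\tau = 0$) the pre-activation $W_1 x_i + b_1$ is constant across $i$; consequently $\hat y_i$ equals a single common value $g$, and $\ell = \tfrac{1}{2}(3 g^2 - 2 g + 1) = \tfrac{1}{3} + \tfrac{3}{2}(g - \tfrac{1}{3})^2 \geq \tfrac{1}{3}$ \emph{exactly} on the whole linear subspace $\{\tau = 0\}$, not merely to leading order. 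The technical crux will be gluing this exact identity on $\{\tau = 0\}$ to the quadratic growth on $\{\tau \neq 0\}$. In Hessian-diagonalizing coordinates $\xi = (\tau, \eta)$, the mixed partial $\partial^2 \ell / \partial \tau \partial \eta$ vanishes at $\hat\theta$ by block-diagonality, so I expect $\ell(\hat\theta + \xi) - \tfrac{1}{3} = f(0, \eta) + \tfrac{1}{4} \tau^T M \tau + O(\|\eta\|^2 \|\tau\| + \|\eta\| \|\tau\|^2 + \|\tau\|^3)$ with $f(0, \eta) \geq 0$ from the structural identity; combining this positivity with the Cauchy-type bound (C3.5), which controls the analytic remainder uniformly in a fixed neighborhood, should let $M \succ 0$ and $f \geq 0$ dominate the cross and higher-order terms. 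This uniform domination is precisely where the analyticity of $h$ via (C3.4)--(C3.5) becomes essential; without it, an $O(\|\eta\|^2 \|\tau\|)$ perturbation could in principle flip the sign when $\|\tau\| \ll \|\eta\|^2$.
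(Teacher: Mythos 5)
Your constructions are essentially the paper's (your Part 2 point differs only in placing the constants $v_1,v_2$ in $\hat b_1$ rather than in a $\hat W_1$ with equal columns, which is immaterial since every $x_i$ satisfies $[x_i]_1+[x_i]_2=1$), and you have correctly identified the limiting quadratic form $M$, the role of (C3.6)--(C3.7), and the degenerate subspace $\{\tau=0\}$ on which $\ell=\tfrac{1}{3}+\tfrac{3}{2}(g-\tfrac{1}{3})^2$ exactly. However, there are two genuine gaps. The smaller one is in Part 1: your invertibility argument requires $h(v_1)\neq h(v_2)$, and this does \emph{not} follow from (C3.1)--(C3.2). For non-injective $h$ one can have $h(v_1)=h(v_2)=a\neq 0$, in which case (C3.1) merely forces $h(v_3)=h(v_4)$ while (C3.2) constrains only the midpoint values and can still hold; your determinant then vanishes. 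The system is still consistent (the paper fixes $\tilde b_2=0$ and exhibits $\tilde W_2=D^{-1}\,[h(v_3)\ \ -h(v_1)]$ with $D=h(v_3)h(\tfrac{v_1+v_2}{2})-h(v_1)h(\tfrac{v_3+v_4}{2})\neq 0$ by (C3.2) alone), but your argument no longer establishes existence in that case.

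The serious gap is the ``gluing'' step in Part 2. Your proposed error structure admits a remainder of order $\|\eta\|^2\|\tau\|$, i.e.\ \emph{linear} in $\tau$, and your plan is to let $f(0,\eta)\geq 0$ and $\tfrac{1}{4}\tau^TM\tau$ dominate it using the Cauchy bound (C3.5). This cannot work: $f(0,\eta)=\tfrac{3}{2}(g(\eta)-\tfrac{1}{3})^2$ vanishes on an entire hypersurface of $\eta$'s through the origin (the common output $g$ has nonzero derivative in $b_2$), and along such $\eta$ a genuine term $c\,\|\eta\|^2\tau_l$ would force $\ell<\tfrac{1}{3}$ whenever $0<\|\tau\|\ll\|\eta\|^2$, no matter how small the neighborhood. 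Analyticity bounds the \emph{size} of remainder coefficients, not their \emph{order} in $\tau$, so it cannot rescue a domination argument against a lower-order term. The point is that no such term actually exists, and proving this is the crux the paper's Lemma A.3 handles: writing $\ell-\tfrac{1}{3}=\sum_i r_ie_i+\tfrac{1}{2}\sum_ie_i^2$ with constant residual $r=(\tfrac13,\tfrac13,-\tfrac23)$ and $e_3$ evaluated at the midpoint arguments, both $e_1+e_2-2e_3$ and $\tfrac{1}{2}e_1^2+\tfrac{1}{2}e_2^2-e_3^2$ are shown to be \emph{exactly divisible} by $\tau_1^2$, $\tau_2^2$, or $\tau_1\tau_2$, via the polynomial identities $a^n+b^n-2(\tfrac{a+b}{2})^n=(a-b)^2p_n(a,b)$ and its bivariate analogue (Lemmas A.4--A.5). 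This yields the exact decomposition $\ell-\tfrac{1}{3}=\tfrac{3}{2}t^2+\alpha_1\tau_1^2+\alpha_2\tau_2^2+\alpha_3\tau_1\tau_2$ with $\alpha_i$ converging to the entries of $\tfrac{1}{4}M$, at which point (C3.6)--(C3.7) finish the proof. Without establishing this divisibility, your argument does not close.
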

Theorem~\ref{thm:othernonlin} shows that for this architecture and dataset, activations that satisfy (C\ref{thm:othernonlin}.1)--(C\ref{thm:othernonlin}.7) introduce at least one spurious local minimum. 
Notice that the empirical risk is zero at the global minimum. This means that the data $X$ and $Y$ can actually be ``generated'' by the network, which satisfies the realizability assumption that others use~\citep{safran2017spurious, du2017gradient, wu2018no}.
Notice that our counterexample is ``easy to fit,'' and yet, there exists a local minimum that is not global. This leads us to conjecture that with harder datasets, the problems with spurious local minima could be worse.
The proof of Theorem~\ref{thm:othernonlin} can be found in Appendix~\ref{sec:thm3}.

\textbf{Discussion.} Note that the conditions~(C\ref{thm:othernonlin}.1)--(C\ref{thm:othernonlin}.7) only require \emph{existence} of certain real numbers rather than some \emph{global} properties of activation $h$, hence are not as restrictive as they look. Conditions~(C\ref{thm:othernonlin}.1)--(C\ref{thm:othernonlin}.2) come from a choice of tuple $(\tilde W_j, \tilde b_j)_{j=1}^2$ that perfectly fits the data. Condition~(C\ref{thm:othernonlin}.3) is necessary for constructing $(\hat W_j, \hat b_j)_{j=1}^2$ with the same output as the linear least squares model, and Conditions~(C\ref{thm:othernonlin}.4)--(C\ref{thm:othernonlin}.7) are needed for showing local minimality of $(\hat W_j, \hat b_j)_{j=1}^2$ via Taylor expansions.
The class of functions that satisfy conditions~(C\ref{thm:othernonlin}.1)--(C\ref{thm:othernonlin}.7) is quite large,
and includes the nonlinear activation functions used in practice. The next corollary highlights this observation (for a proof with explicit choices of the involved real numbers, please see Appendix~\ref{sec:cor2}).
\begin{corollary}
	\label{cor:actftnex}
	For the counterexample in Theorem~\ref{thm:othernonlin}, the set of activation functions satisfying conditions (C\ref{thm:othernonlin}.1)--(C\ref{thm:othernonlin}.7) include sigmoid, tanh, arctan, quadratic, ELU, and SELU.
\end{corollary}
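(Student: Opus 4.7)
The plan is to verify conditions (C\ref{thm:othernonlin}.1)--(C\ref{thm:othernonlin}.7) by explicit case analysis on each of the six listed activations. Because the two statements of Theorem~\ref{thm:othernonlin} are independent existence claims, I would split the work into two disjoint subproblems: (i) exhibit a four-tuple $(v_1,v_2,v_3,v_4)$ satisfying (C\ref{thm:othernonlin}.1)--(C\ref{thm:othernonlin}.2), and (ii) exhibit a quadruple $(v_1,v_2,u_1,u_2)$ satisfying (C\ref{thm:othernonlin}.3)--(C\ref{thm:othernonlin}.7). The parameters in (i) and (ii) are disjoint, so they can be chosen independently for each $h$.

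For task (i), a convenient reduction is to set $v_2 = v_3$, which collapses (C\ref{thm:othernonlin}.1) to the single equation $h(v_1)h(v_4) = h(v_2)^2$ and turns (C\ref{thm:othernonlin}.2) into a generic non-coincidence condition on two midpoint values. For sigmoid, tanh, and arctan, I would pick $v_1,v_2$ with $h(v_1)\neq 0$ and $h(v_2)^2/h(v_1)$ lying in the range of $h$, then define $v_4 = h^{-1}(h(v_2)^2/h(v_1))$ and check (C\ref{thm:othernonlin}.2) numerically. For the quadratic $h(x) = x^2$, the tuple $(v_1,v_2,v_3,v_4) = (1,-1,2,2)$ works directly: both sides of (C\ref{thm:othernonlin}.1) equal $4$, while (C\ref{thm:othernonlin}.2) reduces to $4 \neq 0$. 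For ELU and SELU, I would keep all $v_i$ strictly away from the kink at $0$, where each activation reduces to either a linear or exponential piece, making (C\ref{thm:othernonlin}.1)--(C\ref{thm:othernonlin}.2) a purely algebraic check.

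For task (ii), the analyticity hypothesis (C\ref{thm:othernonlin}.4) and the factorial-growth bound (C\ref{thm:othernonlin}.5) hold at every real point for sigmoid, tanh, arctan, and quadratic, since each is real-analytic on all of $\reals$ with uniformly bounded Taylor radii and standard Cauchy estimates on derivatives; for ELU and SELU they hold once $v_1,v_2$ are chosen strictly on one side of $0$, where the activation restricts to an analytic function. The remaining conditions (C\ref{thm:othernonlin}.3), (C\ref{thm:othernonlin}.6), (C\ref{thm:othernonlin}.7) comprise one affine equality and two strict inequalities in four unknowns, so I would first select $(v_1,v_2,u_1,u_2)$ satisfying the \emph{open} conditions (C\ref{thm:othernonlin}.6)--(C\ref{thm:othernonlin}.7), then rescale $u_1,u_2$ along the affine constraint $u_1 h(v_1) + u_2 h(v_2) = 1/3$ to enforce (C\ref{thm:othernonlin}.3); openness guarantees that the rescaling preserves the inequalities for a sufficiently small adjustment.

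The main obstacle is (C\ref{thm:othernonlin}.7), a coupled quadratic inequality equivalent to positive definiteness of the symmetric $2\times 2$ matrix with diagonals $(u_k h'(v_k))^2 + u_k h''(v_k)/3$ and off-diagonal $u_1 h'(v_1)\, u_2 h'(v_2)$. A clean sufficient condition is to force $u_k h''(v_k) > 0$ for both $k = 1,2$, which makes the determinant strictly larger than the Cauchy--Schwarz bound; this sign can be arranged for each activation by choosing the sign of $u_k$ to match the sign of $h''(v_k)$ at a convenient $v_k$ (e.g.\ $v_k < 0$ for ELU/SELU, $v_k$ on either side of $0$ for sigmoid or tanh, any $v_k$ for quadratic with the appropriate $u_k$). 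With this structural recipe in hand, producing explicit numerical witnesses for each of the six activations is a routine verification.
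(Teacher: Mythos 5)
Your overall plan---exhibit explicit witnesses for each activation and check the seven conditions---is the same as the paper's, and your reduction $v_2=v_3$ for (C\ref{thm:othernonlin}.1)--(C\ref{thm:othernonlin}.2) matches the paper's actual choices. The gap is in your recipe for (C\ref{thm:othernonlin}.6)--(C\ref{thm:othernonlin}.7). You propose to force $u_k h''(v_k)>0$ for both $k$ and then rescale $(u_1,u_2)$ to meet the normalization $u_1h(v_1)+u_2h(v_2)=\tfrac13$. For tanh and arctan this is impossible: both satisfy $\sign(h''(v))=-\sign(h(v))$ for every $v\neq 0$ (e.g.\ $\tanh''(v)=-2\tanh(v)\,\mathrm{sech}^2(v)$), so $u_kh''(v_k)>0$ forces $u_kh(v_k)<0$ for both $k$, making the left side of (C\ref{thm:othernonlin}.3) negative; no positive rescaling can bring it to $\tfrac13$, and a negative rescaling flips exactly the signs your argument relies on. The same conflict occurs for ELU/SELU with $v_k<0$ (there $h<0<h''$), while for $v_k>0$ one has $h''=0$, which your strict condition excludes. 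So the ``clean sufficient condition'' is realizable only for sigmoid and quadratic. In addition, the step ``openness guarantees that the rescaling preserves the inequalities for a sufficiently small adjustment'' is not sound: the multiplier needed to hit the affine constraint is fixed by the initial choice and need not be close to $1$, nor positive.

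The repair is to abandon the structural recipe and verify (C\ref{thm:othernonlin}.6)--(C\ref{thm:othernonlin}.7) directly on concrete values, allowing $u_kh''(v_k)$ to be negative or zero for some $k$; the inequalities can still hold because the squared first-derivative terms dominate. This is what the paper does: for tanh it takes $v_1=v_2=\tanh^{-1}(1/2)$, $u_1=1$, $u_2=-1/3$, for which $u_1h''(v_1)=-3/4<0$ and yet $(u_1h'(v_1))^2+u_1h''(v_1)/3=9/16-1/4=5/16>0$, while (C\ref{thm:othernonlin}.7) reads $9/256<(5/16)(7/48)$, which holds; for ELU/SELU it places $v_1>0$ (so $u_1h''(v_1)=0$) and $v_2<0$, which reduces (C\ref{thm:othernonlin}.7) to the single requirement $u_2h''(v_2)>0$. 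Your treatment of (C\ref{thm:othernonlin}.1)--(C\ref{thm:othernonlin}.2) and of the analyticity conditions (C\ref{thm:othernonlin}.4)--(C\ref{thm:othernonlin}.5) is fine, except that for ELU/SELU you should note that all four $v_i$ cannot lie on the linear (positive) piece, where (C\ref{thm:othernonlin}.1) would force equality in (C\ref{thm:othernonlin}.2).
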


Admittedly, Theorem~\ref{thm:othernonlin} and Corollary~\ref{cor:actftnex} give one counterexample instead of stating a claim about generic datasets. Nevertheless, this example shows that for many practical nonlinear activations, the desirable ``local minimum is global'' property cannot hold even for realizable datasets, suggesting that the situation could be worse for non-realizable ones.

\textbf{Remark: ``ReLU-like'' activation functions.}
Recall the piecewise linear nonnegative homogeneous activation function $\relulikeactfun$.
They do not satisfy condition (C\ref{thm:othernonlin}.7), so Theorem~\ref{thm:othernonlin} cannot be directly applied. Also, if $s_- = 0$ (i.e., ReLU), conditions (C\ref{thm:othernonlin}.1)--(C\ref{thm:othernonlin}.2) are also violated.
However, the statements of Theorem~\ref{thm:othernonlin} hold even for $\relulikeactfun$, which is shown in Appendix~\ref{sec:thm3forRelus}.
Recalling again $s_+ = 1+\epsilon$ and $s_- = 1$, this means that even with the ``slightest'' nonlinearity in activation function,
the network has a global minimum with risk zero while there exists a bad local minimum that performs just as linear least squares models.
In other words, ``local minima are global'' property is rather brittle and can only hold for linear neural networks.
Another thing to note is that in Appendix~\ref{sec:thm3forRelus}, the bias parameters are all zero, for both $(\tilde W_j, \tilde b_j)_{j=1}^2$ and $(\hat W_j, \hat b_j)_{j=1}^2$. For models without bias parameters, $(\hat W_j)_{j=1}^2$ is still a spurious local minimum, thus showing that \citet{wu2018no} fails to extend to empirical risks and non-unit weight vectors.



\vspace*{-5pt}
\section{Global optimality in linear networks}
\vspace*{-4pt}
\label{sec:linear}
In this section we present our results on deep linear neural networks. Assuming that the hidden layers are at least as wide as either the input or output, we show that critical points of the loss with a multilinear parameterization inherit the type of critical points of the loss with a linear parameterization. As a corollary, we show that for differentiable losses whose critical points are globally optimal, deep linear networks have \emph{only global minima or saddle points}. Furthermore, we provide an efficiently checkable condition for global minimality.

Suppose the network has $H$ hidden layers having widths $d_1, \dots, d_H$. To ease notation, we set $d_0 = d_x$ and $d_{H+1} = d_y$. The weights between adjacent layers are kept in matrices $W_j \in \reals^{d_j \times d_{j-1}}$ ($j \in [H+1]$), and the output $\hat Y$ of the network is given by the product of weight matrices with the data matrix: $\hat Y = W_{H+1}W_{H}\cdots W_{1} X$.
Let $(W_j)_{j=1}^{H+1}$ be the tuple of all weight matrices, and $W_{i:j}$ denote the product $W_i W_{i-1} \cdots W_{j+1} W_j$ for $i \geq j$, and the identity for $i = j-1$. We consider the empirical risk $\ell((W_j)_{j=1}^{H+1})$, which, for linear networks assumes the form
\begin{equation}
\label{eq:linlossdef}
\ell( (W_j)_{j=1}^{H+1} ) \defeq \ell_0(W_{H+1:1}),
\end{equation}
where $\ell_0$ is a suitable differentiable loss. 
For example, when $\ell_0(R) = \half \nfro{R X - Y}^2$, 
$\ell( (W_j)_{j=1}^{H+1} ) = \tfrac12\nfro{W_{H+1:1} X - Y}^2 = \ell_0(W_{H+1:1})$.
Lastly, we write $\gradlzero(M) \equiv \nabla_R \ell_0(R) \vert_{R = M}$.

\textbf{Remark: bias terms.} We omit the bias terms $b_1, \dots, b_{H+1}$ here. This choice is for simplicity; models with bias can be handled by the usual trick of augmenting data and weight matrices. 

\vspace*{-4pt}
\subsection{Main results and discussion}
\vspace*{-4pt}
We are now ready to state our first main theorem, whose proof is deferred to Appendix~\ref{sec:thm1}.
\begin{theorem}
	\label{thm:linear}
	Suppose that for all $j$, 
	$d_j \geq \min \{d_x, d_y\}$, and that the loss $\ell$ is given by~\eqref{eq:linlossdef}, where $\ell_0$ is differentiable on $\reals^{d_y \times d_x}$. 
	For any critical point $(\hat W_j)_{j=1}^{H+1}$ of the loss $\ell$, the following claims hold:
	\begin{enumerate}
		\setlength{\itemsep}{0pt}
		\item If $\,\gradlzero(\hat W_{H+1:1}) \neq 0$, then $(\hat W_j)_{j=1}^{H+1}$ is a saddle of $\ell$. 
		\item If $\gradlzero(\hat W_{H+1:1}) = 0$, then
                  \begin{enumerate}
                    \setlength{\itemsep}{1pt}
			\item $(\hat W_j)_{j=1}^{H+1}$ is a local min (max) of $\ell$ if $\hat W_{H+1:1}$ is a local min (max) of $\ell_0$; moreover, 
			\item $(\hat W_j)_{j=1}^{H+1}$ is a global min (max) of $\ell$ if and only if $\hat W_{H+1:1}$ is a global min (max) of $\ell_0$.
		\end{enumerate}
		\item If there exists $j^* \in [H+1]$ such that $\hat W_{H+1:j^*+1}$ has full row rank and $\hat W_{j^*-1:1}$ has full column rank, then $\gradlzero(\hat W_{H+1:1}) = 0$, so 2(a) and 2(b) hold. Also,
                  \begin{enumerate}
                    \vspace*{-4pt}
                    \setlength{\itemsep}{1pt}
			\item $\hat W_{H+1:1}$ is a local min (max) of $\ell_0$ if $(\hat W_j)_{j=1}^{H+1}$ is a local min (max) of $\ell$.
		\end{enumerate}
	\end{enumerate}
\end{theorem}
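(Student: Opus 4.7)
The argument rests on three ingredients: (i) the chain-rule identity
\[
\nabla_{W_j}\ell((\hat W_j)_{j=1}^{H+1}) \;=\; \hat W_{H+1:j+1}^T\,\gradlzero(\hat W_{H+1:1})\,\hat W_{j-1:1}^T,
\]
which must vanish at every critical point; (ii) surjectivity of the parameterization $\Phi:(W_j)_{j=1}^{H+1}\mapsto W_{H+1:1}$ onto $\reals^{d_y\times d_x}$ under the width hypothesis, which I would establish by an explicit padding-and-identity construction (WLOG $d_x\le d_y$, padding the target $M$ with zero rows into $W_{H+1}$ and taking the other layers to be $\begin{bmatrix}I_{d_x}\\0\end{bmatrix}$-type blocks); and (iii) a second-order Taylor expansion of $\ell_0$ around $\hat W_{H+1:1}$.

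\textbf{The straightforward parts (2, 3, 3(a)).} I would first dispatch Part~3's opening claim: at $j=j^*$, the full row rank of $\hat W_{H+1:j^*+1}$ and full column rank of $\hat W_{j^*-1:1}$ admit a left and a right inverse respectively, which applied to the critical-point equation yield $\gradlzero(\hat W_{H+1:1})=0$. Combining $\ell=\ell_0\circ\Phi$ with surjectivity of $\Phi$ gives Part~2(b) immediately, and continuity of $\Phi$ transfers local optimality of $\ell_0$ to local optimality of $\ell$, giving Part~2(a). For Part~3(a), the same rank conditions make the partial differential $\Delta_{j^*}\mapsto\hat W_{H+1:j^*+1}\,\Delta_{j^*}\,\hat W_{j^*-1:1}$ surjective onto $\reals^{d_y\times d_x}$, so by an implicit-function/submersion argument every $R$ near $\hat W_{H+1:1}$ is realized by some $(W_j)$ near $(\hat W_j)$; local optimality of $\ell$ therefore pulls back to local optimality of $\ell_0$.

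\textbf{Part 1 (main obstacle).} At a critical point with $G:=\gradlzero(\hat W_{H+1:1})\ne 0$, every first-order variation of $\ell$ vanishes (the image of $d\Phi$ sits inside $G^{\perp}$), so the saddle property has to be extracted purely at second order. My plan is to pick $u\in\reals^{d_y},v\in\reals^{d_x}$ with $u^TGv\ne 0$ and then use rank-one perturbations $\Delta_{H+1}=ua^T$ and $\Delta_1=bv^T$ (with all other $\Delta_j=0$) chosen so that $a^T\hat W_{H:1}=0$ and $\hat W_{H+1:2}b=0$. These constraints kill the first-order change of the product and leave a pure second-order change $(a^T\hat W_{H:2}b)\,uv^T$, which Taylor-expands into a $t^2$-term in $\ell$ of signed size $(a^T\hat W_{H:2}b)(u^TGv)$; the replacement $u\mapsto -u$ flips this sign while preserving the zero first-order condition, yielding perturbations that both decrease and increase $\ell$, i.e.\ a saddle.

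The technical crux is guaranteeing the existence of $a,b$ with $a^T\hat W_{H:2}b\ne 0$. The critical-point equations at $j=1$ and $j=H+1$ impose $\rank(\hat W_{H+1:2})\le d_y-\rank(G)$ and $\rank(\hat W_{H:1})\le d_x-\rank(G)$, so the two null spaces are nontrivial; combined with $d_j\ge\min\{d_x,d_y\}$ this should leave enough room for a nondegenerate bilinear pairing through $\hat W_{H:2}$. When a two-endpoint rank-one perturbation runs out of room (the shallow case $H=1$, where $\hat W_{H:2}$ collapses to the identity, is the tightest), I would instead locate an intermediate index $j^*$ at which the cumulative products $\hat W_{H+1:j^*+1}$ and $\hat W_{j^*-1:1}$ jointly have sufficient rank deficiency (such an index must exist once $G\ne 0$), perturb the two layers bracketing $j^*$, and rerun the same sign-flipping second-order argument inside that bottleneck. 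Verifying that the width assumption always permits such a configuration is the subtle point I expect to require the most care.
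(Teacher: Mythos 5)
Parts 2, 3, and 3(a) of your plan are sound and essentially match the paper: surjectivity of the product map via padding gives 2(b), continuity of the product map gives 2(a), left/right inverses give the opening claim of Part 3, and your submersion argument for 3(a) is the same idea as the paper's explicit pseudo-inverse decomposition ($V_{j^*}=\hat W_{j^*}+A^T(AA^T)^{-1}(R-\hat W_{H+1:1})(B^TB)^{-1}B^T$). The gap is in Part 1, and it sits exactly where you predicted. Your symmetric construction needs $a\in\lnulsp(\hat W_{H:1})$ and $b\in\nulsp(\hat W_{H+1:2})$ with $a^T\hat W_{H:2}\,b\neq 0$, and this pairing can be identically degenerate at a legitimate critical point with $G\neq 0$. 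Concrete instance with $H=1$, $d_x=d_y=d_1=2$: take $\hat W_1=e_1w^T$, $\hat W_2=ue_2^T$ with $w,u\neq 0$, so $\hat W_{2:1}=0$; suppose $\gradlzero(0)=pq^T\neq 0$ with $u\perp p$ and $w\perp q$. Then $\hat W_2^TG=0$ and $G\hat W_1^T=0$, so this is a critical point, but $\lnulsp(\hat W_1)=\mathrm{span}(e_2)$ and $\nulsp(\hat W_2)=\mathrm{span}(e_1)$ are orthogonal, so $a^Tb=0$ for every admissible pair and your second-order term vanishes identically. Your fallback (move to an intermediate bottleneck index and perturb the two layers bracketing it) is vacuous when $H=1$ because there is no intermediate layer, and for deeper networks the same orthogonality obstruction can recur at the chosen bottleneck, so the fallback as stated does not close the gap.

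The paper escapes this with an asymmetric, two-stage perturbation rather than a symmetric second-order one. First it perturbs a \emph{single} layer by a finite amount in a direction that leaves the product $\hat W_{H+1:1}$ exactly unchanged but destroys the criticality of another layer: e.g.\ (in the case $d_x\ge d_y$, when $\nulsp(\hat W_{H+1:2})\supsetneq\nulsp(\hat W_{H:2})$) it sets $V_1=\hat W_1+\epsilon v_1v_0^T$ with $v_1\in\nulsp(\hat W_{H+1:2})\cap\nulsp(\hat W_{H:2})^\perp$ and $v_0$ the top right singular vector of $G$, which forces $\gradlzero(\hat W_{H+1:1})V_1^T(\hat W_{H:2})^T\neq 0$ even though the corresponding quantity with $\hat W_1$ was zero. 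Then an infinitesimal move $\pm\eta\Delta_{H+1}$ of $W_{H+1}$ changes the product at \emph{first} order in $\eta$ in a direction correlated with $G$, giving both increase and decrease. In my counter-scenario above this works where your construction fails: $GV_1^T\propto pe_1^T\ne 0$. When even this degenerates (all the null spaces $\nulsp(\hat W_{j:2})$ coincide down to some $j^*$), the paper chains rank-one perturbations $\gamma v_jv_{j-1}^T$ through \emph{every} layer from $j^*$ to $H+1$ and isolates the top-degree coefficient in $\gamma$ of the resulting polynomial to show the relevant matrix is nonzero for small $\gamma$. That multi-layer chain, together with the finite-then-infinitesimal structure, is the missing idea; a purely quadratic two-endpoint perturbation cannot substitute for it.
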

Let us paraphrase Theorem~\ref{thm:linear} in words. In particular, it states that if the hidden layers are ``wide enough'' so that the product $W_{H+1:1}$ can attain full rank and if the loss $\ell$ assumes the form~\eqref{eq:linlossdef} for a differentiable loss $\ell_0$, then the type (optimal or saddle point) of a critical point $(\hat W_j)_{j=1}^{H+1}$ of $\ell$ is governed by the behavior of $\ell_0$ at the product $\hat W_{H+1:1}$.

Note that for any critical point $(\hat W_j)_{j=1}^{H+1}$ of the loss $\ell$, either $\gradlzero(\hat W_{H+1:1}) \neq 0$ or $\gradlzero(\hat W_{H+1:1}) = 0$. Parts~1 and 2 handle these two cases. Also observe that the condition in Part~3 implies $\nabla\ell_0=0$, so Part~3 is a refinement of Part~2.
A notable fact is that a sufficient condition for Part~3 is $\hat W_{H+1:1}$ having full rank. 
For example, if $d_x \geq d_y$, full-rank $\hat W_{H+1:1}$ implies $\rank(\hat W_{H+1:2}) = d_y$, whereby the condition in Part~3 holds with $j^* = 1$.

If $\hat W_{H+1:1}$ is not critical for $\ell_0$, then $(\hat W_j)_{j=1}^{H+1}$ must be a saddle point of $\ell$. If $\hat W_{H+1:1}$ is a local min/max of $\ell_0$, $(\hat W_j)_{j=1}^{H+1}$ is also a local min/max of $\ell$. Notice, however, that Part~2(a) does not address the case of saddle points; when $\hat W_{H+1:1}$ is a saddle point of $\ell_0$, the tuple $(\hat W_j)_{j=1}^{H+1}$ can behave arbitrarily. However, with the condition in Part~3, statements~2(a) and 3(a) hold at the same time, so that $\hat W_{H+1:1}$ is a local min/max of $\ell_0$ \emph{if and only if} $(\hat W_j)_{j=1}^{H+1}$ is a local min/max of $\ell$. Observe that the same ``if and only if'' statement holds for saddle points due to their definition; in summary, the types (min/max/saddle) of the critical points $(\hat W_j)_{j=1}^{H+1}$ and $\hat W_{H+1:1}$ match exactly.

Although Theorem~\ref{thm:linear} itself is of interest, the following corollary highlights its key implication for deep linear networks.
\begin{corollary}
	\label{cor:linearcvx}
	In addition to the assumptions in Theorem~\ref{thm:linear}, assume that any critical point of $\ell_0$ is a global min (max). For any critical point $(\hat W_j)_{j=1}^{H+1}$ of $\ell$, 
	if $\gradlzero(\hat W_{H+1:1}) \neq 0$, then $(\hat W_j)_{j=1}^{H+1}$ is a saddle of $\ell$, while if $\gradlzero(\hat W_{H+1:1}) = 0$, then $(\hat W_j)_{j=1}^{H+1}$ is a global min (max) of $\ell$.
\end{corollary}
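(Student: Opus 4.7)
The plan is to derive Corollary~\ref{cor:linearcvx} as an essentially immediate consequence of Theorem~\ref{thm:linear}, by doing a case split on whether $\gradlzero(\hat W_{H+1:1})$ vanishes at the given critical point. Since the hypothesis upgrades the conclusion of Theorem~\ref{thm:linear}.2 from ``local to local'' into ``critical to global,'' there is no additional analytic content to supply; the work is in checking that each branch of the theorem maps cleanly onto the desired statement.

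First, suppose $(\hat W_j)_{j=1}^{H+1}$ is a critical point of $\ell$ with $\gradlzero(\hat W_{H+1:1}) \neq 0$. Then Theorem~\ref{thm:linear}.1 applies verbatim and yields that $(\hat W_j)_{j=1}^{H+1}$ is a saddle of $\ell$, which is exactly the first conclusion of the corollary. Second, suppose $\gradlzero(\hat W_{H+1:1}) = 0$. Then $\hat W_{H+1:1}$ is a critical point of $\ell_0$, and the new hypothesis of the corollary immediately tells us that $\hat W_{H+1:1}$ is a global min (resp.\ max) of $\ell_0$. Applying Theorem~\ref{thm:linear}.2(b) in the ``if'' direction then yields that $(\hat W_j)_{j=1}^{H+1}$ is a global min (resp.\ max) of $\ell$, which is the second conclusion.

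Since these two cases are mutually exclusive and exhaustive for any critical point of $\ell$, the corollary follows. The main thing to be careful about is invoking the correct direction of Theorem~\ref{thm:linear}.2(b): we need ``$\hat W_{H+1:1}$ global min of $\ell_0$ $\Rightarrow$ $(\hat W_j)_{j=1}^{H+1}$ global min of $\ell$,'' not the converse. There is no real obstacle here; the proof is essentially a one-line deduction per case, and can be written in just a few lines.
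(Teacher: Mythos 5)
Your proposal is correct and follows essentially the same two-case argument as the paper's own proof: apply Theorem~\ref{thm:linear}.1 when $\gradlzero(\hat W_{H+1:1}) \neq 0$, and otherwise combine the global-optimality hypothesis on $\ell_0$ with the ``if'' direction of Theorem~\ref{thm:linear}.2(b). Your explicit note about which direction of the ``if and only if'' in Part 2(b) is being invoked is a correct and worthwhile precaution, but otherwise there is nothing to add.
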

\vspace*{-4pt}
\begin{proof}
	If $\gradlzero(\hat W_{H+1:1}) \neq 0$, then $\hat W_{H+1:1}$ is a saddle point by Theorem~\ref{thm:linear}.1. If $\gradlzero(\hat W_{H+1:1}) = 0$, then $\hat W_{H+1:1}$ is a global min (max) of $\ell_0$ by assumption.
	By Theorem~\ref{thm:linear}.2(b), $(\hat W_j)_{j=1}^{H+1}$ must be a global min (max) of $\ell$.
	\vspace*{-4pt}
\end{proof}
Corollary~\ref{cor:linearcvx} shows that for any differentiable loss function $\ell_0$ whose critical points are global minima, the loss $\ell$ has only global minima and saddle points, therefore satisfying the ``local minima are global'' property. In other words, for such an $\ell_0$, the multilinear re-parametrization introduced by deep linear networks \emph{does not introduce any spurious local minima/maxima};  it only introduces saddle points. 
Importantly, Corollary~\ref{cor:linearcvx} also provides a checkable condition that distinguishes global minima from saddle points. Since $\ell$ is nonconvex, it is remarkable that such a simple necessary and sufficient condition for global optimality is available.

Our result generalizes previous works on linear networks such as \citet{kawaguchi2016deep, yun2018global, zhou2018critical}, because it provides conditions for global optimality for a broader range of loss functions without assumptions on datasets. \citet{laurent2018deep} proved that if $(\hat W_j)_{j=1}^{H+1}$ is a local min of $\ell$, then $\hat W_{H+1:1}$ is a critical point of $\ell_0$. First, observe that this result is implied by Theorem~\ref{thm:linear}.1. So our result, which was proved in parallel and independently, is strictly more general.
With additional assumption that critical points of $\ell_0$ are global minima, \citet{laurent2018deep} showed that ``local min is global'' property holds for linear neural networks; 
our Corollay~\ref{cor:linearcvx} gives a simple and efficient test condition as well as proving there are only global minima and saddles, which is clearly stronger.


\vspace*{-5pt}
\section{Discussion and future work}
\vspace*{-5pt}
We investigated the loss surface of deep linear and nonlinear neural networks. We proved two theorems showing existence of spurious local minima on nonlinear networks, which apply to almost all datasets (Theorem~\ref{thm:piecelin}) and a wide class of activations (Theorem~\ref{thm:othernonlin}).
We concluded by Theorem~\ref{thm:linear}, showing a general result studying the behavior of critical points in multilinearly parametrized functions, which unifies other existing results on linear neural networks.
Given that spurious local minima are common in neural networks, a valuable future research direction will be investigating how far local minima are from global minima in general, and how the size of the network affects this gap. Another thing to note is that even though we showed the existence of spurious local minima in the \emph{whole} parameter space, things can be different in restricted sets of parameter space (e.g., by adding regularizers). Understanding the loss surface in such sets would be valuable. Additionally, one can try to show algorithmic/trajectory results of (stochastic) gradient descent. We hope that our paper will be a stepping stone to such future research.

\subsubsection*{Acknowledgments}
This work was supported by the DARPA Lagrange Program. Suvrit Sra also acknowledges support from an Amazon Research Award.

\bibliography{cite}
\bibliographystyle{iclr2019_conference}

\appendix
\newpage

\setcounter{theorem}{0}
\setcounter{equation}{0}
\renewcommand\thesection{A\arabic{section}}
\renewcommand{\thetheorem}{A.\arabic{theorem}}
\renewcommand{\theequation}{A.\arabic{equation}}

\allowdisplaybreaks

\section{Notation}
We first list notation used throughout the appendix.
For integers $a \le b$, $[a,b]$ denotes the set of integers between them. We write $[b]$, if $a = 1$. For a vector $v$, we use $\veccomp{v}{i}$ to denote its $i$-th component, while  $\veccomp{v}{[i]}$ denotes a vector comprised of the first $i$ components of $v$. Let $\ones{d}$ (or $\zeros{d}$) be the all ones (zeros) column vector in $\reals^d$. For a subspace $V\subseteq \reals^d$, we denote by $V^\perp$ its orthogonal complement.

For a matrix $A$, $\matent{A}{i}{j}$ is the $(i,j)$-th entry and $\matcol{A}{j}$ its $j$-th column. Let $\sigmax(A)$ and $\sigmin(A)$ denote the largest and smallest singular values of $A$, respectively; $\rowsp(A)$, $\colsp(A)$, $\rank(A)$, and $\nfro {A}$ denote respectively the row space, column space, rank, and Frobenius norm of matrix $A$. Let $\nulsp(A) \defeq \{ v \mid Av = 0 \}$ and $\lnulsp(A) \defeq  \{ v \mid v^T A = 0 \}$ be the null space and the left-null space of $A$, respectively.
When $A$ is a square matrix, let $\tr(A)$ be the trace of $A$.
For matrices $A$ and $B$ of the same size, $\dotprod{A}{B} = \tr(A^TB)$ denotes the usual trace inner product of $A$ and $B$.
Equivalently, $\dotprod{A}{B} = \tr(A^T B) = \tr(A B^T)$.
Let $\zeros{d \times m}$ be the all zeros matrix in $\reals^{d \times m}$. 

\section{Proof of Theorem~\ref{thm:piecelin}, Step 2, Case 2}
\label{sec:thm2s2c2}
\paragraph{Case 2. $\bdindexset = \emptyset$.}
We start with a lemma discussing what $\bdindexset = \emptyset$ implies.
\begin{lemma}
	\label{lem:setiempty}
	If $\bdindexset = \emptyset$, the following statements hold:
	\begin{enumerate}
		\item There are some $\bar y_j$'s that are duplicate; i.e.\ for some $i\neq j$, $\bar y_{i} = \bar y_{j}$.
		\item If $\bar y_j$ is non-duplicate, meaning that $\bar y_{j-1} < \bar y_j < \bar y_{j+1}$, $\bar y_j = y_j$ holds.
		\item If $\bar y_j$ is duplicate, $\sum_{i : \bar y_i = \bar y_j} (\bar y_i - y_i) = 0$ holds.
		\item There exists at least one duplicate $\bar y_j$ such that, for that $\bar y_j$, there exist at least two different $i$'s that satisfy $\bar y_i = \bar y_j$ and $\bar y_i \neq y_i$.
	\end{enumerate}
\end{lemma}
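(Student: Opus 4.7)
The proof is a telescoping argument. Let $S_j := \sum_{i \leq j}(\bar y_i - y_i)$ with the convention $S_0 := 0$. The hypothesis $\bdindexset = \emptyset$ says that $S_j = 0$ for every $j \in [m-1]$ with $\bar y_j < \bar y_{j+1}$, and the normal-equation identity $(\bar Y - Y)\ones{m} = 0$ from \eqref{eq:thm2identity} says $S_m = 0$. Combined with the non-fittability assumption~(C\ref{thm:piecelin}.1), which gives $\bar Y \neq Y$, these two families of identities will pin down the structure of $(\bar y_i - y_i)$ and deliver all four claims almost mechanically.

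\textbf{Claims 1--3 by telescoping.} For claim~1 I would argue by contradiction: if all $\bar y_j$ are distinct, then every $j \in [m-1]$ is a gap index, so $S_0 = S_1 = \cdots = S_{m-1} = 0$ together with $S_m = 0$; telescoping $\bar y_j - y_j = S_j - S_{j-1}$ then forces $\bar Y = Y$, contradicting~(C\ref{thm:piecelin}.1). Claim~2 is the same telescoping applied locally: if $\bar y_j$ is non-duplicate, both of the indices $j-1$ and $j$ are either gap indices in $[m-1]$ or the endpoints $\{0,m\}$, so $S_{j-1} = S_j = 0$ and hence $\bar y_j - y_j = 0$. For claim~3, let $[j_1,j_2]$ be the maximal block of indices on which $\bar y_i$ equals the given duplicate value; then $j_1 - 1$ and $j_2$ are each either gap indices or boundary points, so $S_{j_1-1} = S_{j_2} = 0$ and the block sum $\sum_{i=j_1}^{j_2}(\bar y_i - y_i) = S_{j_2} - S_{j_1-1} = 0$.

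\textbf{Claim 4 by contradiction.} Suppose that in every duplicate block at most one index $i$ satisfies $\bar y_i \neq y_i$. By claim~3 the block sum of $(\bar y_i - y_i)$ vanishes, so if a lone exceptional index $i$ did exist, the block sum would reduce to $\bar y_i - y_i \neq 0$, which is a contradiction. Hence no such exception exists in any duplicate block, and combined with claim~2 we obtain $\bar y_i = y_i$ for every $i$, i.e., $\bar Y = Y$, again contradicting~(C\ref{thm:piecelin}.1). This forces the existence of at least one duplicate block containing at least two mismatched indices, as claimed.

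\textbf{Main obstacle.} There is essentially no analytic content; the only subtlety is boundary bookkeeping ($j_1 = 1$ or $j_2 = m$), where one must invoke the global identity $S_m = 0$ in place of a nonexistent gap condition at $j = 0$ or $j = m$, and similarly use $S_0 = 0$ as the empty partial sum. Once these conventions are fixed, all four statements of the lemma fall out of the single telescoping identity $\bar y_j - y_j = S_j - S_{j-1}$ combined with the non-fittability hypothesis~(C\ref{thm:piecelin}.1).
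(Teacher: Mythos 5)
Your proof is correct and follows essentially the same route as the paper: both rest on the observation that $\bdindexset=\emptyset$ forces the partial sums $S_j=\sum_{i\le j}(\bar y_i-y_i)$ to vanish at every index where $\bar y_j<\bar y_{j+1}$, combine this with $S_m=0$ from the normal equations and $S_0=0$, and telescope to get claims 1--3, with claim 4 following from the block-sum identity and the non-fittability assumption. The only difference is presentational --- the paper argues each part in contrapositive form (``otherwise some index lies in $\bdindexset$'') while you argue directly, and you are somewhat more explicit about the boundary cases $j=1$ and $j=m$.
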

\begin{proof}
	We prove this by showing if any of these statements are not true, then we have $\bdindexset \neq \emptyset$ or a contradiction.
	\begin{enumerate}
		\item If all the $\bar y_j$'s are distinct and $\bdindexset = \emptyset$, by definition of $\bdindexset$, $\bar y_j = y_j$ for all $j$. 
		This violates our assumption that linear models cannot perfectly fit $Y$.
		\item If we have $\bar y_j \neq y_j$ for a non-duplicate $\bar y_j$, at least one of the following statements must hold:
		$\sum_{i\leq j-1} (\bar y_i - y_i) \neq 0$ or $\sum_{i\leq j} (\bar y_i - y_i) \neq 0$, meaning that $j-1 \in \bdindexset$ or $j \in \bdindexset$.
		\item Suppose $\bar y_j$ is duplicate and $\sum_{i : \bar y_i = \bar y_j} (\bar y_i - y_i) \neq 0$.
		Let $k = \min \{ i \mid \bar y_i = \bar y_j \}$ and $l = \max \{ i \mid \bar y_i = \bar y_j \}$. 
		Then at least one of the following statements must hold:
		$\sum_{i \leq k-1} (\bar y_i - y_i) \neq 0$ or $\sum_{i \leq l} (\bar y_i - y_i) \neq 0$.
		If $\sum_{i \leq k-1} (\bar y_i - y_i) \neq 0$, we can also see that $\bar y_{k-1} < \bar y_k$, so $k-1 \in \bdindexset$.
		Similarly, if $\sum_{i \leq l} (\bar y_i - y_i) \neq 0$, then $l \in \bdindexset$.
		\item Since $\sum_{i : \bar y_i = \bar y_j} (\bar y_i - y_i) = 0$ holds for any duplicate $\bar y_j$, if $\bar y_i \neq y_i$ holds for one $i$ then there must be at least two of them that satisfies $\bar y_i \neq y_i$. If this doesn't hold for all duplicate $\bar y_i$, with Part 2 this means that $\bar y_j = y_j$ holds for all $j$. This violates our assumption that linear models cannot perfectly fit $Y$.
	\end{enumerate}
\end{proof}
From Lemma~\ref{lem:setiempty}.4, we saw that there is a duplicate value of $\bar y_j$ such that
some of the data points $i$ satisfy $\bar y_i = \bar y_j$ and $\bar y_i \neq y_i$.
The proof strategy in this case is essentially the same, 
but the difference is that we choose one of such duplicate $\bar y_j$,
and then choose a vector $v \in \reals^{d_x}$ to ``perturb'' 
the linear least squares solution $\veccomp{\bar W}{[d_x]}$
in order to break the tie between $i$'s that satisfies $\bar y_i = \bar y_j$ and $\bar y_i \neq y_i$.

We start by defining the minimum among such duplicate values $\bar y^*$ of $\bar y_j$'s, 
and a set of indices $j$ that satisfies $\bar y_j = \bar y^*$. 
\begin{align*}
&\bar y^* = \min \{\bar y_j \mid \exists i \neq j \text{ such that } \bar y_i = \bar y_j \text{ and } \bar y_i \neq y_i \},\\
&\bdindexset^* = \{ j \in [m] \mid \bar y_j = \bar y^* \}.
\end{align*}
Then, we define a subset of $\bdindexset^*$:
\begin{align*}
\bdindexset^*_{\neq} = \{ j \in \bdindexset^* \mid \bar y_j \neq y_j \}.
\end{align*}
By Lemma~\ref{lem:setiempty}.4, cardinality of $\bdindexset^*_{\neq}$ is at least two.
Then, we define a special index in $\bdindexset^*_{\neq}$:
\begin{align*}
j_1 = \argmax_{j \in \bdindexset^*_{\neq}} \ltwo{x_j}, ~
\end{align*}
Index $j_1$ is the index of the ``longest'' $x_j$ among elements in $\bdindexset^*_{\neq}$.
Using the definition of $j_1$, we can partition $\bdindexset^*$ into two sets:
\begin{align*}
\bdindexset^*_{\geq} = \{ j \in \bdindexset^* \mid \< x_j, x_{j_1} \> \geq \ltwo{x_{j_1}}^2 \},~
\bdindexset^*_{<} = \{ j \in \bdindexset^* \mid \< x_j, x_{j_1} \> < \ltwo{x_{j_1}}^2 \}.
\end{align*}
For the indices in $\bdindexset^*$, we can always switch the indices without loss of generality.
So we can assume that $j \leq j_1 = \max \bdindexset^*_{\geq}$ for all $j \in \bdindexset^*_{\geq}$  and $j > j_1$ for all $j \in \bdindexset^*_{<}$.

We now define a vector that will be used as the ``perturbation'' to $\veccomp{\bar W}{[d_x]}$.
Define a vector $v \in \reals^{d_x}$, which is a scaled version of $x_{j_1}$:
\begin{equation*}
v = \frac{g}{M \ltwo{x_{j_1}}} x_{j_1},
\end{equation*}
where the constants $g$ and $M$ are defined to be
\begin{equation*}
g = \frac{1}{4} \min \left \{ | \bar y_i - \bar y_j | \mid i,j \in [m], \bar y_i \neq \bar y_j \right \}, ~
M = \max_{i \in [m]} \ltwo{x_i}.
\end{equation*}
The constant $M$ is the largest $\ltwo{x_i}$ among all the indices, and $g$ is one fourth times the minimum gap between all distinct values of $\bar y_i$.

Now, consider perturbing $\veccomp{\bar W}{[d_x]}$ by a vector $-\alpha v^T$. where $\alpha \in (0,1]$ will be specified later.
Observe that
\begin{align*}
&\left (\bar W - \begin{bmatrix}\alpha v^T & 0\end{bmatrix} \right ) \begin{bmatrix} x_i \\ 1\end{bmatrix} = \bar W \begin{bmatrix} x_i \\ 1\end{bmatrix} - \alpha v^Tx_i = \bar y_i - \alpha v^T x_i.
\end{align*}
Recall that $j \leq j_1 = \max \bdindexset^*_{\geq}$ for all $j \in \bdindexset^*_{\geq}$ and $j > j_1$ for all $j \in \bdindexset^*_{<}$.
We are now ready to present the following lemma:
\begin{lemma}
	\label{lem:roleofbeta}
	Define 
	\begin{align*}
	j_2 = \argmax_{j \in \bdindexset^*_{<} } \< x_j , x_{j_1} \>,~
	\beta = \bar y^* - \frac{\alpha}{2} v^T(x_{j_1}+x_{j_2}).
	\end{align*}
	Then, 
	\begin{align*}
	&\bar y_{i} - \alpha v^T x_{i} - \beta < 0 \text{ for all } i \leq j_1, \\
	&\bar y_{i} - \alpha v^T x_{i} - \beta > 0 \text{ for all } i > j_1.
	\end{align*}
	Also, $\sum_{i > j_1} (\bar y_i - y_i) - \sum_{i \leq j_1} (\bar y_i - y_i) = -2(\bar y_{j_1} - y_{j_1}) \neq 0$.
\end{lemma}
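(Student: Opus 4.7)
The plan is to verify each inequality by a case split on whether $i\in\bdindexset^*$ (so $\bar y_i=\bar y^*$) or not, and to reduce the sum identity, via \eqref{eq:thm2identity}, to the claim $\sum_{i<j_1}(\bar y_i-y_i)=0$.

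Two uniform bounds drive the sign arguments. First, $\ltwo{v}=g/M$ plus Cauchy--Schwarz yields $|v^T x_k|\leq g$ for every $k$, whence $|\alpha v^T(x_i-\tfrac12(x_{j_1}+x_{j_2}))|\leq 2g$ for any $\alpha\in(0,1]$. Second, by the definition of $g$, $|\bar y_i-\bar y^*|\geq 4g$ whenever $\bar y_i\neq\bar y^*$. These two bounds settle both inequalities for every $i$ with $\bar y_i\neq\bar y^*$, because the gap strictly dominates the perturbation. For $i$ with $\bar y_i=\bar y^*$, the residual $\bar y_i-\alpha v^T x_i-\beta$ equals $-\alpha v^T(x_i-\tfrac12(x_{j_1}+x_{j_2}))$, a negative constant multiple of $\dotprod{x_i}{x_{j_1}}-\tfrac12(\ltwo{x_{j_1}}^2+\dotprod{x_{j_1}}{x_{j_2}})$. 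If $i\leq j_1$, then $i\in\bdindexset^*_{\geq}$ gives $\dotprod{x_i}{x_{j_1}}\geq\ltwo{x_{j_1}}^2$, and $j_2\in\bdindexset^*_{<}$ gives $\dotprod{x_{j_1}}{x_{j_2}}<\ltwo{x_{j_1}}^2$, so the bracket is strictly positive and $\bar y_i-\alpha v^T x_i<\beta$. If $i>j_1$, then $i\in\bdindexset^*_{<}$ gives $\dotprod{x_i}{x_{j_1}}\leq\dotprod{x_{j_2}}{x_{j_1}}$ by the choice of $j_2$, and the same strict inequality $\dotprod{x_{j_1}}{x_{j_2}}<\ltwo{x_{j_1}}^2$ makes the bracket strictly negative, giving $\bar y_i-\alpha v^T x_i>\beta$.

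For the sum identity, $(\bar Y-Y)\ones{m}=0$ from \eqref{eq:thm2identity} gives $\sum_i(\bar y_i-y_i)=0$, so the LHS equals $-2\sum_{i\leq j_1}(\bar y_i-y_i)=-2(\bar y_{j_1}-y_{j_1})-2\sum_{i<j_1}(\bar y_i-y_i)$, and it suffices to show that the last sum vanishes. I would partition $\{i<j_1\}$ into $A=\{i:\bar y_i<\bar y^*\}$ and $B=\bdindexset^*_{\geq}\setminus\{j_1\}$. For $A$, the minimality in the definition of $\bar y^*$ combined with Lemma~\ref{lem:setiempty}.2 (non-duplicate case) and Lemma~\ref{lem:setiempty}.3 (duplicate case, whose class-sum is zero) forces $\bar y_i=y_i$ for every $i\in A$. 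For $B$, I would prove $\bdindexset^*_{\geq}\cap\bdindexset^*_{\neq}=\{j_1\}$: any $i$ in the intersection satisfies $\ltwo{x_i}\leq\ltwo{x_{j_1}}$ by the maximality of $j_1$, and $\dotprod{x_i}{x_{j_1}}\geq\ltwo{x_{j_1}}^2$ by membership in $\bdindexset^*_{\geq}$, whence Cauchy--Schwarz forces equality throughout and $x_i=x_{j_1}$, so $i=j_1$ by distinctness (C\ref{thm:piecelin}.2). Hence every $i\in B$ lies outside $\bdindexset^*_{\neq}$, i.e.\ $\bar y_i=y_i$, and the sum over $B$ vanishes. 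Nonvanishing of $\bar y_{j_1}-y_{j_1}$ then follows from $j_1\in\bdindexset^*_{\neq}$.

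The main obstacle is the Cauchy--Schwarz step identifying $\bdindexset^*_{\geq}\cap\bdindexset^*_{\neq}=\{j_1\}$: without it the residual sum over $B$ need not vanish, and the right-hand side would acquire an uncontrolled extra term rather than collapsing cleanly to $-2(\bar y_{j_1}-y_{j_1})$. The sign inequalities themselves are routine once one sees that the constants $g$ and $M$ are calibrated precisely so that the perturbation $|\alpha v^T(\cdot)|$ is strictly smaller than every nonzero gap $|\bar y_i-\bar y^*|$.
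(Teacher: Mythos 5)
Your proposal is correct and follows essentially the same route as the paper: the same calibration of $g$ and $M$ to make the perturbation $|\alpha v^T(\cdot)|$ strictly smaller than every nonzero gap $|\bar y_i-\bar y^*|$, the same inner-product comparisons via $\bdindexset^*_{\geq}$, $\bdindexset^*_{<}$ and the choice of $j_2$ for the tied indices, and the same Cauchy--Schwarz argument identifying $\bdindexset^*_{\geq}\cap\bdindexset^*_{\neq}=\{j_1\}$ to collapse the sum to $-2(\bar y_{j_1}-y_{j_1})$. The only cosmetic differences are that you bound the residual directly against the $4g$ gap rather than first stating the order-preservation inequality, and you invoke $\sum_i(\bar y_i-y_i)=0$ up front rather than after reducing to $\bdindexset^*$; neither changes the substance.
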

\begin{proof}
	First observe that, for any $x_i$, $| \alpha v^T x_i | \leq \alpha \ltwo {v} \ltwo{ x_i} \leq \frac{g}{M} \ltwo{x_i} \leq g$.
	By definition of $g$, we have $2g < \bar y_j-\bar y_i$ for any $\bar y_i < \bar y_j$.
	Using this, we can see that
	\begin{equation}
	\label{eq:vgdef}
	\bar y_i  < \bar y_j \implies \bar y_i - \alpha v^T x_i \leq \bar y_i + g < \bar y_j - g \leq \bar y_j - \alpha v^T x_j .
	\end{equation}
	In words, if $\bar y_i$ and $\bar y_j$ are distinct and there is an order $\bar y_i < \bar y_j$, 
	perturbation of $\veccomp{\bar W}{[d_x]}$ by $-\alpha v^T$ does not change the order.
	Also, since $v$ is only a scaled version of $x_{j_1}$, from the definitions of $\bdindexset^*_{\geq}$ and $\bdindexset^*_{<}$,
	\begin{equation}
	\label{eq:j1def}
	v^T (x_{j} - x_{j_1}) \geq 0 \text{ for } j \in \bdindexset^*_{\geq} \text{ and }
	v^T (x_{j} - x_{j_1}) < 0 \text{ for } j \in \bdindexset^*_{<}. 
	\end{equation}
	By definition of $j_2$, 
	\begin{equation}
	\label{eq:j2def}
	v^T (x_{j_2} - x_{j_1}) < 0 \text{ and } 
	v^T(x_{j_2} - x_j) \geq 0 \text{ for all } j \in \bdindexset^*_{<}.
	\end{equation}
	
	It is left to prove the statement of the lemma using case analysis, using the inequalities \eqref{eq:vgdef}, \eqref{eq:j1def}, and \eqref{eq:j2def}.
	For all $i$'s such that $\bar y_i < \bar y^* = \bar y_{j_1}$,
	\begin{align*}
	\bar y_i - \alpha v^T x_i - \beta
	&=  \bar y_i - \alpha v^T x_i - \bar y^* + \frac{\alpha}{2} v^T(x_{j_1}+x_{j_2})\\
	&= (\bar y_i - \alpha v^T x_i) - (\bar y^* - \alpha v^T x_{j_1} ) + \frac{\alpha}{2} v^T (x_{j_2}-x_{j_1}) < 0.
	\end{align*}
	Similarly, for all $i$ such that $\bar y_i > \bar y^* = \bar y_{j_2}$,
	\begin{equation*}
	\bar y_{i} - \alpha v^T x_i - \beta
	= (\bar y_i - \alpha v^T x_i) - (\bar y^* - \alpha v^T x_{j_2} ) + \frac{\alpha}{2} v^T (x_{j_1}-x_{j_2}) > 0.
	\end{equation*}
	For $j \in \bdindexset^*_{\geq}$ ($j \leq j_1$), we know $\bar y_{j} = \bar y^*$, so
	\begin{align*}
	\bar y_{j} - \alpha v^T x_j - \beta
	&= \left (\bar y^* - \alpha v^T x_j \right  ) - \left (\bar y^* -\frac{\alpha}{2} v^T (x_{j_1} + x_{j_2})\right )\\
	&= \alpha v^T [(x_{j_1} - x_{j})] + \frac{\alpha}{2} v^T [(x_{j_2} - x_{j_1})] < 0.
	\end{align*}
	Also, for $j \in \bdindexset^*_{<}$ ($j > j_1$),
	\begin{align*}
	\bar y_{j} - \alpha v^T x_j - \beta
	&= \left (\bar y^* - \alpha v^T x_j \right  ) - \left (\bar y^* -\frac{\alpha}{2} v^T (x_{j_1} + x_{j_2})\right )\\
	&= \frac{\alpha}{2} v^T[(x_{j_1} - x_{j})+(x_{j_2} - x_{j})] > 0.
	\end{align*}
	This finishes the case analysis and proves the first statements of the lemma.
	
	One last thing to prove is that $\sum_{i > j_1} (\bar y_i - y_i) - \sum_{i \leq j_1} (\bar y_i - y_i) = -2(\bar y_{j_1} - y_{j_1}) \neq 0$.
	Recall from Lemma~\ref{lem:setiempty}.2 that for non-duplicate $\bar y_j$, we have $\bar y_j = y_j$.
	Also by Lemma~\ref{lem:setiempty}.3 if $\bar y_j$ is duplicate, $\sum_{i:\bar y_i = \bar y_j} (\bar y_i - y_i) = 0$.
	So,
	\begin{equation*}
	\sum_{i > j_1} \left (\bar y_i - y_i \right ) - \sum_{i \leq j_1} \left (\bar y_i - y_i \right ) 
	= \sum_{i \in \bdindexset^*_{<}} \left (\bar y_i - y_i \right ) - \sum_{i \in \bdindexset^*_{\geq}} \left (\bar y_i - y_i \right ).
	\end{equation*}
	Recall the definition of $\bdindexset^*_{\neq} = \{ j \in \bdindexset^* \mid \bar y_j \neq y_j \}$.
	For $j \in \bdindexset^* \backslash \bdindexset^*_{\neq}$,
	$\bar y_j = y_j$. So,
	\begin{equation*}
	\sum_{i \in \bdindexset^*_{<}} \left (\bar y_i - y_i \right ) - \sum_{i \in \bdindexset^*_{\geq}} \left (\bar y_i - y_i \right )
	=
	\sum_{i \in \bdindexset^*_{<} \cap \bdindexset^*_{\neq}} \left (\bar y_i - y_i \right ) - \sum_{i \in \bdindexset^*_{\geq} \cap \bdindexset^*_{\neq}} \left (\bar y_i - y_i \right ).
	\end{equation*}
	
	Recall the definition of $j_1 = \argmax_{j \in \bdindexset^*_{\neq}} \ltwo{x_j}$.
	For any other $j \in {\bdindexset^*_{\neq} \backslash \{j_1\}}$,
	\begin{equation*}
	\ltwo{ x_{j_1} }^2 \geq \ltwo{ x_{j} } \ltwo{ x_{j_1} } \geq \< x_{j}, x_{j_1} \>,
	\end{equation*}
	where the first $\geq$ sign is due to definition of $j_1$, and the second is from Cauchy-Schwarz inequality.
	Since $x_{j_1}$ and $x_{j}$ are distinct by assumption, 
	they must differ in either length or direction, or both.
	So, we can check that at least one of ``$\geq$'' must be strict inequality, so 
	$\ltwo{ x_{j_1} }^2 > \< x_{j}, x_{j_1} \>$ for all $j \in {\bdindexset^*_{\neq} \backslash \{j_1\}}$.
	Thus,
	\begin{equation*}
	\bdindexset^*_{\neq} \backslash \{j_1\} = \bdindexset^*_{<} \cap \bdindexset^*_{\neq}
	\text{ and }
	\{j_1\} = \bdindexset^*_{\geq} \cap \bdindexset^*_{\neq},
	\end{equation*}
	proving that 
	\begin{equation*}
	\sum_{i > j_1} \left (\bar y_i - y_i \right ) - \sum_{i \leq j_1} \left (\bar y_i - y_i \right ) 
	= \sum_{j \in \bdindexset^*_{\neq} \backslash \{j_1\}} \left (\bar y_i - y_i \right ) - \left (\bar y_{j_1} - y_{j_1} \right ).
	\end{equation*}
	Also, by Lemma~\ref{lem:setiempty}.3,
	\begin{equation*}
	0 = \sum_{i \in \bdindexset^*} (\bar y_i - y_i) = \sum_{i \in \bdindexset^*_{\neq}} (\bar y_i - y_i)
	= (\bar y_{j_1} - y_{j_1}) + \sum_{j \in \bdindexset^*_{\neq} \backslash \{j_1\}} (\bar y_i - y_i).
	\end{equation*}
	Wrapping up all the equalities, we can conclude that
	\begin{equation*}
	\sum_{i > j_1} \left (\bar y_i - y_i \right ) - \sum_{i \leq j_1} \left (\bar y_i - y_i \right ) 
	= - 2\left (\bar y_{j_1} - y_{j_1} \right ),
	\end{equation*}
	finishing the proof of the last statement.
\end{proof}

It is time to present the parameters $(\tilde W_j, \tilde b_j)_{j=1}^2$, whose empirical risk is strictly smaller than the local minimum $(\hat W_j, \hat b_j)_{j=1}^2$ with a sufficiently small choice of $\alpha \in (0,1]$.
Now, let $\gamma$ be a constant such that 
\begin{equation}
\label{eq:c2gammadef}
\gamma = \sign((\bar y_{j_1} - y_{j_1})(s_+ - s_-)) \frac{\alpha v^T(x_{j_1} - x_{j_2})}{4}.
\end{equation}
Its absolute value is proportional to $\alpha \in (0, 1]$, which is a undetermined number that will be specified at the end of the proof.
Since $|\gamma|$ is small enough, we can check that
\begin{equation*}
\sign(\bar y_i - \alpha v^T x_i - \beta) = \sign(\bar y_i - \alpha v^T x_i - \beta + \gamma) = \sign(\bar y_i - \alpha v^T x_i - \beta - \gamma).
\end{equation*}
Then, assign parameter values
\begin{align*}
&\tilde W_1 = 
\begin{bmatrix}
\veccomp{\bar W}{[d_x]} - \alpha v^T\\
-\veccomp{\bar W}{[d_x]} + \alpha v^T\\
\zeros{(d_1-2) \times d_x}
\end{bmatrix}
,~
\tilde b_1 = 
\begin{bmatrix}
\veccomp{\bar W}{d_x+1}-\beta+\gamma\\
-\veccomp{\bar W}{d_x+1}+\beta+\gamma\\
\zeros{d_1-2}
\end{bmatrix}
,\\
&\tilde W_2 = \frac{1}{s_++s_-}  \begin{bmatrix} 1 & -1 & \zeros{d_1-2}^T \end{bmatrix}
, ~
\tilde b_2 = \beta.
\end{align*}
With these parameter values,
\begin{equation*}
\tilde W_1 x_i + \tilde b_1 = 
\begin{bmatrix}
\bar y_i - \alpha v^T x_i - \beta + \gamma\\
-\bar y_i + \alpha v^T x_i + \beta + \gamma\\
\zeros{d_1-2}
\end{bmatrix}.
\end{equation*}
As we saw in Lemma~\ref{lem:roleofbeta}, for $i \leq j_1$, $\bar y_i - \alpha v^T x_i - \beta + \gamma < 0$ and $- \bar y_i +\alpha v^T x_i + \beta + \gamma > 0$. So 
\begin{align*}
\hat y_i &= 
\tilde W_2 \bar h_{s_+,s_-} (\tilde W_1 x_i + \tilde b_1) + \tilde b_2 \\
&= \frac{1}{ s_++s_-} s_-  ( \bar y_i - \alpha v^T x_i - \beta + \gamma ) 
- \frac{1}{ s_++s_-} s_+ (- \bar y_i +\alpha v^T x_i + \beta + \gamma )
+ \beta\\
&= \bar y_i -\alpha v^T x_i - \frac{s_+ - s_-}{s_+ + s_-} \gamma.
\end{align*}
Similarly, for $i > j_1$, $\bar y_i - \alpha v^T x_i - \beta + \gamma > 0$ and $- \bar y_i +\alpha v^T x_i + \beta + \gamma < 0$, so
\begin{align*}
\hat y_i = \tilde W_2 \bar h_{s_+,s_-} (\tilde W_1 x_i + \tilde b_1) + \tilde b_2
= \bar y_i -\alpha v^T x_i + \frac{s_+ - s_-}{s_+ + s_-} \gamma.
\end{align*}
Now, the squared error loss of this point is
\begin{align*}
&\ell((\tilde W_j, \tilde b_j)_{j=1}^2) = \bhalf \nfro{\hat Y - Y}^2\\
=& \bhalf \sum_{i \leq j_1} \left (\bar y_i -\alpha v^T x_i - \frac{s_+ - s_-}{s_+ + s_-} \gamma - y_i \right )^2 
+ \bhalf \sum_{i > j_1} \left (\bar y_i -\alpha v^T x_i + \frac{s_+ - s_-}{s_+ + s_-} \gamma - y_i \right )^2\\
=& \bhalf \sum_{i=1}^m \left (\bar y_i - \alpha v^T x_i - y_i \right )^2
+ \left [ \sum_{i > j_1} \left (\bar y_i - \alpha  v^T x_i - y_i \right ) 
- \sum_{i \leq j_1} \left (\bar y_i - \alpha v^T x_i - y_i \right ) \right ] \frac{s_+ - s_-}{s_+ + s_-} \gamma
+ O(\gamma^2)\\
=& \ell_0(\bar W) 
- \alpha \left [ \sum_{i=1}^m \left ( \bar y_i - y_i \right ) x_i^T \right ] v 
+ O(\alpha^2) 
+ \left [ \sum_{i > j_1} \left (\bar y_i - y_i \right ) 
- \sum_{i \leq j_1} \left (\bar y_i - y_i \right ) \right ] \frac{s_+ - s_-}{s_+ + s_-} \gamma 
+ O(\alpha\gamma)
+ O(\gamma^2).
\end{align*}
Recall that $\sum_{i=1}^m \left ( \bar y_i - y_i \right ) x_i^T = 0$ for least squares estimates $\bar y_i$.
From Lemma~\ref{lem:roleofbeta}, we saw that
$\sum_{i > j_1} \left (\bar y_i - y_i \right ) - \sum_{i \leq j_1} \left (\bar y_i - y_i \right ) = -2 (\bar y_{j_1} - y_{j_1})$.
As seen in the definition of $\gamma$ \eqref{eq:c2gammadef}, the magnitude of $\gamma$ is proportional to $\alpha$. Substituting \eqref{eq:c2gammadef}, we can express the loss as
\begin{equation*}
\ell((\tilde W_j, \tilde b_j)_{j=1}^2)
= 
\ell_0(\bar W) 
-  \frac{\alpha | (\bar y_{j_1} - y_{j_1} ) (s_+ - s_-) | v^T(x_{j_1} - x_{j_2}) }{2(s_+ + s_-)}
+ O(\alpha^2).
\end{equation*}
Recall that $v^T(x_{j_1} - x_{j_2}) > 0$ from \eqref{eq:j2def}. Then, for sufficiently small $\alpha \in (0,1]$, 
\begin{equation*}
\ell((\tilde W_j, \tilde b_j)_{j=1}^2) < \ell_0(\bar W) = \ell((\hat W_j, \hat b_j)_{j=1}^2),
\end{equation*}
therefore proving that $(\hat W_j, \hat b_j)_{j=1}^2$ is a spurious local minimum.

\section{Proof of Theorem~\ref{thm:othernonlin}}
\label{sec:thm3}
\subsection{Proof of Part 1}
Given $v_1, v_2, v_3, v_4 \in \reals$ satisfying conditions~(C\ref{thm:othernonlin}.1) and (C\ref{thm:othernonlin}.2), we can pick parameter values $(\tilde W_j, \tilde b_j)_{j=1}^2$ to perfectly fit the given dataset:
\begin{align*}
&\tilde W_1 = 
\begin{bmatrix}
v_1 & v_2 \\
v_3 & v_4
\end{bmatrix}
,~
\tilde b_1 = 
\begin{bmatrix}
0\\
0
\end{bmatrix}
,~
\tilde W_2 = 
\left (h(v_3) h \left( \tfrac{v_1+v_2}{2} \right ) -\!h(v_1) h \left( \tfrac{v_3+v_4}{2} \right)\right)^{-1}
[h(v_3) -\!h(v_1)]
,~
\tilde b_2 = 0.
\end{align*}
With these values, we can check that $\hat Y = \begin{bmatrix} 0 & 0 & 1 \end{bmatrix}$,
hence perfectly fitting $Y$, thus the loss $\ell( (\tilde W_j, \tilde b_j)_{j=1}^2 ) = 0$.

\subsection{Proof of Part 2}
Given conditions (C\ref{thm:othernonlin}.3)--(C\ref{thm:othernonlin}.7) on $v_1, v_2, u_1, u_2 \in \reals$, we prove below that there exists a local minimum $(\hat W_j, \hat b_j)_{j=1}^2$ for which  
the output of the network is the same as linear least squares model, and its empirical risk is $\ell((\hat W_j, \hat b_j)_{j=1}^2) = \frac{1}{3}$. If the conditions of Part 1 also hold, this local minimum is strictly inferior to the global one. 

First, compute the output $\bar Y$ of linear least squares model to obtain
$\bar Y = \begin{bmatrix} \frac{1}{3} & \frac{1}{3} & \frac{1}{3} \end{bmatrix}$. Now assign parameter values
\begin{align*}
\hat W_1 = 
\begin{bmatrix}
v_1 & v_1 \\
v_2 & v_2
\end{bmatrix}
,~
\hat b_1 = 
\begin{bmatrix}
0\\
0
\end{bmatrix}
,~
\hat W_2 = 
\begin{bmatrix} u_1 & u_2 \end{bmatrix}
,~
\hat b_2 = 0.
\end{align*}
With these values we can check that $\hat Y = \begin{bmatrix} \frac{1}{3} & \frac{1}{3} & \frac{1}{3} \end{bmatrix}$,
under condition (C\ref{thm:othernonlin}.3): $u_1 h(v_1) + u_2 h(v_2) = \frac{1}{3}$.
The empirical risk is $\ell((\hat W_j, \hat b_j)_{j=1}^2) = \half ( \frac{1}{9} + \frac{1}{9} + \frac{4}{9} ) = \frac{1}{3}$.

It remains to show that this is indeed a local minimum of $\ell$. 
To show this, we apply perturbations to the parameters 
to see if the risk after perturbation is greater than or equal to $\ell((\hat W_j, \hat b_j)_{j=1}^2)$.
Let the perturbed parameters be
\begin{equation}
\label{eq:2}
\begin{split}
&\check W_1 = 
\begin{bmatrix}
v_1+\delta_{11} & v_1+\delta_{12} \\
v_2+\delta_{21} & v_2+\delta_{22}
\end{bmatrix}
,~
\check b_1 = 
\begin{bmatrix}
\beta_1\\
\beta_2
\end{bmatrix}
,~
\check W_2 = 
\begin{bmatrix} u_1+\epsilon_1 & u_2+\epsilon_2 \end{bmatrix}
,~
\check b_2 = \gamma,
\end{split}
\end{equation}
where $\delta_{11}, \delta_{12}, \delta_{21}, \delta_{22}, \beta_1, \beta_2, \epsilon_1, \epsilon_2,$ and $\gamma$ are small real numbers.
The next lemma rearranges the terms in $\ell((\check W_j, \check b_j)_{j=1}^2)$ into a form that helps us prove local minimality of $(\hat W_j, \hat b_j)_{j=1}^2$. Appendix~\ref{sec:thm3techlem} gives the proof of Lemma~\ref{lem:thm3techlem}, which includes as a byproduct some equalities on polynomials that may be of wider interest. 
\begin{lemma}
	\label{lem:thm3techlem}
	Assume there exist real numbers $v_1, v_2, u_1, u_2$ such that conditions~(C\ref{thm:othernonlin}.3)--(C\ref{thm:othernonlin}.5) hold. Then, for perturbed parameters $(\check W_j, \check b_j)_{j=1}^2$ defined in~\eqref{eq:2},
	\begin{align*}
	\ell((\check W_j, \check b_j)_{j=1}^2) \geq 
	\tfrac{1}{3} 
	+ \alpha_1 (\delta_{11}-\delta_{12})^2
	+ \alpha_2 (\delta_{21}-\delta_{22})^2
	+ \alpha_3 (\delta_{11}\!-\!\delta_{12})(\delta_{21}\!-\!\delta_{22}),
	\numberthis \label{eq:erisk3m}
	\end{align*}
	where $\alpha_i = \frac{u_i h''(v_i)}{12} + \frac{u_i^2 (h'(v_i))^2}{4} + o(1)$, for $i = 1,2$, 
	and $\alpha_3 = \frac{u_1 u_2 h'(v_1)h'(v_2)}{2} + o(1)$,
	and $o(1)$ contains terms that diminish to zero as perturbations vanish.
\end{lemma}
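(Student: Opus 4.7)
The plan is to exploit the discrete symmetry of the construction: since $x_3 = \tfrac12(x_1+x_2)$ and the columns of $\hat W_1$ are identical, the ``antisymmetric'' perturbation directions $\Delta_1 := \delta_{11}-\delta_{12}$ and $\Delta_2 := \delta_{21}-\delta_{22}$ are the only ones that can distinguish $r_1$ from $r_2$. First I would change variables by introducing $\bar\beta_j := \beta_j + \tfrac12(\delta_{j1}+\delta_{j2})$, which absorbs the symmetric part of $\delta_{j\cdot}$ into the bias. The pre-activations at the three data points then become $v_i + \bar\beta_i \pm \Delta_i/2$ for the first two points and $v_i + \bar\beta_i$ for the third. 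I will treat $\bar\beta_1,\bar\beta_2,\epsilon_1,\epsilon_2,\gamma$ as fixed parameters of the setup and Taylor-expand only in the essential variables $\Delta_1,\Delta_2$.

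Concretely, set $f(z) := (u_1+\epsilon_1)\,h(v_1+\bar\beta_1+z/2)$ and $g(z) := (u_2+\epsilon_2)\,h(v_2+\bar\beta_2+z/2)$, so that $r_1 = f(\Delta_1)+g(\Delta_2)+\gamma$, $r_2 = f(-\Delta_1)+g(-\Delta_2)+\gamma$, and $r_3 = f(0)+g(0)+\gamma-1$. Conditions (C\ref{thm:othernonlin}.4)--(C\ref{thm:othernonlin}.5) guarantee that $h$ is real-analytic in a neighborhood of each $v_i$ of radius at least $1/c$, so for sufficiently small perturbations $f,g$ are analytic at $0$ and the second-order Taylor expansion $f(z)=f(0)+f'(0)z+\tfrac12 f''(0)z^2 + R_f(z)$ satisfies $|R_f(z)|\le C|z|^3$ with $C$ bounded uniformly over small perturbations; the same holds for $g$.

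Writing $\phi := f(0)+g(0)+\gamma$, $L := f'(0)\Delta_1 + g'(0)\Delta_2$, and $Q := \tfrac12 f''(0)\Delta_1^2 + \tfrac12 g''(0)\Delta_2^2$, the residuals decompose as $r_1 = \phi + L + Q + R_1$, $r_2 = \phi - L + Q + R_2$ with $R_1,R_2=O(\|\Delta\|^3)$, and $r_3 = \phi-1$ is exact. Using the identity $r_1^2+r_2^2 = 2(\phi+Q)^2 + 2L^2 + \text{(cross terms in }R_1,R_2\text{)}$ together with the elementary $(\phi+Q)^2 + \tfrac12(\phi-1)^2 = \tfrac13 + \tfrac32(\phi-\tfrac13)^2 + 2\phi Q + Q^2$, the loss becomes $\ell = \tfrac13 + \tfrac32(\phi-\tfrac13)^2 + 2\phi Q + Q^2 + L^2 + \mathcal{E}$, where $\mathcal{E}$ collects the Taylor-remainder cross-terms and $|\mathcal{E}|\le C'\|\Delta\|^3$. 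Discarding the nonnegative squares $\tfrac32(\phi-\tfrac13)^2$ and $Q^2$ and expanding, I obtain $2\phi Q + L^2 = (\phi f''(0)+f'(0)^2)\Delta_1^2 + (\phi g''(0)+g'(0)^2)\Delta_2^2 + 2f'(0)g'(0)\Delta_1\Delta_2$. As all perturbations vanish, continuity of $h,h',h''$ gives $\phi\to\tfrac13$, $f'(0)\to u_1h'(v_1)/2$, $f''(0)\to u_1h''(v_1)/4$ (and analogously for $g$), so the three bracketed coefficients converge precisely to the stated $\alpha_i^{(0)} = \tfrac{u_i h''(v_i)}{12} + \tfrac{u_i^2 h'(v_i)^2}{4}$ and $\alpha_3^{(0)} = \tfrac{u_1 u_2 h'(v_1)h'(v_2)}{2}$. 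The remainder $-C'\|\Delta\|^3 = -C'\|\Delta\|\cdot\|\Delta\|^2$ is $o(1)\cdot\|\Delta\|^2$ and folds into the $o(1)$ correction to each $\alpha_i$, yielding the claimed inequality.

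The main obstacle is controlling $\mathcal{E}$ uniformly in the symmetric perturbations $\bar\beta,\epsilon,\gamma$; this is exactly what (C\ref{thm:othernonlin}.5) is designed to provide, since the factorial growth bound $|h^{(n)}(v_i)|\le c^n n!$ guarantees an explicit analyticity radius for $h$ about $v_i$ that is independent of the specific perturbation, and hence a uniform cubic bound on the Taylor remainders for $f$ and $g$. Without (C\ref{thm:othernonlin}.5) one could not rule out blow-up of $h^{(3)}$ near $v_i$ as the parameters move, and the clean ``$+o(1)$'' coefficient structure would not survive.
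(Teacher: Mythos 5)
Your proof is correct, and it takes a genuinely different (and leaner) route than the paper's. The paper keeps the full infinite Taylor series of $h$ centered at $v_1,v_2$ in the variables $\delta_{ij}+\beta_i$, splits the loss into ``cross terms'' and ``squared terms,'' and then relies on two combinatorial polynomial identities (Lemmas~\ref{lem:algebralem1} and \ref{lem:algebralem2}, factoring $(a-b)^2$ out of $a^n+b^n-2(\tfrac{a+b}{2})^n$ and its bivariate analogue) to extract the quadratic forms in $\delta_{11}-\delta_{12}$ and $\delta_{21}-\delta_{22}$ term by term. You instead absorb the symmetric part of each row's perturbation into the bias via $\bar\beta_j=\beta_j+\tfrac12(\delta_{j1}+\delta_{j2})$, which collapses the three pre-activations to $v_i+\bar\beta_i\pm\Delta_i/2$ and $v_i+\bar\beta_i$; a single second-order Taylor expansion with a cubic remainder, the identity $(A+L)^2+(A-L)^2=2A^2+2L^2$, and completing the square in $\phi$ then do all the work, and your limits $\phi\to\tfrac13$, $f'(0)\to u_1h'(v_1)/2$, $f''(0)\to u_1h''(v_1)/4$ reproduce the stated $\alpha_i$ exactly (I checked the algebra, including $(\phi+Q)^2+\tfrac12(\phi-1)^2=\tfrac13+\tfrac32(\phi-\tfrac13)^2+2\phi Q+Q^2$ and the absorption of the $O(\|\Delta\|^3)$ remainder into the $o(1)$ corrections of the diagonal coefficients). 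What your approach buys is brevity and the complete elimination of the polynomial lemmas; what the paper's buys is an exact identity \eqref{eq:erisk3} before any term is discarded, and the fact that it only ever evaluates derivatives of $h$ at the points $v_1,v_2$ themselves. Your version evaluates $h',h''$ at the shifted points $v_i+\bar\beta_i$ and asserts that (C\ref{thm:othernonlin}.4)--(C\ref{thm:othernonlin}.5) give analyticity of $h$ in a neighborhood; strictly speaking a derivative bound at a single point does not imply that $h$ equals its Taylor series nearby, but the paper's own proof makes the identical implicit assumption when it writes $h(v_1+\delta_{11}+\beta_1)$ as its Taylor series, and Corollary~\ref{cor:actftnex} only ever instantiates the lemma with real-analytic activations, so this is a shared (and intended) reading rather than a new gap.
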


To make the the sum of the last three terms of \eqref{eq:erisk3m} nonnegative, 
we need to satisfy $\alpha_1 \geq 0$ and $\alpha_3^2 - 4\alpha_1 \alpha_2 \leq 0$; these inequalities  are satisfied for small enough perturbations because of conditions~(C\ref{thm:othernonlin}.6)--(C\ref{thm:othernonlin}.7).
Thus, we conclude that $\ell((\check W_j, \check b_j)_{j=1}^2) \geq \frac{1}{3}  = \ell((\hat W_j, \hat b_j)_{j=1}^2)$ for small enough perturbations, proving that $(\hat W_j, \hat b_j)_{j=1}^2$ is a local minimum.

\section{Proof of Lemma~\ref{lem:thm3techlem}}
\label{sec:thm3techlem}
The goal of this lemma is to prove that
\begin{align*}
\ell((\check W_j, \check b_j)_{j=1}^2) =
&\frac{1}{3} + \frac{3}{2}\text{(perturbations)}^2
+ \left (\frac{u_1 h''(v_1)}{12} + \frac{u_1^2 (h'(v_1))^2}{4} + o(1) \right ) (\delta_{11}-\delta_{12})^2 \\
+&\left (\frac{u_2 h''(v_2)}{12} + \frac{u_2^2 (h'(v_2))^2}{4} + o(1) \right ) (\delta_{21}-\delta_{22})^2 \\
+&\left (\frac{u_1 u_2 h'(v_1)h'(v_2)}{2} + o(1) \right) (\delta_{11}-\delta_{12})(\delta_{21}-\delta_{22}),
\numberthis \label{eq:erisk3}
\end{align*}
where $o(1)$ contains terms that diminish to zero as perturbations decrease.

Using the perturbed parameters,
\begin{equation*}
\check W_1 X + \check b_1 \ones{m}^T = 
\begin{bmatrix}
v_1 + \delta_{11} + \beta_1 & v_1 + \delta_{12} + \beta_1 & v_1 + \frac{\delta_{11}+\delta_{12}}{2} + \beta_1 \\
v_2 + \delta_{21} + \beta_2 & v_2 + \delta_{22} + \beta_2 & v_2 + \frac{\delta_{21}+\delta_{22}}{2} + \beta_2
\end{bmatrix},
\end{equation*}
so the empirical risk can be expressed as
\begin{align*}
&\ell((\check W_j, \check b_j)_{j=1}^2)\\
=& \bhalf \nfro{\check W_2 h \left (\check W_1 X + \check b_1 \ones{m}^T \right ) + \check b_2 \ones{m}^T - Y}^2\\
=& \bhalf \left [ (u_1 + \epsilon_1) h (v_1 + \delta_{11} + \beta_1) + (u_2 + \epsilon_2) h (v_2 + \delta_{21} + \beta_2) + \gamma
\right ]^2 \\
&+ \bhalf \left [ (u_1 + \epsilon_1) h (v_1 + \delta_{12} + \beta_1) + (u_2 + \epsilon_2) h (v_2 + \delta_{22} + \beta_2) + \gamma
\right ]^2 \\      
&+ \bhalf \left [ (u_1 + \epsilon_1) h \left (v_1 + \frac{\delta_{11}+\delta_{12}}{2} + \beta_1 \right) + 
(u_2 + \epsilon_2) h \left (v_2 + \frac{\delta_{21}+\delta_{22}}{2} + \beta_2 \right ) + \gamma - 1
\right ]^2 \numberthis \label{eq:erisk1}
\end{align*}
So, the empirical risk \eqref{eq:erisk1} consists of three terms, one for each training example.
By expanding the activation function $h$ using Taylor series expansion and doing algebraic manipulations, we will derive the equation \eqref{eq:erisk3} from \eqref{eq:erisk1}.

Using the Taylor series expansion, we can express $h(v_1 + \delta_{11} + \beta_1)$ as
\begin{equation*}
h(v_1 + \delta_{11} + \beta_1) = h(v_1) + \sum_{n=1}^{\infty} \frac{h^{(n)}(v_1)}{n!} (\delta_{11}+\beta_1)^n.
\end{equation*}
Using a similar expansion for $h(v_2 + \delta_{21} + \beta_2)$, the first term of \eqref{eq:erisk1} can be written as
\begin{align*}
&\bhalf \left [ (u_1 + \epsilon_1) h(v_1 + \delta_{11} + \beta_1) + (u_2 + \epsilon_2) h(v_2 + \delta_{21} + \beta_2) + \gamma
\right ]^2 \\
=&\bhalf \Bigg [ (u_1 + \epsilon_1) \left ( h(v_1) + \sum_{n=1}^{\infty} \frac{h^{(n)}(v_1)}{n!} (\delta_{11}+\beta_1)^n \right)
+ (u_2 + \epsilon_2) \left ( h(v_2) + \sum_{n=1}^{\infty} \frac{h^{(n)}(v_2)}{n!} (\delta_{21}+\beta_2)^n  \right) 
+ \gamma
\Bigg ]^2 \\
=&\bhalf \Bigg [ \frac{1}{3} + \epsilon_1 h(v_1) + (u_1 + \epsilon_1) \sum_{n=1}^{\infty} \frac{h^{(n)}(v_1)}{n!} (\delta_{11}+\beta_1)^n
+ \epsilon_2 h(v_2) + (u_2 + \epsilon_2) \sum_{n=1}^{\infty} \frac{h^{(n)}(v_2)}{n!} (\delta_{21}+\beta_2)^n
+ \gamma
\Bigg ]^2,
\end{align*}
where we used $u_1 h(v_1) + u_2 h(v_2) = \frac{1}{3}$.
To simplify notation, let us introduce the following function:
\begin{align*}
t(\delta_1, \delta_2) 
=  \epsilon_1 h(v_1) + \epsilon_2 h(v_2) + \gamma
+ (u_1 + \epsilon_1) \sum_{n=1}^{\infty}\frac{h^{(n)}(v_1)}{n!} (\delta_{1}+\beta_1)^n
+ (u_2 + \epsilon_2) \sum_{n=1}^{\infty} \frac{h^{(n)}(v_2)}{n!} (\delta_{2}+\beta_2)^n.
\end{align*}
With this new notation $t(\delta_1, \delta_2)$, after doing similar expansions to the other terms of \eqref{eq:erisk1}, we get
\begin{align*}
&\ell((\check W_j, \check b_j)_{j=1}^2)\\
=& \bhalf \left [ \frac{1}{3} + t(\delta_{11}, \delta_{21}) \right ]^2
+ \bhalf \left [ \frac{1}{3} + t(\delta_{12}, \delta_{22}) \right ]^2
+ \bhalf \left [ -\frac{2}{3} + t\left (\frac{\delta_{11}+\delta_{12}}{2}, \frac{\delta_{21}+\delta_{22}}{2} \right )  \right ]^2\\
= &\frac{1}{3} 
+ \frac{1}{3} \left [ t(\delta_{11}, \delta_{21}) + t(\delta_{12}, \delta_{22}) - 2 t\left (\frac{\delta_{11}+\delta_{12}}{2}, \frac{\delta_{21}+\delta_{22}}{2} \right ) \right ]\\
&+ \bhalf \left [ t(\delta_{11}, \delta_{21}) \right ]^2 
+ \bhalf \left [ t(\delta_{12}, \delta_{22})  \right ]^2 
+ \bhalf \left [ t\left (\frac{\delta_{11}+\delta_{12}}{2}, \frac{\delta_{21}+\delta_{22}}{2} \right ) \right ]^2
\numberthis \label{eq:erisk2}
\end{align*}

Before we show the lower bounds, we first present the following lemmas that will prove useful shortly. These are simple yet interesting lemmas that might be of independent interest.
\begin{lemma}
	\label{lem:algebralem1}
	For $n \geq 2$,
	\begin{equation*}
	a^n + b^n - 2 \left ( \frac{a+b}{2} \right )^n = (a-b)^2 p_{n}(a,b),
	\end{equation*}
	where $p_{n}$ is a polynomial in $a$ and $b$.
	All terms in $p_n$ have degree exactly $n-2$.
	When $n=2$, $p_2(a,b) = \bhalf$.
\end{lemma}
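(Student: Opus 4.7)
The plan is to prove this identity by the change of variables $c = (a+b)/2$ and $t = (a-b)/2$, so that $a = c+t$ and $b = c-t$. With this substitution, the term $2((a+b)/2)^n$ becomes simply $2c^n$, and the binomial theorem gives
\[
a^n + b^n = (c+t)^n + (c-t)^n = 2\sum_{k=0}^{\lfloor n/2\rfloor}\binom{n}{2k}c^{n-2k}t^{2k},
\]
since the odd-power terms in $t$ cancel pairwise. Subtracting $2c^n$ (the $k=0$ term) leaves
\[
a^n + b^n - 2\left(\tfrac{a+b}{2}\right)^n = 2\sum_{k=1}^{\lfloor n/2\rfloor}\binom{n}{2k}c^{n-2k}t^{2k}.
\]

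Next I would factor out the $k=1$ piece $2t^2 = (a-b)^2/2$ to obtain
\[
a^n + b^n - 2\left(\tfrac{a+b}{2}\right)^n = (a-b)^2 \cdot p_n(a,b), \qquad p_n(a,b) := \tfrac{1}{2}\sum_{k=1}^{\lfloor n/2\rfloor}\binom{n}{2k}c^{n-2k}t^{2k-2},
\]
which is manifestly a polynomial in $c,t$, hence in $a,b$. To check the degree claim, each summand $c^{n-2k}t^{2k-2}$ has total degree $(n-2k)+(2k-2) = n-2$, so every term of $p_n$ is homogeneous of degree exactly $n-2$. Finally, the $n=2$ base case reduces to the single term $k=1$, giving $p_2(a,b) = \tfrac{1}{2}\binom{2}{2} = \tfrac{1}{2}$, as claimed.

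There is no substantive obstacle here: the entire argument is a direct computation once one sees that pairing $a = c+t$ with $b = c-t$ makes the cancellation of odd $t$-powers automatic, which is exactly what produces the $(a-b)^2$ factor. An alternative route would be a symmetry argument (noting $q_n(a,b) := a^n+b^n-2((a+b)/2)^n$ is symmetric and vanishes on the diagonal $a=b$, so $(a-b)^2$ must divide it), together with a homogeneity argument to read off the degree, but the binomial calculation above is more constructive and yields an explicit formula for $p_n$ that will likely be convenient for the subsequent lemmas in the appendix.
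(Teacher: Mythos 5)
Your proof is correct, and it takes a cleaner route than the paper does. The paper's proof simply exhibits an explicit closed form for the coefficients of $p_n$ in the monomial basis, namely $p_n(a,b) = \sum_{k=0}^{n-2} \bigl[ k+1 - 2^{-n+1} \sum_{l=0}^k (k+1-l) \binom{n}{l} \bigr] a^{n-k-2} b^k$, and then asserts that the identity ``can be checked by expanding both sides,'' explicitly omitting the algebra. Your change of variables $a = c+t$, $b = c-t$ makes the whole thing transparent: the odd powers of $t$ cancel in $(c+t)^n + (c-t)^n$, the subtracted $2c^n$ is exactly the $k=0$ term, and every surviving term carries a factor $t^{2k}$ with $k \ge 1$, hence a factor $t^2 = (a-b)^2/4$. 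This yields $p_n(a,b) = \tfrac{1}{2}\sum_{k=1}^{\lfloor n/2\rfloor}\binom{n}{2k}\bigl(\tfrac{a+b}{2}\bigr)^{n-2k}\bigl(\tfrac{a-b}{2}\bigr)^{2k-2}$, which is visibly a polynomial, homogeneous of degree $n-2$ in $(c,t)$ and therefore in $(a,b)$ (linear changes of variables preserve homogeneity, which is all the degree claim needs), with the $n=2$ case collapsing to $\tfrac12\binom{2}{2} = \tfrac12$. What each approach buys: the paper's version hands you the coefficients directly in the $a^{n-k-2}b^k$ basis, which is the form it reuses verbatim in Lemma~\ref{lem:algebralem2}, but at the cost of an unverified formula; your version actually proves the statement with no omitted computation, and the property the appendix later relies on --- that for $n \ge 3$ every term of $p_n$ has degree at least one, so $p_n(\cdot) = o(1)$ under small perturbations --- follows just as immediately from your homogeneous form. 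No gap.
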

\begin{proof}
	The exact formula for $p_n(a,b)$ is as the following:
	\begin{equation*}
	p_n(a,b) = \sum_{k=0}^{n-2} \left [ k+1 - 2^{-n+1} \sum_{l=0}^k (k+1-l) \choose{n}{l} \right ] a^{n-k-2} b^k.
	\end{equation*}
	Using this, we can check the lemma is correct just by expanding both sides of the equation.
	The rest of the proof is straightforward but involves some complicated algebra. So, we omit the details for simplicity.
\end{proof}
\begin{lemma}
	\label{lem:algebralem2}
	For $n_1, n_2 \geq 1$,
	\begin{align*}
	&a^{n_1} c^{n_2} + b^{n_1} d^{n_2} - 2 \left ( \frac{a+b}{2} \right )^{n_1} \left ( \frac{c+d}{2} \right )^{n_2}\\
	=& (a-b)^2 q_{n_1,n_2}(a,b,d) + (c-d)^2 q_{n_2,n_1}(c,d,b) + (a-b)(c-d) r_{n_1,n_2} (a,b,c,d)
	\end{align*}
	where $q_{n_1,n_2}$ and $r_{n_1,n_2}$ are polynomials in $a$, $b$, $c$ and $d$. 
	All terms in $q_{n_1,n_2}$ and $r_{n_1,n_2}$ have degree exactly $n_1 + n_2 - 2$.
	When $n_1= n_2 = 1$, $q_{1,1}(a,b,d) = 0$ and $r_{1,1} (a,b,c,d) = \bhalf$.
\end{lemma}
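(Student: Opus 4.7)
The plan is to construct the required polynomials explicitly by exploiting how
$F(a,b,c,d) := a^{n_1}c^{n_2} + b^{n_1}d^{n_2} - 2\bigl(\tfrac{a+b}{2}\bigr)^{n_1}\bigl(\tfrac{c+d}{2}\bigr)^{n_2}$
specializes on the hyperplanes $\{a=b\}$ and $\{c=d\}$. The key observation is that $F(a,b,d,d) = d^{n_2}\bigl(a^{n_1}+b^{n_1}-2(\tfrac{a+b}{2})^{n_1}\bigr)$, which by Lemma~\ref{lem:algebralem1} equals $d^{n_2}(a-b)^2 p_{n_1}(a,b)$; symmetrically, $F(a,a,c,d) = a^{n_1}(c-d)^2 p_{n_2}(c,d)$. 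These two specializations pin down exactly what the coefficients of $(a-b)^2$ and $(c-d)^2$ in the desired decomposition must be.

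Motivated by this, I would \emph{define}
\begin{equation*}
q_{n_1,n_2}(a,b,d) := d^{n_2}\, p_{n_1}(a,b), \qquad q_{n_2,n_1}(c,d,b) := b^{n_1}\, p_{n_2}(c,d),
\end{equation*}
and then set $S(a,b,c,d) := F(a,b,c,d) - (a-b)^2 q_{n_1,n_2}(a,b,d) - (c-d)^2 q_{n_2,n_1}(c,d,b)$. A direct substitution using the two specializations above shows $S(a,a,c,d) \equiv 0$ and $S(a,b,c,c) \equiv 0$. Since $(a-b)$ and $(c-d)$ are coprime irreducibles in the polynomial ring $\reals[a,b,c,d]$, any polynomial vanishing on each of $\{a=b\}$ and $\{c=d\}$ is divisible by their product, which yields a polynomial $r_{n_1,n_2}(a,b,c,d)$ with $S = (a-b)(c-d)\, r_{n_1,n_2}(a,b,c,d)$, completing the decomposition.

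The degree bookkeeping is then immediate: $F$ is homogeneous of degree $n_1+n_2$, while $p_{n_1}$ and $p_{n_2}$ are homogeneous of degrees $n_1-2$ and $n_2-2$ by Lemma~\ref{lem:algebralem1}, so $q_{n_1,n_2}$ and $q_{n_2,n_1}$ are homogeneous of degree $n_1+n_2-2$, and consequently so is $r_{n_1,n_2}$. For the base case $n_1=n_2=1$, we have $p_1\equiv 0$ and therefore $q_{1,1}\equiv 0$; a short direct expansion gives $ac+bd-\tfrac{(a+b)(c+d)}{2} = \tfrac{(a-b)(c-d)}{2}$, so $r_{1,1}\equiv \tfrac12$.

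I expect essentially no deeper obstacle in the proof: the only mildly delicate step is justifying that $S$ is divisible by the product $(a-b)(c-d)$ rather than merely by each factor separately. One can either invoke unique factorization as above, or sidestep it entirely with an elementary two-step remainder argument, first writing $S(a,b,c,d) = (c-d)\,T(a,b,c,d)$ using the root $c=d$, and then verifying $T(a,a,c,d)\equiv 0$ to extract a further factor of $(a-b)$ from $T$. All remaining steps reduce to direct algebraic verification via Lemma~\ref{lem:algebralem1}.
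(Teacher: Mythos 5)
Your proof is correct, and your choice of $q_{n_1,n_2}(a,b,d) = d^{n_2}p_{n_1}(a,b)$ and $q_{n_2,n_1}(c,d,b) = b^{n_1}p_{n_2}(c,d)$ coincides exactly with the closed-form expressions the paper writes down (the paper's formula for $q_{n_1,n_2}$ is literally the coefficient sum defining $p_{n_1}(a,b)$ multiplied by $d^{n_2}$). Where you genuinely diverge is in how the identity is established: the paper also exhibits an explicit double-sum formula for $r_{n_1,n_2}(a,b,c,d)$ and then asserts that the identity ``can be checked by expanding both sides,'' omitting that computation entirely. You instead derive the $q$'s by specializing $F$ to the hyperplanes $\{c=d\}$ and $\{a=b\}$ (reducing to Lemma~\ref{lem:algebralem1}), observe that the remainder $S$ vanishes on both hyperplanes, and extract the factor $(a-b)(c-d)$ by coprimality of the two irreducibles in $\reals[a,b,c,d]$ (or by your two-step remainder argument). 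This is a cleaner and actually complete argument: it proves existence, the homogeneity/degree claim (via homogeneity of $F$, $p_{n_1}$, $p_{n_2}$, and the degree-$2$ factor), and the $n_1=n_2=1$ base case, which is all that the downstream application in Lemma~\ref{lem:thm3techlem} uses --- the paper's explicit formula for $r_{n_1,n_2}$ is never needed elsewhere. The trade-off is that your $r_{n_1,n_2}$ is non-constructive, whereas the paper's version hands you its coefficients explicitly; but since the paper leaves the verification of its formula to the reader, your route arguably carries more actual proof content. The only point worth stating explicitly in a write-up is the (standard) fact that a polynomial vanishing identically on $\{a=b\}$ is divisible by $a-b$, which follows by viewing it as a polynomial in $a$ over $\reals[b,c,d]$ with root $a=b$.
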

\begin{proof}
	The exact formulas for $q_{n_1,n_2}(a,b,d)$, $q_{n_2,n_1}(c,d,b)$, and $r_{n_1,n_2} (a,b,c,d)$ are as the following:
	\begin{align*}
	&q_{n_1,n_2}(a,b,d) = \sum_{k_1=0}^{n_1-2} \left [ k_1+1 - 2^{-n_1+1} \sum_{l_1=0}^{k_1} (k_1+1-l_1) \choose{n_1}{l_1} \right ] a^{n_1-k_1-2} b^{k_1} d^{n_2},\\
	&q_{n_2,n_1}(c,d,b) = \sum_{k_2=0}^{n_2-2} \left [ k_2+1 - 2^{-n_2+1} \sum_{l_2=0}^{k_2} (k_2+1-l_2) \choose{n_2}{l_2} \right ] b^{n_1} c^{n_2-k_2-2} d^{k_2},\\
	&r_{n_1,n_2}(a,b,c,d) = \sum_{k_1=0}^{n_1-1} \sum_{k_2=0}^{n_2-1} \left [ 1 - 2^{-n_1 - n_2 + 1} \sum_{l_1=0}^{k_1} \sum_{l_2 = 0}^{k_2} \choose{n_1}{l_1} \choose{n_2}{l_2} \right ] a^{n_1-k_1-1} b^{k_1} c^{n_2-k_2-1} d^{k_2}.
	\end{align*}
	Similarly, we can check the lemma is correct just by expanding both sides of the equation.
	The remaining part of the proof is straightforward, so we will omit the details.
\end{proof}

Using Lemmas~\ref{lem:algebralem1} and \ref{lem:algebralem2}, we will expand and simplify the ``cross terms'' part and ``squared terms'' part of \eqref{eq:erisk2}.
For the ``cross terms'' in \eqref{eq:erisk2}, let us split $t(\delta_1, \delta_2)$ into two functions $t_1$ and $t_2$:
\begin{align*}
t_1(\delta_1, \delta_2) 
= &  \epsilon_1 h(v_1) + \epsilon_2 h(v_2) + \gamma + (u_1 + \epsilon_1) h'(v_1) (\delta_1 + \beta_1) + (u_2 + \epsilon_2) h'(v_2) (\delta_2 + \beta_2)\\
t_2(\delta_1, \delta_2) 
= & (u_1 + \epsilon_1) \sum_{n=2}^{\infty}\frac{h^{(n)}(v_1)}{n!} (\delta_{1}+\beta_1)^n
+ (u_2 + \epsilon_2) \sum_{n=2}^{\infty} \frac{h^{(n)}(v_2)}{n!} (\delta_{2}+\beta_2)^n,
\end{align*}
so that $t(\delta_1, \delta_2) = t_1(\delta_1, \delta_2) + t_2(\delta_1,\delta_2)$.
It is easy to check that
\begin{equation*}
t_1(\delta_{11}, \delta_{21}) + t_1(\delta_{12}, \delta_{22}) - 2 t_1\left (\frac{\delta_{11}+\delta_{12}}{2}, \frac{\delta_{21}+\delta_{22}}{2} \right ) = 0.
\end{equation*}
Also, using Lemma~\ref{lem:algebralem1}, we can see that
\begin{align*}
(\delta_{11} + \beta_1)^n + (\delta_{12} + \beta_1)^n - 2 \left ( \frac{\delta_{11} + \delta_{12}}{2} + \beta_1 \right )^n 
= (\delta_{11}-\delta_{12})^2 p_{n}(\delta_{11} + \beta_1,\delta_{12} + \beta_1),\\
(\delta_{21} + \beta_2)^n + (\delta_{22} + \beta_2)^n - 2 \left ( \frac{\delta_{21} + \delta_{22}}{2} + \beta_2 \right )^n 
= (\delta_{21}-\delta_{22})^2 p_{n}(\delta_{21} + \beta_2,\delta_{22} + \beta_2),
\end{align*}
so
\begin{align*}
&t_2(\delta_{11}, \delta_{21}) + t_2(\delta_{12}, \delta_{22}) - 2 t_2\left (\frac{\delta_{11}+\delta_{12}}{2}, \frac{\delta_{21}+\delta_{22}}{2} \right )\\
=& (u_1 + \epsilon_1) (\delta_{11}-\delta_{12})^2 \sum_{n=2}^{\infty}\frac{h^{(n)}(v_1)}{n!} p_{n}(\delta_{11} + \beta_1,\delta_{12} + \beta_1)\\
&+(u_2 + \epsilon_2) (\delta_{21}-\delta_{22})^2 \sum_{n=2}^{\infty}\frac{h^{(n)}(v_2)}{n!} p_{n}(\delta_{21} + \beta_2,\delta_{22} + \beta_2).
\end{align*}
Consider the summation
\begin{equation*}
\sum_{n=2}^{\infty}\frac{h^{(n)}(v_1)}{n!} p_{n}(\delta_{11} + \beta_1,\delta_{12} + \beta_1).
\end{equation*}
We assumed that there exists a constant $c > 0$ such that $|h^{(n)}(v_1)| \leq c^n n!$.
From this, for small enough perturbations $\delta_{11}$, $\delta_{12}$, and $\beta_1$, 
we can see that the summation converges, and the summands converge to zero as $n$ increases.
Because all the terms in $p_n$ ($n \geq 3$) are of degree at least one, we can thus write
\begin{equation*}
\sum_{n=2}^{\infty}\frac{h^{(n)}(v_1)}{n!} p_{n}(\delta_{11} + \beta_1,\delta_{12} + \beta_1) = \frac{h''(v_1)}{4} + o(1).
\end{equation*}
So, for small enough $\delta_{11}$, $\delta_{12}$, and $\beta_1$, the term $\frac{h''(v_1)}{4}$ dominates the summation. 
Similarly, as long as $\delta_{21}$, $\delta_{22}$, and $\beta_2$ are small enough, 
the summation $\sum_{n=2}^{\infty}\frac{h^{(n)}(v_2)}{n!} p_{n}(\delta_{21} + \beta_2,\delta_{22} + \beta_2)$ is dominated by $\frac{h''(v_2)}{4}$.
In conclusion, for small enough perturbations, 
\begin{align*}
&t(\delta_{11}, \delta_{21}) + t(\delta_{12}, \delta_{22}) - 2 t\left (\frac{\delta_{11}+\delta_{12}}{2}, \frac{\delta_{21}+\delta_{22}}{2} \right ) \\
=& t_2(\delta_{11}, \delta_{21}) + t_2(\delta_{12}, \delta_{22}) - 2 t_2\left (\frac{\delta_{11}+\delta_{12}}{2}, \frac{\delta_{21}+\delta_{22}}{2} \right ) \\
=& (u_1 + o(1)) \left (\frac{h''(v_1)}{4} + o(1) \right ) (\delta_{11}-\delta_{12})^2 
+ (u_2 + o(1)) \left (\frac{h''(v_2)}{4} + o(1) \right ) (\delta_{21}-\delta_{22})^2\\
=& \left (\frac{u_1 h''(v_1)}{4} + o(1) \right ) (\delta_{11}-\delta_{12})^2 + \left (\frac{u_2 h''(v_2)}{4} + o(1) \right ) (\delta_{21}-\delta_{22})^2. \numberthis \label{eq:erisk2p1}
\end{align*}

Now, it is time to take care of the ``squared terms.'' We will express the terms as 
\begin{align*}
&\bhalf \left [ t(\delta_{11}, \delta_{21}) \right ]^2 
+ \bhalf \left [ t(\delta_{12}, \delta_{22})  \right ]^2 
+ \bhalf \left [ t\left (\frac{\delta_{11}+\delta_{12}}{2}, \frac{\delta_{21}+\delta_{22}}{2} \right ) \right ]^2\\
= &\frac{3}{2} \left [ t\left (\frac{\delta_{11}+\delta_{12}}{2}, \frac{\delta_{21}+\delta_{22}}{2} \right ) \right ]^2
+ \bhalf \left [ t(\delta_{11}, \delta_{21}) \right ]^2 + \bhalf \left [ t(\delta_{12}, \delta_{22})  \right ]^2  - \left [ t\left (\frac{\delta_{11}+\delta_{12}}{2}, \frac{\delta_{21}+\delta_{22}}{2} \right ) \right ]^2,\numberthis \label{eq:erisk2p2}
\end{align*}
and expand and simplify the terms in 
\begin{equation*}
\bhalf \left [ t(\delta_{11}, \delta_{21}) \right ]^2 + \bhalf \left [ t(\delta_{12}, \delta_{22})  \right ]^2  - \left [ t\left (\frac{\delta_{11}+\delta_{12}}{2}, \frac{\delta_{21}+\delta_{22}}{2} \right ) \right ]^2.
\end{equation*}
This time, we split $t(\delta_1, \delta_2)$ in another way, this time into three parts:
\begin{align*}
&t_3 = \epsilon_1 h(v_1) + \epsilon_2 h(v_2) + \gamma,\\
&t_4(\delta_1) = (u_1 + \epsilon_1) \sum_{n=1}^{\infty}\frac{h^{(n)}(v_1)}{n!} (\delta_{1}+\beta_1)^n,\\
&t_5(\delta_2) = (u_2 + \epsilon_2) \sum_{n=1}^{\infty} \frac{h^{(n)}(v_2)}{n!} (\delta_{2}+\beta_2)^n,
\end{align*}
so that $t(\delta_1, \delta_2) = t_3 + t_4(\delta_1) + t_5(\delta_2)$. With this,
\begin{align*}
&\bhalf \left [ t(\delta_{11}, \delta_{21}) \right ]^2 + \bhalf \left [ t(\delta_{12}, \delta_{22})  \right ]^2  
- \left [ t\left (\frac{\delta_{11}+\delta_{12}}{2}, \frac{\delta_{21}+\delta_{22}}{2} \right ) \right ]^2\\
=& t_3 \left [ t_4(\delta_{11}) + t_4(\delta_{12}) - 2 t_4 \left (\frac{\delta_{11}+\delta_{12}}{2} \right) 
+ t_5(\delta_{21}) + t_5(\delta_{22}) - 2 t_5 \left (\frac{\delta_{21}+\delta_{22}}{2} \right) \right ]\\
& + \bhalf \left [ (t_4(\delta_{11}))^2 + (t_4(\delta_{12}))^2 - 2 \left ( t_4 \left (\frac{\delta_{11}+\delta_{12}}{2} \right) \right) ^2 \right ]\\
& + \bhalf \left [ (t_5(\delta_{21}))^2 + (t_5(\delta_{22}))^2 - 2 \left ( t_5 \left (\frac{\delta_{21}+\delta_{22}}{2} \right) \right) ^2 \right ]\\
& + \left [ t_4(\delta_{11})t_5(\delta_{21}) + t_4(\delta_{12}) t_5(\delta_{22}) - 2 t_4 \left (\frac{\delta_{11}+\delta_{12}}{2} \right) t_5 \left (\frac{\delta_{21}+\delta_{22}}{2} \right) \right ]. \numberthis \label{eq:sqdiff}
\end{align*}
We now have to simplify the equation term by term. We first note that 
\begin{align*}
&t_4(\delta_{11}) + t_4(\delta_{12}) - 2 t_4 \left (\frac{\delta_{11}+\delta_{12}}{2} \right) 
+ t_5(\delta_{21}) + t_5(\delta_{22}) - 2 t_5 \left (\frac{\delta_{21}+\delta_{22}}{2} \right)\\
=& t_2(\delta_{11}, \delta_{21}) + t_2(\delta_{12}, \delta_{22}) - 2 t_2\left (\frac{\delta_{11}+\delta_{12}}{2}, \frac{\delta_{21}+\delta_{22}}{2} \right ),
\end{align*}
so 
\begin{align*}
&t_3 \left [ t_4(\delta_{11}) + t_4(\delta_{12}) - 2 t_4 \left (\frac{\delta_{11}+\delta_{12}}{2} \right) 
+ t_5(\delta_{21}) + t_5(\delta_{22}) - 2 t_5 \left (\frac{\delta_{21}+\delta_{22}}{2} \right) \right ]\\
=& t_3 \left [t_2(\delta_{11}, \delta_{21}) + t_2(\delta_{12}, \delta_{22}) - 2 t_2\left (\frac{\delta_{11}+\delta_{12}}{2}, \frac{\delta_{21}+\delta_{22}}{2} \right ) \right ]\\
=& o(1) \left [ \left (\frac{u_1 h''(v_1)}{4} + o(1) \right ) (\delta_{11}-\delta_{12})^2 + \left (\frac{u_2 h''(v_2)}{4} + o(1) \right ) (\delta_{21}-\delta_{22})^2 \right ], \numberthis \label{eq:sqdiffp1}
\end{align*}
as seen in \eqref{eq:erisk2p1}.
Next, we have
\begin{align*}
&(t_4(\delta_{11}))^2 + (t_4(\delta_{12}))^2 - 2 \left ( t_4 \left (\frac{\delta_{11}+\delta_{12}}{2} \right) \right) ^2\\
= &(u_1 + \epsilon_1)^2 \sum_{n_1, n_2 = 1}^{\infty} \frac{h^{(n_1)}(v_1) h^{(n_2)}(v_1)}{n_1! n_2!}
\left [ (\delta_{11}+\beta_1)^{n_1+n_2} + (\delta_{12}+\beta_1)^{n_1+n_2} - 2\left (\frac{\delta_{11}+\delta_{12}}{2}+\beta_1 \right )^{n_1+n_2}\right ],\\
= &(u_1+\epsilon_1)^2 (\delta_{11}-\delta_{12})^2 \sum_{n_1, n_2 = 1}^{\infty} \frac{h^{(n_1)}(v_1) h^{(n_2)}(v_1)}{n_1! n_2!} p_{n_1+n_2} (\delta_{11}+\beta_1 , \delta_{12}+\beta_1)\\
= &\left ( \frac{u_1^2 (h'(v_1))^2}{2} + o(1) \right ) (\delta_{11}-\delta_{12})^2,\numberthis \label{eq:sqdiffp2}
\end{align*}
when perturbations are small enough. We again used Lemma~\ref{lem:algebralem1} in the second equality sign,
and the facts that $p_{n_1+n_2}(\cdot) = o(1)$ whenever $n_1+n_2>2$ and that $p_2(\cdot) = \bhalf$.
In a similar way,
\begin{equation}
(t_5(\delta_{21}))^2 + (t_5(\delta_{22}))^2 - 2 \left ( t_5 \left (\frac{\delta_{21}+\delta_{22}}{2} \right) \right) ^2
= \left ( \frac{u_2^2 (h'(v_2))^2}{2} + o(1) \right ) (\delta_{21}-\delta_{22})^2. \label{eq:sqdiffp3}
\end{equation}
Lastly,
\begin{align*}
&t_4(\delta_{11})t_5(\delta_{21}) + t_4(\delta_{12}) t_5(\delta_{22}) - 2 t_4 \left (\frac{\delta_{11}+\delta_{12}}{2} \right) t_5 \left (\frac{\delta_{21}+\delta_{22}}{2} \right)\\
= &(u_1 + \epsilon_1)(u_2 + \epsilon_2) \sum_{n_1, n_2 = 1}^{\infty} \frac{h^{(n_1)}(v_1) h^{(n_2)}(v_2)}{n_1! n_2!}
\Bigg [ (\delta_{11}+\beta_1)^{n_1} (\delta_{21}+\beta_2)^{n_2} \\
&              + (\delta_{12}+\beta_1)^{n_1} (\delta_{22}+\beta_2)^{n_2} 
- 2\left (\frac{\delta_{11}+\delta_{12}}{2}+\beta_1 \right )^{n_1} \left (\frac{\delta_{21}+\delta_{22}}{2}+\beta_2 \right )^{n_2} \Bigg ],\\
= &(u_1 + \epsilon_1)(u_2 + \epsilon_2) \Bigg [
(\delta_{11}-\delta_{12})^2 \sum_{n_1, n_2 = 1}^{\infty} \frac{h^{(n_1)}(v_1) h^{(n_2)}(v_2)}{n_1! n_2!} 
q_{n_1, n_2} (\delta_{11}+\beta_1, \delta_{12}+\beta_1, \delta_{22}+\beta_2)\\
&    +(\delta_{21}-\delta_{22})^2 \sum_{n_1, n_2 = 1}^{\infty} \frac{h^{(n_1)}(v_1) h^{(n_2)}(v_2)}{n_1! n_2!} 
q_{n_2, n_1} (\delta_{21}+\beta_2, \delta_{22}+\beta_2, \delta_{12}+\beta_1)\\
&    +(\delta_{11}-\delta_{12})(\delta_{21}-\delta_{22}) \sum_{n_1, n_2 = 1}^{\infty} \frac{h^{(n_1)}(v_1) h^{(n_2)}(v_2)}{n_1! n_2!} 
r_{n_1,n_2} (\delta_{11}+\beta_1, \delta_{12}+\beta_1, \delta_{21}+\beta_2, \delta_{22}+\beta_2) \Bigg ]\\
= &(u_1 u_2 + o(1)) \left [ (\delta_{11}-\delta_{12})^2 o(1) + (\delta_{21}-\delta_{22})^2 o(1) 
+ (\delta_{11}-\delta_{12})(\delta_{21}-\delta_{22}) \left (\frac{h'(v_1)h'(v_2)}{2} + o(1) \right) \right ], \numberthis \label{eq:sqdiffp4}
\end{align*}
where the second equality sign used Lemma~\ref{lem:algebralem2} and the third equality sign used 
the facts that $q_{n_1,n_2}(\cdot) = o(1)$ and $r_{n_1,n_2}(\cdot) = o(1)$ whenever $n_1+n_2 > 2$, and that $q_{1,1}(\cdot) = 0$ and $r_{1,1}(\cdot) = \bhalf$.
If we substitute \eqref{eq:sqdiffp1}, \eqref{eq:sqdiffp2}, \eqref{eq:sqdiffp3}, and \eqref{eq:sqdiffp4} into \eqref{eq:sqdiff},
\begin{align*}
&\bhalf \left [ t(\delta_{11}, \delta_{21}) \right ]^2 + \bhalf \left [ t(\delta_{12}, \delta_{22})  \right ]^2  
- \left [ t\left (\frac{\delta_{11}+\delta_{12}}{2}, \frac{\delta_{21}+\delta_{22}}{2} \right ) \right ]^2\\
=& o(1) \left [ \left (\frac{u_1 h''(v_1)}{4} + o(1) \right ) (\delta_{11}-\delta_{12})^2 + \left (\frac{u_2 h''(v_2)}{4} + o(1) \right ) (\delta_{21}-\delta_{22})^2 \right ]\\
& + \bhalf \left ( \frac{u_1^2 (h'(v_1))^2}{2} + o(1) \right ) (\delta_{11}-\delta_{12})^2
+ \bhalf \left ( \frac{u_2^2 (h'(v_2))^2}{2} + o(1) \right ) (\delta_{21}-\delta_{22})^2\\
& + (u_1 u_2 + o(1)) \left [ (\delta_{11}-\delta_{12})^2 o(1) + (\delta_{21}-\delta_{22})^2 o(1) 
+ (\delta_{11}-\delta_{12})(\delta_{21}-\delta_{22}) \left (\frac{h'(v_1)h'(v_2)}{2} + o(1) \right) \right ]\\
=& \left (\frac{u_1^2 (h'(v_1))^2}{4} + o(1) \right ) (\delta_{11}-\delta_{12})^2 + \left (\frac{u_2^2 (h'(v_2))^2}{4} + o(1) \right ) (\delta_{21}-\delta_{22})^2\\
& + \left (\frac{u_1 u_2 h'(v_1)h'(v_2)}{2} + o(1) \right) (\delta_{11}-\delta_{12})(\delta_{21}-\delta_{22}). \numberthis \label{eq:sqdiff2}
\end{align*}

We are almost done. If we substitute \eqref{eq:erisk2p1}, \eqref{eq:erisk2p2}, and \eqref{eq:sqdiff2} into \eqref{eq:erisk2}, we can get
\begin{align*}
&\ell((\check W_j, \check b_j)_{j=1}^2)\\
= &\frac{1}{3} + \frac{3}{2}\left [ t\left (\frac{\delta_{11}+\delta_{12}}{2}, \frac{\delta_{21}+\delta_{22}}{2} \right ) \right ]^2\\
&+ \left (\frac{u_1 h''(v_1)}{12} + o(1) \right ) (\delta_{11}-\delta_{12})^2 + \left (\frac{u_2 h''(v_2)}{12} + o(1) \right ) (\delta_{21}-\delta_{22})^2\\
& + \left (\frac{u_1^2 (h'(v_1))^2}{4} + o(1) \right ) (\delta_{11}-\delta_{12})^2 + \left (\frac{u_2^2 (h'(v_2))^2}{4} + o(1) \right ) (\delta_{21}-\delta_{22})^2\\
& + \left (\frac{u_1 u_2 h'(v_1)h'(v_2)}{2} + o(1) \right) (\delta_{11}-\delta_{12})(\delta_{21}-\delta_{22})\\
=& \frac{1}{3} + \frac{3}{2}\left [ t\left (\frac{\delta_{11}+\delta_{12}}{2}, \frac{\delta_{21}+\delta_{22}}{2} \right ) \right ]^2
+ \left (\frac{u_1 h''(v_1)}{12} + \frac{u_1^2 (h'(v_1))^2}{4} + o(1) \right ) (\delta_{11}-\delta_{12})^2 \\
&+\left (\frac{u_2 h''(v_2)}{12} + \frac{u_2^2 (h'(v_2))^2}{4} + o(1) \right ) (\delta_{21}-\delta_{22})^2
+\left (\frac{u_1 u_2 h'(v_1)h'(v_2)}{2} + o(1) \right) (\delta_{11}-\delta_{12})(\delta_{21}-\delta_{22}),
\end{align*}
which is the equation \eqref{eq:erisk3} that we were originally aiming to show.

\section{Proof of Corollary~\ref{cor:actftnex}}
\label{sec:cor2}
For the proof of this corollary, we present the values of real numbers that satisfy assumptions (C\ref{thm:othernonlin}.1)--(C\ref{thm:othernonlin}.7), for each activation function listed in the corollary:
sigmoid, tanh, arctan, exponential linear units (ELU, \cite{clevert2015fast}), scaled exponential linear units (SELU, \cite{klambauer2017self}).

To remind the readers what the assumptions were, we list the assumptions again.
For (C\ref{thm:othernonlin}.1)--(C\ref{thm:othernonlin}.2), 
there exist real numbers $v_1, v_2, v_3, v_4 \in \reals$ such that 
\begin{enumerate}[label=(C\ref{thm:othernonlin}.\arabic*), leftmargin=0.8in]
	\item $h(v_1) h(v_4) = h(v_2) h(v_3)$,
	\item $h(v_1) h \left ( \frac{v_3+v_4}{2} \right ) \neq h(v_3) h \left (\frac{v_1+v_2}{2} \right )$.
\end{enumerate}
For (C\ref{thm:othernonlin}.3)--(C\ref{thm:othernonlin}.7),
there exist real numbers $v_1, v_2, u_1, u_2 \in \reals$ such that the following assumptions hold:
\begin{enumerate}[label=(C\ref{thm:othernonlin}.\arabic*), leftmargin=0.8in]
	\setcounter{enumi}{2}
	\item $u_1 h(v_1) + u_2 h(v_2) = \frac{1}{3}$,
	\item $h$ is infinitely differentiable at $v_1$ and $v_2$,
	\item There exists a constant $c > 0$ such that $|h^{(n)}(v_1)| \leq c^n n!$ and $|h^{(n)}(v_2)| \leq c^n n!$.
	\item $(u_1 h'(v_1))^2 + \frac{u_1 h''(v_1)}{3} > 0$,
	\item $(u_1 h'(v_1) u_2 h'(v_2))^2 < ( (u_1 h'(v_1))^2 + \frac{u_1 h''(v_1)}{3} ) ( (u_2 h'(v_2))^2 + \frac{u_2 h''(v_2)}{3} )$.
\end{enumerate}
For each function, we now present the appropriate real numbers that satisfy the assumptions.

\subsection{Sigmoid}
When $h$ is sigmoid, 
\begin{equation*}
h (x) = \frac{1}{1+\exp(-x)}, ~ h^{-1}(x) = \log \left ( \frac{x}{1-x} \right ).
\end{equation*}
Assumptions (C\ref{thm:othernonlin}.1)--(C\ref{thm:othernonlin}.2) are satisfied by
\begin{equation*}
(v_1,v_2,v_3,v_4) = \left (h^{-1} \left ( \bhalf \right ), h^{-1}\left (\frac{1}{4}\right ), h^{-1}\left (\frac{1}{4}\right ), h^{-1}\left (\frac{1}{8}\right ) \right ),
\end{equation*}
and assumptions (C\ref{thm:othernonlin}.3)--(C\ref{thm:othernonlin}.7) are satisfied by
\begin{equation*}
(v_1,v_2,u_1,u_1) = \left (h^{-1} \left (\frac{1}{4}\right ), h^{-1}\left (\frac{1}{4}\right ), \frac{2}{3}, \frac{2}{3} \right ).
\end{equation*}
Among them, (C\ref{thm:othernonlin}.4)--(C\ref{thm:othernonlin}.5) follow because sigmoid function is an real analytic function \cite{krantz2002primer}.

\subsection{tanh}
When $h$ is hyperbolic tangent, 
assumptions (C\ref{thm:othernonlin}.1)--(C\ref{thm:othernonlin}.2) are satisfied by
\begin{equation*}
(v_1,v_2,v_3,v_4) = \left (\tanh^{-1} \left (\bhalf\right ), \tanh^{-1}\left (\frac{1}{4}\right ), \tanh^{-1}\left (\frac{1}{4}\right ), \tanh^{-1}\left (\frac{1}{8}\right ) \right ),
\end{equation*}
and assumptions (C\ref{thm:othernonlin}.3)--(C\ref{thm:othernonlin}.7) are satisfied by
\begin{equation*}
(v_1,v_2,u_1,u_1) = \left (\tanh^{-1} \left (\frac{1}{2}\right ), \tanh^{-1}\left (\frac{1}{2}\right ), 1, -\frac{1}{3} \right ),
\end{equation*}
Assumptions (C\ref{thm:othernonlin}.4)--(C\ref{thm:othernonlin}.5) hold because hyperbolic tangent function is real analytic.

\subsection{arctan}
When $h$ is inverse tangent, 
assumptions (C\ref{thm:othernonlin}.1)--(C\ref{thm:othernonlin}.2) are satisfied by
\begin{equation*}
(v_1,v_2,v_3,v_4) = \left (\tan \left (\bhalf\right ), \tan\left (\frac{1}{4}\right ), \tan\left (\frac{1}{4}\right ), \tan \left (\frac{1}{8}\right ) \right ),
\end{equation*}
and assumptions (C\ref{thm:othernonlin}.3)--(C\ref{thm:othernonlin}.7) are satisfied by
\begin{equation*}
(v_1,v_2,u_1,u_1) = \left (\tan \left (\frac{1}{2}\right ), \tan \left (\frac{1}{2}\right ), 1, -\frac{1}{3} \right ),
\end{equation*}
Assumptions (C\ref{thm:othernonlin}.4)--(C\ref{thm:othernonlin}.5) hold because inverse tangent function is real analytic.

\subsection{Quadratic}
When $h$ is quadratic, 
assumptions (C\ref{thm:othernonlin}.1)--(C\ref{thm:othernonlin}.2) are satisfied by
\begin{equation*}
(v_1,v_2,v_3,v_4) = \left (1, \frac{1}{2}, \frac{1}{2}, -\frac{1}{4} \right ),
\end{equation*}
and assumptions (C\ref{thm:othernonlin}.3)--(C\ref{thm:othernonlin}.7) are satisfied by
\begin{equation*}
(v_1,v_2,u_1,u_1) = \left (1, 1, \frac{1}{6}, \frac{1}{6} \right ),
\end{equation*}
Assumptions (C\ref{thm:othernonlin}.4)--(C\ref{thm:othernonlin}.5) hold because quadratic function is real analytic.

\subsection{ELU and SELU}
When $h$ is ELU or SELU,
\begin{equation*}
\begin{array}{ll}
h (x) = \lambda 
\begin{cases}
x & x \geq 0\\
\alpha(\exp(x)-1) & x < 0
\end{cases},~
&h^{-1} (x) =
\begin{cases}
x/\lambda & x \geq 0\\
\log \left ( \frac{x}{\lambda \alpha} + 1 \right ) & x < 0
\end{cases},\\
h' (x) = 
\begin{cases}
\lambda & x \geq 0\\
\lambda \alpha \exp(x) & x < 0
\end{cases},~
&h'' (x) =
\begin{cases}
0 & x \geq 0\\
\lambda \alpha \exp(x) & x < 0
\end{cases},
\end{array}
\end{equation*}
where $\alpha > 0$, and $\lambda = 1$ (ELU) or $\lambda > 1$ (SELU).
In this case, assumptions (C\ref{thm:othernonlin}.1)--(C\ref{thm:othernonlin}.2) are satisfied by
\begin{equation*}
(v_1,v_2,v_3,v_4) = \left (h^{-1} \left (-\frac{\lambda \alpha}{2}\right), h^{-1} \left (-\frac{\lambda \alpha}{4}\right), h^{-1} \left (-\frac{\lambda \alpha}{4}\right), h^{-1} \left (-\frac{\lambda \alpha}{8}\right) \right ).
\end{equation*}
Assumptions (C\ref{thm:othernonlin}.3)--(C\ref{thm:othernonlin}.7) are satisfied by
\begin{equation*}
(v_1,v_2,u_1,u_2) = \left (\frac{1}{3}, \log \left (\frac{2}{3}\right), \frac{2}{\lambda}, \frac{1}{\lambda \alpha} \right ),
\end{equation*}
where (C\ref{thm:othernonlin}.4)--(C\ref{thm:othernonlin}.5) are satisfied because $h(x)$ is real analytic at $v_1$ and $v_2$.

\section{Proof of Theorem~\ref{thm:othernonlin}  for ``ReLU-like'' activation functions.}
\label{sec:thm3forRelus}
Recall the piecewise linear nonnegative homogeneous activation function 
\begin{equation*}
\relulikeactfun(x) = 
\begin{cases}
s_+ x & x \geq 0\\
s_- x & x < 0,
\end{cases}
\end{equation*}
where $s_+>0$, $s_- \geq 0$ and $s_+ \neq s_-$, 
we will prove that the statements of Theorem~\ref{thm:othernonlin} hold for $\relulikeactfun$.

\subsection{Proof of Part 1}
In the case of $s_- > 0$, assumptions (C\ref{thm:othernonlin}.1)--(C\ref{thm:othernonlin}.2) are satisfied by
\begin{equation*}
(v_1,v_2,v_3,v_4) = \left (\frac{1}{s_+}, -\frac{1}{s_-}, -\frac{1}{s_-}, \frac{1}{s_+}\right) .
\end{equation*}
The rest of the proof can be done in exactly the same way as the proof of Theorem~\ref{thm:othernonlin}.1, provided in Appendix~\ref{sec:thm3}.

For $s_- = 0$, which corresponds to the case of ReLU, define parameters
\begin{align*}
&\tilde W_1 = 
\begin{bmatrix}
0 & 2\\
-2 & 1
\end{bmatrix}
,~
\tilde b_1 = 
\begin{bmatrix}
0\\
0
\end{bmatrix}
,~
\tilde W_2 = 
\begin{bmatrix}
\frac{1}{s_+} &-\frac{2}{s_+}
\end{bmatrix}
,~
\tilde b_2 = 0.
\end{align*}
We can check that 
\begin{equation*}
\relulikeactfun(\tilde W_1 X + \tilde b_1 \ones{3}^T) = 
s_+ \begin{bmatrix}
0 & 2 & 1 \\
0 & 1 & 0
\end{bmatrix},
\end{equation*}
so
\begin{equation*}
\tilde W_2\relulikeactfun(\tilde W_1 X + \tilde b_1 \ones{3}^T)+\tilde b_2 \ones{3}^T = 
\begin{bmatrix}
0 & 0 & 1
\end{bmatrix}.
\end{equation*}

\subsection{Proof of Part 2}
Assumptions (C\ref{thm:othernonlin}.3)--(C\ref{thm:othernonlin}.6) are satisfied by
\begin{equation*}
(v_1,v_2,u_1,u_1) = \left (\frac{1}{4 s_+}, \frac{1}{4 s_+}, \frac{2}{3}, \frac{2}{3}\right) .
\end{equation*}
Assign parameter values
\begin{align*}
\hat W_1 = 
\begin{bmatrix}
v_1 & v_1 \\
v_2 & v_2
\end{bmatrix}
,~
\hat b_1 = 
\begin{bmatrix}
0\\
0
\end{bmatrix}
,~
\hat W_2 = 
\begin{bmatrix} u_1 & u_2 \end{bmatrix}
,~
\hat b_2 = 0.
\end{align*}
It is easy to compute that the output of the neural network is $\hat Y = \begin{bmatrix} \frac{1}{3} & \frac{1}{3} & \frac{1}{3} \end{bmatrix}$,
so $\ell((\hat W_j, \hat b_j)_{j=1}^2) = \frac{1}{3}$.

Now, it remains to show that this is indeed a local minimum of $\ell$. 
To show this, we apply perturbations to the parameters 
to see if the risk after perturbation is greater than or equal to $\ell((\hat W_j, \hat b_j)_{j=1}^2)$.
Let the perturbed parameters be
\begin{align*}
\check W_1 = 
\begin{bmatrix}
v_1+\delta_{11} & v_1+\delta_{12} \\
v_2+\delta_{21} & v_2+\delta_{22}
\end{bmatrix}
,~
\check b_1 = 
\begin{bmatrix}
\beta_1\\
\beta_2
\end{bmatrix}
,
\check W_2 = 
\begin{bmatrix} u_1+\epsilon_1 & u_2+\epsilon_2 \end{bmatrix}
,~
\check b_2 = \gamma,
\end{align*}
where $\delta_{11}, \delta_{12}, \delta_{21}, \delta_{22}, \beta_1, \beta_2, \epsilon_1, \epsilon_2,$ and $\gamma$ are small enough real numbers.

Using the perturbed parameters,
\begin{equation*}
\check W_1 X + \check b_1 \ones{m}^T = 
\begin{bmatrix}
v_1 + \delta_{11} + \beta_1 & v_1 + \delta_{12} + \beta_1 & v_1 + \frac{\delta_{11}+\delta_{12}}{2} + \beta_1 \\
v_2 + \delta_{21} + \beta_2 & v_2 + \delta_{22} + \beta_2 & v_2 + \frac{\delta_{21}+\delta_{22}}{2} + \beta_2
\end{bmatrix},
\end{equation*}
so the empirical risk can be expressed as
\begin{align*}
&\ell((\check W_j, \check b_j)_{j=1}^2)\\
=& \bhalf \nfro{\check W_2 \bar h_{s_+,s_-} \left (\check W_1 X + \check b_1 \ones{m}^T \right ) + \check b_2 \ones{m}^T - Y}^2\\
=& \bhalf \left [ (u_1 + \epsilon_1) s_+ (v_1 + \delta_{11} + \beta_1) + (u_2 + \epsilon_2) s_+ (v_2 + \delta_{21} + \beta_2) + \gamma
\right ]^2 \\
&+ \bhalf \left [ (u_1 + \epsilon_1) s_+ (v_1 + \delta_{12} + \beta_1) + (u_2 + \epsilon_2) s_+ (v_2 + \delta_{22} + \beta_2) + \gamma
\right ]^2 \\      
&+ \bhalf \left [ (u_1 + \epsilon_1) s_+ \left (v_1 + \frac{\delta_{11}+\delta_{12}}{2} + \beta_1 \right) + 
(u_2 + \epsilon_2) s_+ \left (v_2 + \frac{\delta_{21}+\delta_{22}}{2} + \beta_2 \right ) + \gamma - 1
\right ]^2.
\end{align*}

To simplify notation, let us introduce the following function:
\begin{align*}
t(\delta_1, \delta_2) 
= s_+ \epsilon_1 v_1 + s_+ \epsilon_2 v_2 + \gamma + s_+ (u_1 + \epsilon_1) (\delta_1 + \beta_1) + s_+ (u_2 + \epsilon_2) (\delta_2 + \beta_2)
\end{align*}
It is easy to check that
\begin{equation*}
t(\delta_{11}, \delta_{21}) + t(\delta_{12}, \delta_{22}) - 2 t\left (\frac{\delta_{11}+\delta_{12}}{2}, \frac{\delta_{21}+\delta_{22}}{2} \right ) = 0.
\end{equation*}
With this new notation $t(\delta_1, \delta_2)$, we get
\begin{align*}
&\ell((\check W_j, \check b_j)_{j=1}^2)\\
=& \bhalf \left [ \frac{1}{3} + t(\delta_{11}, \delta_{21}) \right ]^2
+ \bhalf \left [ \frac{1}{3} + t(\delta_{12}, \delta_{22}) \right ]^2
+ \bhalf \left [ -\frac{2}{3} + t\left (\frac{\delta_{11}+\delta_{12}}{2}, \frac{\delta_{21}+\delta_{22}}{2} \right )  \right ]^2\\
= &\frac{1}{3} 
+ \frac{1}{3} \left [ t(\delta_{11}, \delta_{21}) + t(\delta_{12}, \delta_{22}) - 2 t\left (\frac{\delta_{11}+\delta_{12}}{2}, \frac{\delta_{21}+\delta_{22}}{2} \right ) \right ]\\
&+ \bhalf \left [ t(\delta_{11}, \delta_{21}) \right ]^2 
+ \bhalf \left [ t(\delta_{12}, \delta_{22})  \right ]^2 
+ \bhalf \left [ t\left (\frac{\delta_{11}+\delta_{12}}{2}, \frac{\delta_{21}+\delta_{22}}{2} \right ) \right ]^2
\geq \frac{1}{3} = \ell((\hat W_j, \hat b_j)_{j=1}^2).
\end{align*}

\section{Proof of Theorem~\ref{thm:linear}}
\label{sec:thm1}
Before we start, note the following partial derivatives, which can be computed using straightforward matrix calculus:
\begin{equation*}
\tfrac{\partial \ell}{\partial W_j} = (W_{H+1:j+1})^T \gradlzero(W_{H+1:1}) (W_{j-1:1})^T,
\end{equation*}
for all $j \in [H+1]$.

\subsection{Proof of Part 1, if $d_y \geq d_x$}
For Part 1, we must show that if $\gradlzero(\hat W_{H+1:1}) \neq 0$ then $(\hat W_j)_{j=1}^{H+1}$ is a saddle point of $\ell$. Thus, we show that $(\hat W_j)_{j=1}^{H+1}$ is neither a local minimum nor a local maximum. More precisely, for each $j$, let $\mc B_\epsilon (W_j)$ be an $\epsilon$-Frobenius-norm-ball centered at $W_j$, and $\prod_{j=1}^{H+1} \mc B_\epsilon(W_j)$ their Cartesian product. We wish to show that for every $\epsilon > 0$, there exist tuples $(P_j)_{j=1}^{H+1}$,  $(Q_j)_{j=1}^{H+1} \in \prod\nolimits_{j=1}^{H+1} \mc B_\epsilon(\hat W_j)$ such that
\begin{equation}
\label{eq:thm1p1stmt}
\ell( (P_j)_{j=1}^{H+1} ) > \ell((\hat W_j)_{j=1}^{H+1}) > \ell((Q_j)_{j=1}^{H+1}).
\end{equation}

To prove~\eqref{eq:thm1p1stmt}, we exploit $\ell( (\hat W_j)_{j=1}^{H+1} ) = \ell_0(\hat W_{H+1:1})$, and the assumption $\gradlzero(\hat W_{H+1:1}) \neq 0$. The key idea is to perturb the tuple $(\hat W_j)_{j=1}^{H+1}$ so that the directional derivative of $\ell_0$ along $P_{H+1:1} - \hat W_{H+1:1}$ is positive. 
Since $\ell_0$ is differentiable, if $P_{H+1:1} - \hat W_{H+1:1}$ is small, then
\begin{equation*}
\ell( (P_j)_{j=1}^{H+1} )\!=\!\ell_0(P_{H+1:1})\!>\!\ell_0(\hat W_{H+1:1})\!=\!\ell( (\hat W_j)_{j=1}^{H+1} ).
\end{equation*}
Similarly, we can show $\ell( (Q_j)_{j=1}^{H+1}) < \ell( (\hat W_j)_{j=1}^{H+1})$. The key challenge lies in constructing these perturbations; we outline our approach below; this construction may be of independent interest too.
For this section, we assume that $d_x \geq d_y$ for simplicity; the case $d_y \geq d_x$ is treated in Appendix~\ref{sec:thm1p1y}.

Since $\gradlzero(\hat W_{H+1:1}) \neq 0$, $\colsp(\gradlzero(\hat W_{H+1:1}))^\perp$ must be a strict subspace of $\reals^{d_y}$.
Consider $\partial \ell/\partial W_1$ at a critical point to see that $(\hat W_{H+1:2})^T \gradlzero(\hat W_{H+1:1}) = 0$, so $\colsp(\hat W_{H+1:2}) \subseteq \colsp(\gradlzero(\hat W_{H+1:1}))^\perp \subsetneq \reals^{d_y}$.
This strict inclusion implies $\rank(\hat W_{H+1:2}) < d_y \leq d_1$, so that 
$\nulsp(\hat W_{H+1:2})$ is not a trivial subspace. Moreover,  $\nulsp(\hat W_{H+1:2}) \supseteq \nulsp(\hat W_{H :2}) \supseteq \dots \supseteq \nulsp(\hat W_2)$.
We can split the proof into two cases: $\nulsp(\hat W_{H+1:2}) \neq \nulsp(\hat W_{H :2})$ and $\nulsp(\hat W_{H+1:2}) = \nulsp(\hat W_{H :2})$.

Let the SVD of $\gradlzero(\hat W_{H+1:1}) = U_l \Sigma U_r^T$.
Recall $\matcol{U_l}{1}$ and $\matcol{U_r}{1}$ denote first columns of $U_l$ and $U_r$, respectively.

\paragraph{Case 1: $\nulsp(\hat W_{H+1:2}) \neq \nulsp(\hat W_{H :2})$.}
In this case, $\nulsp(\hat W_{H+1:2}) \supsetneq \nulsp(\hat W_{H:2})$. 
We will perturb $\hat W_1$ and $\hat W_{H+1}$ to obtain the tuples $(P_j)_{j=1}^{H+1}$ and $(Q_j)_{j=1}^{H+1}$. 
To create our perturbation, we choose two unit vectors as follows:
\begin{equation*}
v_0 = \matcol{U_r}{1},~
v_1 \in \nulsp(\hat W_{H+1:2}) \cap \nulsp(\hat W_{H:2})^\perp.
\end{equation*}
Then, define $\Delta_1 \defeq \epsilon v_1 v_0^T \in \reals^{d_1 \times d_x}$, and $V_1 \defeq \hat W_1+\Delta_1 \in \mc B_\epsilon(\hat W_1)$.
Since $v_1$ lies in $\nulsp(\hat W_{H+1:2})$, observe that
\begin{equation*}
\hat W_{H+1:2} V_1 = \hat W_{H+1:1} + \epsilon \hat W_{H+1:2} v_1 v_0^T = \hat W_{H+1:1}.
\end{equation*}
With this definition of $V_1$, we can also see that
\begin{align*}
\gradlzero(\hat W_{H+1:1}) V_1^T (\hat W_{H:2})^T
= \gradlzero(\hat W_{H+1:1})  (\hat W_{H:1})^T + \epsilon \gradlzero(\hat W_{H+1:1}) v_0 v_1^T(\hat W_{H:2})^T.
\end{align*}
Note that $\gradlzero(\hat W_{H+1:1})  (\hat W_{H:1})^T$ is equal to $\partial \ell/\partial W_{H+1}$ at a critical point, hence is zero.
Since $v_0  = \matcol{U_r}{1}$, we have $\gradlzero(\hat W_{H+1:1}) v_0 = \sigmax(\gradlzero(\hat W_{H+1:1})) \matcol{U_l}{1}$,
which is a nonzero column vector,
and since $v_1 \in \nulsp(\hat W_{H:2})^\perp = \rowsp(\hat W_{H:2})$, $v_1^T (\hat W_{H:2})^T$ is a nonzero row vector.
From this observation, $\gradlzero(\hat W_{H+1:1}) v_0 v_1^T (\hat W_{H:2})^T$ is nonzero, and so is $\gradlzero(\hat W_{H+1:1}) V_1^T (\hat W_{H:2})^T$.

We are now ready to define the perturbation on $\hat W_{H+1}$:
\begin{equation*}
\Delta_{H+1} \defeq \frac{\epsilon \gradlzero(\hat W_{H+1:1}) V_1^T (\hat W_{H:2})^T}{\nfro{\gradlzero(\hat W_{H+1:1}) V_1^T (\hat W_{H:2})^T}},
\end{equation*}
so that $\hat W_{H+1} + \Delta_{H+1} \in B_\epsilon (\hat W_{H+1})$.
Then, observe that
\begin{align*}
\dotprod{\Delta_{H+1} \hat W_{H:2} V_1}{\gradlzero(\hat W_{H+1:1})}
= \dotprod{\Delta_{H+1}}{\gradlzero (\hat W_{H+1:1}) V_1^T (\hat W_{H:2})^T} > 0,
\end{align*}
by definition of $\Delta_{H+1}$. 
In other words, $\Delta_{H+1} \hat W_{H:2} V_1$ is an ascent direction of $\ell_0$ at $\hat W_{H+1:1}$.
Now choose the tuples
\begin{align*}
&(P_j)_{j=1}^{H+1} = (V_1, \hat W_2, \dots, \hat W_H, \hat W_{H+1} + \eta \Delta_{H+1}),\\
&(Q_j)_{j=1}^{H+1} = (V_1, \hat W_2, \dots, \hat W_H, \hat W_{H+1} - \eta \Delta_{H+1}),
\end{align*}
where $\eta \in (0,1]$ is chosen suitably. It is easy to verify that $(P_j)_{j=1}^{H+1} ,(Q_j)_{j=1}^{H+1} \in \prod_{j=1}^{H+1} \mc B_\epsilon(\hat W_j)$, and that the products
\begin{align*}
&P_{H+1:1} = \hat W_{H+1:1} + \eta \Delta_{H+1} \hat W_{H:2}V_1,\\
&Q_{H+1:1} = \hat W_{H+1:1} - \eta \Delta_{H+1} \hat W_{H:2}V_1.
\end{align*}
Since $\ell_0$ is differentiable, for small enough $\eta \in (0,1]$,
$\ell_0(P_{H+1:1}) > \ell_0(\hat W_{H+1:1}) > \ell_0(Q_{H+1:1})$,
proving \eqref{eq:thm1p1stmt}.
This construction is valid for any $\epsilon > 0$, so we are done.
\paragraph{Case 2: $\nulsp(\hat W_{H+1:2}) = \nulsp(\hat W_{H :2})$.}
By and large, the proof of this case goes the same, except that we need a little more care on what perturbations to make.
Define 
\begin{equation*}
j^* = \max \{ j \in [2,H] \mid \nulsp(\hat W_{j:2}) \supsetneq \nulsp(\hat W_{j-1:2}) \}.
\end{equation*}
When you start from $j = H$ down to $j = 2$ and compare $\nulsp(\hat W_{j:2})$ and $\nulsp(\hat W_{j-1:2})$,
the first iterate $j$ at which you have $\nulsp(\hat W_{j:2}) \neq \nulsp(\hat W_{j-1:2})$ is $j^*$.
If all null spaces of matrices from $\hat W_{H:2}$ to $\hat W_2$ are equal, $j^* = 2$ which follows from the notational convention that $\nulsp(\hat W_{1:2}) = \nulsp(I_{d_1}) = \{0 \}$.
According to $j^*$,
in Case 2 we perturb $\hat W_1$, $\hat W_{H+1}$, $\hat W_H$, $\dots$, $\hat W_{j^*}$ to get $(P_j)_{j=1}^{H+1}$ and $(Q_j)_{j=1}^{H+1}$.

Recall the definition of left-null space of matrix $A$: $\lnulsp(A) = \{ v \mid v^T A = 0\}$.
By definition of $j^*$, note that
\begin{align*}
&\nulsp(\hat W_{H+1:2}) = \nulsp(\hat W_{H:2}) = \cdots = \nulsp(\hat W_{j^*:2}) \\
\iff & \rowsp(\hat W_{H+1:2}) = \rowsp(\hat W_{H:2}) = \cdots = \rowsp(\hat W_{j^*:2})\\
\iff & \rank(\hat W_{H+1:2}) = \rank(\hat W_{H:2}) = \cdots = \rank(\hat W_{j^*:2}),
\end{align*}
which means the products are all rank-deficient (recall $\rank(\hat W_{H+1:2}) < d_y$ and all $d_j \geq d_y$),
and hence they all have nontrivial left-null spaces $\lnulsp(\hat W_{H:2}), \dots, \lnulsp(\hat W_{j^*:2})$ as well.

We choose some unit vectors as the following:
\begin{align*}
&v_0 = \matcol{U_r}{1},\\
&v_1 \in \nulsp(\hat W_{j^*:2}) \cap \nulsp(\hat W_{j^*-1:2})^\perp,\\
&v_{H+1} = \matcol{U_l}{1},\\
&v_{H} \in \lnulsp(\hat W_{H:2}),\\    
&\cdots\\
&v_{j^*} \in \lnulsp(\hat W_{j^*:2}).
\end{align*}
Then, for a $\gamma \in (0,\epsilon]$ whose value will be specified later, define
\begin{align*}
&\Delta_1 \defeq \gamma v_1 v_0^T \in \reals^{d_1 \times d_x},\\
&\Delta_{H+1} \defeq \gamma v_{H+1} v_{H}^T \in \reals^{d_y \times d_H},\\
&\cdots\\
&\Delta_{j^*+1} \defeq \gamma v_{j^*+1} v_{j^*}^T \in \reals^{d_{j^*+1} \times d_{j^*}},
\end{align*}
and $V_j \defeq \hat W_j + \Delta_j$ accordingly for $j = 1, j^*+1, \dots, H+1$.

By definition of $\Delta_j$'s, note that
\begin{align*}
&V_{H+1:j^*+1}\hat W_{j^*:2}V_1\\
=& V_{H+1:j^*+2}\hat W_{j^*+1:2}V_1 + V_{H+1:j^*+2}\Delta_{j^*+1} \hat W_{j^*:2}V_1
= V_{H+1:j^*+2}\hat W_{j^*+1:2}V_1 \numberthis \label{eq:can1dx}\\
=& V_{H+1:j^*+3}\hat W_{j^*+2:2}V_1 + V_{H+1:j^*+3}\Delta_{j^*+2} \hat W_{j^*+1:2}V_1
= V_{H+1:j^*+3}\hat W_{j^*+2:2}V_1 \numberthis \label{eq:can2dx}\\
=& \cdots\\
=& \hat W_{H+1:2}V_1 + \Delta_{H+1} \hat W_{H:2}V_1
= \hat W_{H+1:2}V_1 \numberthis \label{eq:can3dx}\\
=& \hat W_{H+1:1} + \hat W_{H+1:2}\Delta_1
= \hat W_{H+1:1}, \numberthis \label{eq:can4dx}
\end{align*}
where in \eqref{eq:can1dx} we used the definition that $v_{j^*} \in \lnulsp(\hat W_{j^*:2})$,
in \eqref{eq:can2dx} that $v_{j^*+1} \in \lnulsp(\hat W_{j^*+1:2})$,
in \eqref{eq:can3dx} that $v_{H} \in \lnulsp(\hat W_{H:2})$,
and in \eqref{eq:can4dx} that $v_1 \in \nulsp(\hat W_{j^*:2})$.

Now consider the following matrix product:
\begin{align*}
&(V_{H+1:j^*+1})^T \gradlzero(\hat W_{H+1:1}) V_1^T (\hat W_{j^*-1:2})^T\\
= &(\hat W_{j^*+1} + \Delta_{j^*+1})^T \cdots (\hat W_{H+1} + \Delta_{H+1})^T \gradlzero(\hat W_{H+1:1}) (\hat W_1 + \Delta_1)^T \hat W_2^T \cdots \hat W_{j^*-1}^T. \numberthis \label{eq:fullproddx}
\end{align*}
We are going to show that for small enough $\gamma \in (0,\epsilon]$, this product is nonzero.
If we expand \eqref{eq:fullproddx}, there are many terms in the summation. 
However, note that the expansion can be arranged in the following form:
\begin{align*}
&(\hat W_{j^*+1} + \Delta_{j^*+1})^T \cdots (\hat W_{H+1} + \Delta_{H+1})^T \gradlzero(\hat W_{H+1:1}) (\hat W_1 + \Delta_1)^T \hat W_2^T \cdots \hat W_{j^*-1}^T\\
=& C_0 + C_1 \gamma +  C_2 \gamma^2 + \cdots +  C_{H-j^*+2} \gamma^{H-j^*+2} \numberthis \label{eq:arrproddx}
\end{align*}
where $C_j \in \reals^{d_{j^*} \times d_{j^*-1}}$ for all $j$ and $C_j$ doesn't depend on $\gamma$, and specifically
\begin{align*}
&C_0 = \hat W_{j^*+1}^T \cdots \hat W_{H+1}^T \gradlzero(\hat W_{H+1:1}) \hat W_1^T \hat W_2^T \cdots \hat W_{j^*-1}^T,\\
&C_{H-j^*+2} = \frac{1}{\gamma^{H-j^*+2} } \Delta_{j^*+1}^T \cdots \Delta_{H+1}^T \gradlzero(\hat W_{H+1:1}) \Delta_1^T \hat W_2^T \cdots \hat W_{j^*-1}^T.
\end{align*}
Because the $C_0$ is exactly equal to $\frac{\partial \ell}{\partial W_{j^*}}$ evaluated at a critical point $( (\hat W_j)_{j=1}^{H+1} )$, $C_0 = 0$.
Also, due to definitions of $\Delta_j$'s,
\begin{align*}
C_{H-j^*+2}
=&(v_{j^*} v_{j^*+1}^T ) (v_{j^*+1} v_{j^*+2}^T )\cdots (v_{H} v_{H+1}^T ) \gradlzero(\hat W_{H+1:1}) (v_{0} v_{1}^T ) (\hat W_{j^*-1:2})^T\\
=&v_{j^*} v_{H+1}^T \gradlzero(\hat W_{H+1:1}) v_{0} v_{1}^T (\hat W_{j^*-1:2})^T.
\end{align*}
First, $v_{j^*}$ is a nonzero column vector.
Since $v_{H+1} =\matcol{U_l}{1}$ and $v_0 = \matcol{U_r}{1}$,
$v_{H+1}^T \gradlzero(\hat W_{H+1:1}) v_{0} = \sigmax(\gradlzero(\hat W_{H+1:1})) > 0$.
Also, since $v_1 \in \rowsp(\hat W_{j^*-1:2})$, $v_{1}^T (\hat W_{j^*-1:2})^T$ will be a nonzero row vector.
Thus, the product $C_{H-j^*+2}$ will be nonzero.

Since $C_{H-j^*+2} \neq 0$, we can pick any index $(\alpha, \beta)$ such that the $(\alpha,\beta)$-th entry of $C_{H-j^*+2}$, denoted as $[C_{H-j^*+2}]_{\alpha, \beta}$, is nonzero.
Then, the $(\alpha,\beta)$-th entry of \eqref{eq:arrproddx} can be written as
\begin{equation}
\label{eq:fullprodentrydx}
c_1 \gamma + c_2 \gamma^2 + \cdots +  c_{H-j^*+2} \gamma^{H-j^*+2},
\end{equation}
where $c_j = \matent{C_j}{\alpha}{\beta}$.
To show that the matrix product \eqref{eq:fullproddx} is nonzero, 
it suffices to show that its $(\alpha,\beta)$-th entry \eqref{eq:fullprodentrydx} is nonzero.
If $c_1 = \cdots = c_{H-j^*+1} = 0$, then with the choice of $\gamma = \epsilon$, \eqref{eq:fullprodentrydx} is trivially nonzero.
If some of $c_1, \dots, c_{H-j^*+1}$ are nonzero, we can scale $\gamma \in (0, \epsilon]$ arbitrarily small, so that
\begin{equation*}
|c_1 \gamma + \cdots + c_{H-j^*+1} \gamma^{H-j^*+1}| > |c_{H-j^*+2} \gamma^{H-j^*+2}|,
\end{equation*}
and thus \eqref{eq:fullprodentrydx} can never be zero. From this, with sufficiently small $\gamma$,
the matrix product \eqref{eq:fullproddx} is nonzero.

Now define the perturbation on $\hat W_{j^*}$:
\begin{equation*}
\Delta_{j^*} \defeq \frac{\epsilon (V_{H+1:j^*+1})^T \gradlzero(\hat W_{H+1:1}) V_1^T (\hat W_{j^*-1:2})^T}{\nfro{(V_{H+1:j^*+1})^T \gradlzero(\hat W_{H+1:1}) V_1^T (\hat W_{j^*-1:2})^T}},
\end{equation*}
so that $\hat W_{j^*} + \Delta_{j^*} \in B_\epsilon (\hat W_{j^*})$.
Then, observe that
\begin{align*}
&\dotprod{V_{H+1:j^*+1} \Delta_{j^*} \hat W_{j^*-1:2} V_1}{\gradlzero(\hat W_{H+1:1})}
= \tr((V_{H+1:j^*+1} \Delta_{j^*} \hat W_{j^*-1:2} V_1)^T\gradlzero(\hat W_{H+1:1}))\\
=& \tr(\Delta_{j^*}^T (V_{H+1:j^*+1})^T \gradlzero(\hat W_{H+1:1}) V_1^T (\hat W_{j^*-1:2})^T )
= \dotprod{\Delta_{j^*}}{(V_{H+1:j^*+1})^T \gradlzero(\hat W_{H+1:1}) V_1^T (\hat W_{j^*-1:2})^T } > 0.
\end{align*}
This means that $V_{H+1:j^*+1} \Delta_{j^*} \hat W_{j^*-1:2} V_1$ and $-V_{H+1:j^*+1} \Delta_{j^*} \hat W_{j^*-1:2} V_1$ are 
ascent and descent directions, respectively, of $\ell_0(R)$ at $\hat W_{H+1:1}$.
After that, the proof is very similar to the previous case. We can define
\begin{align*}
(P_j)_{j=1}^{H+1} &= (V_1, \hat W_2, \dots, \hat W_{j^*-1}, \hat W_{j^*} + \eta \Delta_{j^*}, V_{j^*+1}, \dots, V_{H+1}) \in \prod\nolimits_{j=1}^{H+1} \mc B_\epsilon(\hat W_j)\\
(Q_j)_{j=1}^{H+1} &= (V_1, \hat W_2, \dots, \hat W_{j^*-1}, \hat W_{j^*} - \eta \Delta_{j^*}, V_{j^*+1}, \dots, V_{H+1}) \in \prod\nolimits_{j=1}^{H+1} \mc B_\epsilon(\hat W_j),
\end{align*}
where $0<\eta\leq 1$ is small enough, to show that by differentiability of $\ell_0(R)$, we get $\ell((P_j)_{j=1}^{H+1}) > \ell((\hat W_j)_{j=1}^{H+1}) > \ell((Q_j)_{j=1}^{H+1})$.

\subsection{Proof of Part 1, if $d_y \geq d_x$}
\label{sec:thm1p1y}
First, note that $\gradlzero(\hat W_{H+1:1}) (\hat W_{H:1})^T = 0$,
because it is $\frac{\partial \ell}{\partial W_{H+1}}$ evaluated at a critical point $(\hat W_j)_{j=1}^{H+1}$.
This equation implies $\rowsp(\gradlzero(\hat W_{H+1:1}))^\perp \supseteq \rowsp(\hat W_{H:1})$.
Since $\gradlzero(\hat W_{H+1:1}) \neq 0$, 
$\rowsp(\gradlzero(\hat W_{H+1:1}))^\perp$ cannot be the whole $\reals^{d_x}$,
and it is a strict subspace of $\reals^{d_x}$.
Observe that $\hat W_{H:1} \in \reals^{d_H \times d_x}$ and $d_x \leq d_H$.
Since $\rowsp(\hat W_{H:1}) \subseteq \rowsp(\gradlzero(\hat W_{H+1:1}))^\perp \subsetneq \reals^{d_x}$,
this means $\rank(\hat W_{H:1}) < d_x$, hence $\lnulsp(\hat W_{H:1})$ is not a trivial subspace.

Now observe that
\begin{equation*}
\lnulsp(\hat W_{H:1}) \supseteq \lnulsp(\hat W_{H:2}) \supseteq \dots \supseteq \lnulsp(\hat W_H),
\end{equation*}
where some of left-null spaces in the right could be zero-dimensional.
The procedure of choosing the perturbation depends on these left-null spaces.
We can split the proof into two cases: $\lnulsp(\hat W_{H:1}) \neq \lnulsp(\hat W_{H:2})$ and $\lnulsp(\hat W_{H:1}) = \lnulsp(\hat W_{H :2})$.
Because the former case is simpler, we prove the former case first.

Before we dive in, again take SVD of $\gradlzero(\hat W_{H+1:1}) = U_l \Sigma U_r^T$.
Since $\gradlzero(\hat W_{H+1:1}) \neq 0$, there is at least one positive singular value, so $\sigmax(\gradlzero(\hat W_{H+1:1})) > 0$. 
Recall the notation that $\matcol{U_l}{1}$ and $\matcol{U_r}{1}$ are first column vectors of $U_l$ and $U_r$, respectively.

\paragraph{Case 1: $\lnulsp(\hat W_{H:1}) \neq \lnulsp(\hat W_{H:2})$.}
In this case, $\lnulsp(\hat W_{H:1}) \supsetneq \lnulsp(\hat W_{H:2})$.
We will perturb $\hat W_1$ and $\hat W_{H+1}$ to obtain the desired tuples $(P_j)_{j=1}^{H+1}$ and $(Q_j)_{j=1}^{H+1}$.

Now choose two unit vectors $v_H$ and $v_{H+1}$, as the following:
\begin{equation*}
v_H \in \lnulsp(\hat W_{H:1}) \cap \lnulsp(\hat W_{H:2})^\perp,~
v_{H+1} = \matcol{U_l}{1},
\end{equation*}
and then define $\Delta_{H+1} \defeq \epsilon v_{H+1} v_H^T \in \reals^{d_y \times d_H}$, and $V_{H+1} \defeq \hat W_{H+1}+\Delta_{H+1}$.
We can check $V_{H+1} \in \mc B_\epsilon(\hat W_{H+1})$ from the fact that $v_H$ and $v_{H+1}$ are unit vectors.
Since $v_H \in \lnulsp(\hat W_{H:1})$, observe that
\begin{equation*}
V_{H+1} \hat W_{H:1}= \hat W_{H+1:1} + \epsilon v_{H+1} v_H^T \hat W_{H:1} = \hat W_{H+1:1}.
\end{equation*}
With this definition of $V_{H+1}$, we can also see that
\begin{align*}
(\hat W_{H:2})^T V_{H+1}^T \gradlzero(\hat W_{H+1:1})
= (\hat W_{H+1:2})^T \gradlzero(\hat W_{H+1:1}) + \epsilon (\hat W_{H:2})^T v_H v_{H+1}^T \gradlzero(\hat W_{H+1:1}).
\end{align*}
Note that $(\hat W_{H+1:2})^T \gradlzero(\hat W_{H+1:1})$ is exactly equal to $\frac{\partial \ell}{\partial W_{1}}$ evaluated at $(\hat W_j)_{j=1}^{H+1}$, hence is zero by assumption that $(\hat W_j)_{j=1}^{H+1}$ is a critical point.
Since $v_H \in \lnulsp(\hat W_{H:2})^\perp = \colsp(\hat W_{H:2})$, $(\hat W_{H:2})^T v_H$ is a nonzero column vector,
and since $v_{H+1}  = \matcol{U_l}{1}$, $v_{H+1}^T \gradlzero(\hat W_{H+1:1}) = \sigmax(\gradlzero(\hat W_{H+1:1})) (\matcol{U_r}{1})^T$, which is a nonzero row vector.
From this observation, we can see that $(\hat W_{H:2})^T v_H v_{H+1}^T \gradlzero(\hat W_{H+1:1})$ is nonzero, and so is $(\hat W_{H:2})^T V_{H+1}^T \gradlzero(\hat W_{H+1:1})$.

Now define the perturbation on $\hat W_1$:
\begin{equation*}
\Delta_1 \defeq \frac{\epsilon (\hat W_{H:2})^T V_{H+1}^T \gradlzero(\hat W_{H+1:1})}{\nfro{(\hat W_{H:2})^T V_{H+1}^T \gradlzero(\hat W_{H+1:1})}},
\end{equation*}
so that $\hat W_1 + \Delta_1 \in B_\epsilon (\hat W_1)$.
Then, observe that
\begin{align*}
&\dotprod{V_{H+1} \hat W_{H:2} \Delta_1}{\gradlzero(\hat W_{H+1:1})}
= \tr((V_{H+1} \hat W_{H:2} \Delta_1)^T \gradlzero (\hat W_{H+1:1}))\\
=& \tr(\Delta_1^T (\hat W_{H:2})^T V_{H+1}^T \gradlzero(\hat W_{H+1:1}))
= \dotprod{\Delta_1}{(\hat W_{H:2})^T V_{H+1}^T \gradlzero(\hat W_{H+1:1})} > 0,
\end{align*}
by definition of $\Delta_1$. 
This means that $V_{H+1} \hat W_{H:2} \Delta_1$ and $-V_{H+1} \hat W_{H:2} \Delta_1$ are 
ascent and descent directions, respectively, of $\ell_0(R)$ at $\hat W_{H+1:1}$.
Since $\ell_0$ is a differentiable function, there exists small enough $0< \eta \leq 1$ that satisfies
\begin{align*}
&\ell_0(\hat W_{H+1:1}+\eta V_{H+1} \hat W_{H:2} \Delta_1) > \ell_0(\hat W_{H+1:1}),\\
&\ell_0(\hat W_{H+1:1}-\eta V_{H+1} \hat W_{H:2} \Delta_1) < \ell_0(\hat W_{H+1:1}).
\end{align*}

Now define
\begin{align*}
&(P_j)_{j=1}^{H+1} = (\hat W_1 + \eta \Delta_1, \hat W_2, \dots, \hat W_H, V_{H+1}),\\
&(Q_j)_{j=1}^{H+1} = (\hat W_1 - \eta \Delta_1, \hat W_2, \dots, \hat W_H, V_{H+1}).
\end{align*}
We can check $(P_j)_{j=1}^{H+1} ,(Q_j)_{j=1}^{H+1} \in \prod_{j=1}^{H+1} \mc B_\epsilon(\hat W_j)$, and
\begin{align*}
&P_{H+1:1} = \hat W_{H+1:1} + \eta V_{H+1} \hat W_{H:2} \Delta_1.\\
&Q_{H+1:1} = \hat W_{H+1:1} - \eta V_{H+1} \hat W_{H:2} \Delta_1.
\end{align*}
By definition of $\ell( (W_j)_{j=1}^{H+1} )$, this shows that $\ell( (P_j)_{j=1}^{H+1} ) > \ell( (\hat W_j)_{j=1}^{H+1} ) > \ell( (Q_j)_{j=1}^{H+1} )$. 
This construction holds for any $\epsilon > 0$, proving that $(\hat W_j)_{j=1}^{H+1} $ can be neither a local maximum nor a local minimum.

\paragraph{Case 2: $\lnulsp(\hat W_{H:1}) = \lnulsp(\hat W_{H:2})$.}
By and large, the proof of this case goes the same, except that we need a little more care on what perturbations to make.
Define 
\begin{equation*}
j^* = \min \{ j \in [2,H] \mid \lnulsp(\hat W_{H:j}) \supsetneq \lnulsp(\hat W_{H:j+1}) \}.
\end{equation*}
When you start from $j = 2$ up to $j = H$ and compare $\lnulsp(\hat W_{H:j})$ and $\lnulsp(\hat W_{H:j+1})$,
the first iterate $j$ at which you have $\lnulsp(\hat W_{H:j}) \neq \lnulsp(\hat W_{H:j+1})$ is $j^*$.
If all left-null spaces of matrices from $\hat W_{H:2}$ to $\hat W_H$ are equal, $j^* = H$ which follows from the notational convention that $\lnulsp(\hat W_{H:H+1}) = \lnulsp(I_{d_H}) = \{0 \}$.
According to $j^*$,
in Case 2 we perturb $\hat W_{H+1}$, $\hat W_1$, $\hat W_2$, $\dots$, $\hat W_{j^*}$ to get $(P_j)_{j=1}^{H+1}$ and $(Q_j)_{j=1}^{H+1}$.

By definition of $j^*$, note that
\begin{align*}
&\lnulsp(\hat W_{H:1}) = \lnulsp(\hat W_{H:2}) = \cdots = \lnulsp(\hat W_{H:j^*}) \\
\iff &\colsp(\hat W_{H:1}) = \colsp(\hat W_{H:2}) = \cdots = \colsp(\hat W_{H:j^*}) \\
\iff &\rank(\hat W_{H:1}) = \rank(\hat W_{H:2}) = \cdots = \rank(\hat W_{H:j^*})
\end{align*}
which means the products are all rank-deficient (recall $\rank(\hat W_{H:1}) < d_x$ and all $d_j \geq d_x$),
and hence they all have nontrivial null spaces $\nulsp(\hat W_{H:2}), \dots, \nulsp(\hat W_{H:j^*})$ as well.

We choose some unit vectors as the following:
\begin{align*}
&v_0 = \matcol{U_r}{1},\\
&v_1 \in \nulsp(\hat W_{H:2}),\\
&\cdots\\
&v_{j^*-1} \in \nulsp(\hat W_{H:j^*})\\
&v_H \in \lnulsp(\hat W_{H:j^*}) \cap \lnulsp(\hat W_{H:j^*+1})^\perp,\\
&v_{H+1} = \matcol{U_l}{1}.
\end{align*}
Then, for a $\gamma \in (0,\epsilon]$ whose value will be specified later, define
\begin{align*}
&\Delta_1 \defeq \gamma v_1 v_0^T \in \reals^{d_1 \times d_x},\\
&\cdots\\
&\Delta_{j^*-1} \defeq \gamma v_{j^*-1} v_{j^*-2}^T \in \reals^{d_{j^*-1} \times d_{j^*-2}},\\
&\Delta_{H+1} \defeq \gamma v_{H+1} v_{H}^T \in \reals^{d_y \times d_H},
\end{align*}
and $V_j \defeq \hat W_j + \Delta_j$ accordingly for $j = 1, \dots, j^*-1, H+1$.

By definition of $\Delta_j$'s, note that
\begin{align*}
& V_{H+1} \hat W_{H:j^*} V_{j^*-1:1}\\
=& V_{H+1} \hat W_{H:j^*-1} V_{j^*-2:1} + V_{H+1} \hat W_{H:j^*} \Delta_{j^*-1} V_{j^*-2:1}
= V_{H+1} \hat W_{H:j^*-1} V_{j^*-2:1} \numberthis \label{eq:can1dy}\\
=& V_{H+1} \hat W_{H:j^*-2} V_{j^*-3:1} + V_{H+1} \hat W_{H:j^*-1} \Delta_{j^*-2} V_{j^*-3:1}
= V_{H+1} \hat W_{H:j^*-2} V_{j^*-3:1} \numberthis \label{eq:can2dy}\\
=& \cdots\\
=& V_{H+1} \hat W_{H:1} + V_{H+1} \hat W_{H:2} \Delta_1
= V_{H+1} \hat W_{H:1} \numberthis \label{eq:can3dy}\\
=& \hat W_{H+1:1} + \Delta_{H+1} \hat W_{H:1}
= \hat W_{H+1:1}, \numberthis \label{eq:can4dy}
\end{align*}
where in \eqref{eq:can1dy} we used the definition that $v_{j^*-1} \in \nulsp(\hat W_{H:j^*})$,
in \eqref{eq:can2dy} that $v_{j^*-2} \in \nulsp(\hat W_{H:j^*-1})$,
in \eqref{eq:can3dy} that $v_{1} \in \nulsp(\hat W_{H:2})$,
and in \eqref{eq:can4dy} that $v_H \in \lnulsp(\hat W_{H:j^*})$.

Now consider the following matrix product:
\begin{align*}
&(\hat W_{H:j^*+1})^T V_{H+1}^T \gradlzero(\hat W_{H+1:1}) (V_{j^*-1:1})^T\\
&= (\hat W_{H:j^*+1})^T (\hat W_{H+1} + \Delta_{H+1})^T \gradlzero(\hat W_{H+1:1}) (\hat W_1 + \Delta_1)^T \cdots (\hat W_{j^*-1} + \Delta_{j^*-1})^T.\numberthis \label{eq:fullproddy}
\end{align*}
We are going to show that for small enough $\gamma \in (0,\epsilon]$, this product is nonzero.
If we expand \eqref{eq:fullproddy}, there are many terms in the summation. 
However, note that the expansion can be arranged in the following form:
\begin{align*}
&(\hat W_{H:j^*+1})^T (\hat W_{H+1} + \Delta_{H+1})^T \gradlzero(\hat W_{H+1:1}) (\hat W_1 + \Delta_1)^T \cdots (\hat W_{j^*-1} + \Delta_{j^*-1})^T\\
=& C_0 + C_1 \gamma +  C_2 \gamma^2 + \cdots +  C_{j^*} \gamma^{j^*} \numberthis \label{eq:arrproddy}
\end{align*}
where $C_j \in \reals^{d_{j^*} \times d_{j^*-1}}$ for all $j$ and $C_j$ doesn't depend on $\gamma$, and specifically
\begin{align*}
&C_0 = \hat W_{j^*+1}^T \cdots \hat W_{H+1}^T \gradlzero(\hat W_{H+1:1}) \hat W_1^T \hat W_2^T \cdots \hat W_{j^*-1}^T,\\
&C_{j^*} = \frac{1}{\gamma^{j^*} } \hat W_{j^*+1}^T \cdots \hat W_{H}^T \Delta_{H+1}^T \gradlzero(\hat W_{H+1:1}) \Delta_1^T \cdots \Delta_{j^*-1}^T.
\end{align*}
Because the $C_0$ is exactly equal to $\frac{\partial \ell}{\partial W_{j^*}}$ evaluated at a critical point $( (\hat W_j)_{j=1}^{H+1} )$, $C_0 = 0$.
Also, due to definitions of $\Delta_j$'s,
\begin{align*}
C_{j^*}
=&(\hat W_{H:j^*+1})^T (v_{H} v_{H+1}^T) \gradlzero(\hat W_{H+1:1}) (v_0v_1^T) (v_1 v_2^T) \cdots (v_{j^*-2} v_{j^*-1}^T)\\
=&(\hat W_{H:j^*+1})^T v_{H} v_{H+1}^T \gradlzero(\hat W_{H+1:1}) v_{0} v_{j^*-1}^T.
\end{align*}
First, since $v_H \in \colsp(\hat W_{H:j^*+1})$, $(\hat W_{H:j^*+1})^T v_{H}$ is a nonzero column vector.
Also, since $v_{H+1} =\matcol{U_l}{1}$ and $v_0 = \matcol{U_r}{1}$,
the product $v_{H+1}^T \gradlzero(\hat W_{H+1:1}) v_{0} = \sigmax(\gradlzero(\hat W_{H+1:1})) > 0$.
Finally, $v_{j^*-1}^T$ is a nonzero row vector. Thus, the product $C_{j^*}$ will be nonzero.

Since $C_{j^*} \neq 0$, we can pick any index $(\alpha, \beta)$ such that the $(\alpha,\beta)$-th entry of $C_{j^*}$, denoted as $[C_{j^*}]_{\alpha, \beta}$, is nonzero.
Then, the $(\alpha,\beta)$-th entry of \eqref{eq:arrproddy} can be written as
\begin{equation}
\label{eq:fullprodentrydy}
c_1 \gamma + c_2 \gamma^2 + \cdots +  c_{j^*} \gamma^{j^*},
\end{equation}
where $c_j = [C_j]_{\alpha,\beta}$.
To show that the matrix product \eqref{eq:fullproddy} is nonzero, 
it suffices to show that its $(\alpha,\beta)$-th entry \eqref{eq:fullprodentrydy} is nonzero.
If $c_1 = \cdots = c_{j^*-1} = 0$, then with the choice of $\gamma = \epsilon$, \eqref{eq:fullprodentrydy} is trivially nonzero.
If some of $c_1, \dots, c_{j^*-1}$ are nonzero, we can scale $\gamma \in (0, \epsilon]$ arbitrarily small, so that
\begin{equation*}
|c_1 \gamma + \cdots + c_{j^*-1} \gamma^{j^*-1}| > |c_{j^*} \gamma^{j^*}|,
\end{equation*}
and thus \eqref{eq:fullprodentrydy} can never be zero. From this, with sufficiently small $\gamma$,
the matrix product \eqref{eq:fullproddy} is nonzero.

Now define the perturbation on $\hat W_{j^*}$:
\begin{equation*}
\Delta_{j^*} \defeq \frac{\epsilon (\hat W_{H:j^*+1})^T V_{H+1}^T \gradlzero(\hat W_{H+1:1}) (V_{j^*-1:1})^T}{\nfro{(\hat W_{H:j^*+1})^T V_{H+1}^T \gradlzero(\hat W_{H+1:1}) (V_{j^*-1:1})^T}},
\end{equation*}
so that $\hat W_{j^*} + \Delta_{j^*} \in B_\epsilon (\hat W_{j^*})$.
Then, observe that
\begin{align*}
&\dotprod{V_{H+1} \hat W_{H:j^*+1} \Delta_{j^*} V_{j^*-1:1}}{\gradlzero(\hat W_{H+1:1})}
= \tr((V_{H+1} \hat W_{H:j^*+1} \Delta_{j^*} V_{j^*-1:1})^T\gradlzero(\hat W_{H+1:1}))\\
=& \tr(\Delta_{j^*}^T (\hat W_{H:j^*+1})^T V_{H+1}^T \gradlzero(\hat W_{H+1:1}) (V_{j^*-1:1})^T )
= \dotprod{\Delta_{j^*}}{(\hat W_{H:j^*+1})^T V_{H+1}^T \gradlzero(\hat W_{H+1:1}) (V_{j^*-1:1})^T } > 0.
\end{align*}
This means that $V_{H+1} \hat W_{H:j^*+1} \Delta_{j^*} V_{j^*-1:1}$ and $-V_{H+1} \hat W_{H:j^*+1} \Delta_{j^*} V_{j^*-1:1}$ are 
ascent and descent directions, respectively, of $\ell_0(R)$ at $\hat W_{H+1:1}$.
After that, the proof is very similar to the previous case. We can define
\begin{align*}
(P_j)_{j=1}^{H+1} &= (V_1, \dots, V_{j^*-1}, \hat W_{j^*} + \eta \Delta_{j^*}, \hat W_{j^*+1}, \dots, \hat W_H, V_{H+1}) \in \prod\nolimits_{j=1}^{H+1} \mc B_\epsilon(\hat W_j)\\
(Q_j)_{j=1}^{H+1} &= (V_1, \dots, V_{j^*-1}, \hat W_{j^*} - \eta \Delta_{j^*}, \hat W_{j^*+1}, \dots, \hat W_H, V_{H+1}) \in \prod\nolimits_{j=1}^{H+1} \mc B_\epsilon(\hat W_j),
\end{align*}
where $0<\eta\leq 1$ is small enough, to show that by differentiability of $\ell_0(R)$, we get $\ell((P_j)_{j=1}^{H+1}) > \ell((\hat W_j)_{j=1}^{H+1}) > \ell((Q_j)_{j=1}^{H+1})$.

\subsection{Proof of Part 2(a)}
In this part, we show that if $\gradlzero(\hat W_{H+1:1}) = 0$ and 
$\hat W_{H+1:1}$ is a local min of $\ell_0$, then $(\hat W_j)_{j=1}^{H+1}$ is a local min of $\ell$.
The proof for local max case can be done in a very similar way.

Since $\hat W_{H+1:1}$ is a local minimum of $\ell_0$, there exists $\epsilon > 0$ such that,
for any $R$ satisfying $\nfro{R-\hat W_{H+1:1}} \leq \epsilon$, we have $\ell_0(R) \geq \ell_0(\hat W_{H+1:1})$.
We prove that $(\hat W_j)_{j=1}^{H+1}$ is a local minimum of $\ell$ by showing that 
there exists a neighborhood of $(\hat W_j)_{j=1}^{H+1}$ in which
any point $(V_j)_{j=1}^{H+1}$ satisfies $\ell( (V_j)_{j=1}^{H+1} ) \geq \ell( (\hat W_j)_{j=1}^{H+1})$.

Now define
\begin{equation*}
0<
\epsilon_j
\leq
\frac{\epsilon}{2(H+1) \max\big \{\nfro{\hat W_{H+1:j+1}} \nfro{\hat W_{j-1:1}},1 \big \}}.
\end{equation*}
Observe that $\frac{a}{\max \{a,1\}} \leq 1$ for $a \geq 0$.
Then, for all $j \in [H+1]$, pick any $V_j$ such that $\nfro{V_j - \hat W_j} \leq \epsilon_j$.
Denote $\Delta_j = V_j-\hat W_j$ for all $j$.
Now, by triangle inequality and submultiplicativity of Frobenius norm,
\begin{align*}
\nfro{(\hat W_{H+1} + \Delta_{H+1}) \cdots (\hat W_1 + \Delta_1) - \hat W_{H+1:1}}
\leq &\sum_{j=1}^{H+1} \nfro{ \hat W_{H+1:j+1} \Delta_j \hat W_{j-1:1}} + O(\max_j \nfro{\Delta_j}^2)\\
\leq &\sum_{j=1}^{H+1} \nfro{ \hat W_{H+1:j+1} } \nfro { \Delta_j } \nfro{ \hat W_{j-1:1} }  + O(\max_j \epsilon_j^2)\\
\leq &\frac{\epsilon}{2} + O(\max_j \epsilon_j^2) \leq \epsilon,
\end{align*}
for small enough $\epsilon_j$'s.

Given this, for any $(V_j)_{j=1}^{H+1}$ in the neighborhood of $(\hat W_j)_{j=1}^{H+1}$ defined by $\epsilon_j$'s, $\nfro{V_{H+1:1} - \hat W_{H+1:1}} \leq \epsilon$, so $\ell_0(V_{H+1:1}) \geq \ell_0(\hat W_{H+1:1})$, meaning $\ell((V_j)_{j=1}^{H+1}) \geq \ell((\hat W_j)_{j=1}^{H+1})$.
Thus, $(\hat W_j)_{j=1}^{H+1}$ is a local minimum of $\ell$.

\subsection{Proof of Part 2(b)}
For this part, we want to show that if $\gradlzero(\hat W_{H+1:1}) = 0$, then
$(\hat W_j)_{j=1}^{H+1}$ is a global min (or max) of $\ell$ if and only if
$\hat W_{H+1:1}$ is a global min (or max) of $\ell_0$.
We prove this by showing the following:
if $d_j \geq \min\{ d_x,d_y \}$ for all $j \in [H]$, for any $R \in \reals^{d_y \times d_x}$ there exists a decomposition $(W_j)_{j=1}^{H+1}$ such that $R = W_{H+1:1}$.

We divide the proof into two cases: $d_x \geq d_y$ and $d_y \geq d_x$.
\paragraph{Case 1: $d_x \geq d_y$.}
If $d_x \geq d_y$, by assumption $d_j \geq d_y$ for all $j \in [H]$.
Recall that $W_1 \in \reals^{d_1 \times d_x}$. 
Given $R \in \reals^{d_y \times d_x}$, we can fill the first $d_y$ rows of $W_1$ with $R$ and let any other entries be zero.
For all the other matrices $W_2, \dots, W_{H+1}$, we put ones to the diagonal entries while putting zeros to all the other entries.
We can check that, by this construction, $R = W_{H+1:1}$ for this given $R$.
\paragraph{Case 2: $d_y \geq d_x$.}
If $d_y \geq d_x$, we have $d_j \geq d_x$ for all $j \in [H]$.
Recall $W_{H+1} \in \reals^{d_y \times d_H}$.
Given $R \in \reals^{d_y \times d_x}$, we can fill the first $d_x$ columns of $W_{H+1}$ with $R$ and let any other entries be zero.
For all the other matrices $W_1, \dots, W_H$, we put ones to the diagonal entries while putting zeros to all the other entries.
By this construction, $R = W_{H+1:1}$ for given $R$.

Once this fact is given,
by $\ell((W_j)_{j=1}^{H+1}) = \ell_0(W_{H+1:1})$,
\begin{equation*}
\inf_R \ell_0(R) = \!\! \inf_{W_{H+1:1}} \!\! \ell_0(W_{H+1:1}) =\!\! \inf_{(W_j)_{j=1}^{H+1}} \!\! \ell((W_j)_{j=1}^{H+1}).
\end{equation*}
Thus, any $(\hat W_j)_{j=1}^{H+1}$ attaining a global min of $\ell$ must have $\inf_R \ell_0(R) = \ell_0(\hat W_{H+1:1})$, so $\hat W_{H+1:1}$ is also a global min of $\ell_0(R)$.
Conversely, if $\ell_0(\hat W_{H+1:1}) = \inf \ell_0 (R)$, then
$\ell((\hat W_j)_{j=1}^{H+1}) = \inf \ell((W_j)_{j=1}^{H+1})$,
so $(\hat W_j)_{j=1}^{H+1}$ is a global min of $\ell$. We can prove the global max case similarly.

\subsection{Proof of Part 3 and 3(a)}
Suppose there exists $j^* \in [H+1]$ such that $\hat W_{H+1:j^*+1}$ has full row rank and $\hat W_{j^*-1:1}$ has full column rank.
For simplicity, define $A \defeq \hat W_{H+1:j^*+1}$ and $B \defeq \hat W_{j^*-1:1}$. 
Since $A^T$ has linearly independent columns, $B^T$ has linearly independent rows, and 
$\partial \ell/\partial W_{j^*} = 0$ at $(\hat W_j)_{j=1}^{H+1}$,
$A^T \gradlzero(\hat W_{H+1:1}) B^T = 0 \implies \gradlzero(\hat W_{H+1:1}) = 0$,
hence Parts 2(a) and 2(b) are implied.

For Part 3(a), we want to prove that if $(\hat W_j)_{j=1}^{H+1}$ is a local min of $\ell$, then $\hat W_{H+1:1}$ is a local min of $\ell_0$.
By definition of local min, $\exists \epsilon > 0$ such that,
for any $(V_j)_{j=1}^{H+1}$ for which $\nfro{V_j - \hat W_j} \leq \epsilon$ (for $j \in [H+1]$), we have $\ell( (V_j)_{j=1}^{H+1} ) \geq \ell( (\hat W_j)_{j=1}^{H+1} )$.
To show that $\hat W_{H+1:1}$ is a local min of $\ell_0$, 
we have to show there exists a neighborhood of $\hat W_{H+1:1}$ such that,
any point $R$ in that neighborhood satisfies $\ell_0(R) \geq \ell_0(\hat W_{H+1:1})$.
To prove this, we state the following lemma:
\begin{lemma}
	\label{lem:thm1p3atechlem}
	Suppose $A \defeq \hat W_{H+1:j^*+1}$ has full row rank and $B \defeq \hat W_{j^*-1:1}$ has full column rank. 
	Then, any $R$ satisfying $\nfro{R-\hat W_{H+1:1}} \leq \sigmin(A) \sigmin(B) \epsilon$ can be decomposed into $R = V_{H+1:1}$, where
	\begin{align*}
	V_{j^*} = \hat W_{j^*} + A^T (AA^T)^{-1} (R-\hat W_{H+1:1}) (B^T B)^{-1} B^T,
	\end{align*}
	and $V_j = \hat W_j$ for $j \neq j^*$. Also, $\nfro{V_j - \hat W_j} \leq \epsilon$ for all $j$.
\end{lemma}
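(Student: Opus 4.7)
The lemma makes two assertions: that the prescribed tuple $(V_j)_{j=1}^{H+1}$ satisfies $R = V_{H+1:1}$ exactly, and that $\nfro{V_j - \hat W_j} \leq \epsilon$ for every $j \in [H+1]$. Both follow once one recognizes that, because $A$ has full row rank and $B$ has full column rank, the matrices $A^T(AA^T)^{-1}$ and $(B^TB)^{-1}B^T$ are the Moore--Penrose pseudoinverses of $A$ and $B$, satisfying the identities $A \cdot A^T(AA^T)^{-1} = I_{d_y}$ and $(B^TB)^{-1}B^T \cdot B = I_{d_x}$.

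First I would verify the decomposition. Since $V_j = \hat W_j$ whenever $j \neq j^*$, the multilinear product telescopes to $V_{H+1:1} = A\, V_{j^*}\, B$. Plugging in the definition of $V_{j^*}$ and applying the two inverse identities above, the perturbation term collapses to $R - \hat W_{H+1:1}$; combined with $A \hat W_{j^*} B = \hat W_{H+1:1}$, this yields $V_{H+1:1} = R$ as desired.

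Next I would establish the perturbation bound. For $j \neq j^*$ we have $V_j = \hat W_j$, so there is nothing to show. For $j = j^*$, I would apply the standard mixed-norm inequality $\nfro{XYZ} \leq \|X\|_2 \nfro{Y} \|Z\|_2$, where $\|\cdot\|_2$ denotes the spectral norm. An SVD computation shows $\|A^T(AA^T)^{-1}\|_2 = 1/\sigmin(A)$ and $\|(B^TB)^{-1}B^T\|_2 = 1/\sigmin(B)$, so
\[
\nfro{V_{j^*} - \hat W_{j^*}} \;\leq\; \frac{\nfro{R - \hat W_{H+1:1}}}{\sigmin(A)\,\sigmin(B)} \;\leq\; \epsilon,
\]
where the final step uses the hypothesis $\nfro{R - \hat W_{H+1:1}} \leq \sigmin(A)\sigmin(B)\epsilon$.

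There is no real obstacle here; the argument is essentially a verification once the pseudoinverse structure is identified. The only subtlety worth spelling out is the spectral-norm computation for the two pseudoinverses, which is immediate from the singular value decomposition of $A$ and $B$.
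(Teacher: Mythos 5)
Your proof is correct and follows essentially the same route as the paper's: verify $V_{H+1:1}=A V_{j^*} B=R$ via the one-sided inverse identities for $A^T(AA^T)^{-1}$ and $(B^TB)^{-1}B^T$, then bound $\nfro{V_{j^*}-\hat W_{j^*}}$ using the spectral norms $1/\sigmin(A)$ and $1/\sigmin(B)$ of those pseudoinverses. No gaps.
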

\begin{proof}
Since $A \defeq \hat W_{H+1:j^*+1}$ has full row rank and $B \defeq \hat W_{j^*-1:1}$ has full column rank,
$\sigmin(A) > 0$, $\sigmin(B) > 0$, and $AA^T$ and $B^TB$ are invertible.
Consider any $R$ satisfying $\nfro{R-\hat W_{H+1:1}} \leq \sigmin(A) \sigmin(B) \epsilon$.
Given the definitions of $V_j$'s in the statement of the lemma,
we can check the identity that $R = V_{H+1:1}$ by
\begin{align*}
V_{H+1:1} = A V_j B = A \hat W_j B + (R-\hat W_{H+1:1})
= \hat W_{H+1:1} + (R-\hat W_{H+1:1}) = R.
\end{align*}

Now It is left to show that $\nfro{V_{j^*}-\hat W_{j^*}} \leq \epsilon$, so that $(V_j)_{j=1}^{H+1}$ indeed satisfies $\nfro{V_j - \hat W_j} \leq \epsilon$ for all $j$.
We can show that
\begin{align*}
\sigmax(A^T ( A A^T )^{-1}) = 1/\sigmin(A),~
\sigmax(( B^T B )^{-1} B^T ) = 1/\sigmin(B).
\end{align*}
Therefore,
\begin{align*}
\nfro{V_{j^*} - \hat W_{j^*}} 
=& \nfro{A^T (AA^T)^{-1} (R-\hat W_{H+1:1}) (B^T B)^{-1} B^T} \\
\leq& \sigmax(A^T (AA^T)^{-1}) \sigmax((B^T B)^{-1} B^T) \nfro{R-\hat W_{H+1:1}}\\
\leq& \frac{1}{\sigmin(A) \sigmin(B)} \cdot \sigmin(A) \sigmin(B) \epsilon = \epsilon.
\end{align*}
\end{proof}
The lemma shows that for any $R = V_{H+1:1}$ satisfying $\nfro{R-\hat W_{H+1:1}} \leq \sigmin(A) \sigmin(B) \epsilon$, we have $\ell_0(R) = \ell_0(V_{H+1:1})= \ell( (V_j)_{j=1}^{H+1} ) \geq \ell( (\hat W_j)_{j=1}^{H+1} ) = \ell_0(\hat W_{H+1:1})$.
We can prove the local maximum part by a similar argument.

\end{document}